\newcommand{\real}{\mathbb{R}}
\newcommand{\dual}{\vec{v}}
\newcommand{\dualmat}{\vec{V}}
\newcommand{\dualscalar}{v}
\newcommand{\relu}[1]{\left( #1 \right)_+}
\newcommand{\sign}{\text{sign}}
\newcommand{\diag}{\vec{D}}
\newtheorem{theo}{Theorem}[section]
\newtheorem{lem}{Lemma}[section]
\newtheorem{prop}{Proposition}[section]
\newtheorem{cor}{Corollary}[section]
\newtheorem{remark}{Remark}[section]
\theoremstyle{definition} 
\newtheorem{nota}{Notation}[section]
\newtheorem{de}{Definition}[section]
\newtheorem{exa}{Example}[section]
\newtheorem{as}{Assumption}[section]
\newtheorem{alg}{Algorithm}[section]
\newcommand{\btheo}{\begin{theo}}
\newcommand{\bde}{\begin{de}}
\newcommand{\ble}{\begin{lem}}
\newcommand{\bpr}{\begin{prop}}
\newcommand{\bno}{\begin{nota}}
\newcommand{\bex}{\begin{exa}}
\newcommand{\bcor}{\begin{cor}}
\newcommand{\spro}{\begin{proof}}
\newcommand{\bas}{\begin{as}}
\newcommand{\balg}{\begin{alg}}
\newcommand{\bremark}{\begin{remark}}
\newcommand{\etheo}{\end{theo}}
\newcommand{\ede}{\end{de}}
\newcommand{\ele}{\end{lem}}
\newcommand{\epr}{\end{prop}}
\newcommand{\eno}{\end{nota}}
\newcommand{\eex}{\end{exa}}
\newcommand{\ecor}{\end{cor}}
\newcommand{\fpro}{\end{proof}}
\newcommand{\eas}{\end{as}}
\newcommand{\ealg}{\end{alg}}
\newcommand{\eremark}{\end{remark}}
\theoremstyle{plain}
\newtheorem{theos}{Theorem}
\newtheorem{props}{Proposition}
\newtheorem{lems}{Lemma}
\newtheorem{cors}{Corollary}
\newtheorem{rems}{Remark}
\theoremstyle{definition}
\newtheorem{exas}{Example}
\newtheorem{algs}{Algorithm}
\newtheorem{asss}{Assumption}
\newtheorem{defns}{Definition}
\newcommand{\btheos}{\begin{theos}}
\newcommand{\etheos}{\end{theos}}
\newcommand{\brems}{\begin{rems}}
\newcommand{\erems}{\end{rems}}
\newcommand{\bprops}{\begin{props}}
\newcommand{\eprops}{\end{props}}
\newcommand{\bdes}{\begin{defns}}
\newcommand{\edes}{\end{defns}}
\newcommand{\blems}{\begin{lems}}
\newcommand{\elems}{\end{lems}}
\newcommand{\bcors}{\begin{cors}}
\newcommand{\ecors}{\end{cors}}
\newcommand{\bexs}{\begin{exas}}
\newcommand{\eexs}{\end{exas}}
\newcommand{\balgs}{\begin{algs}}
\newcommand{\ealgs}{\end{algs}}
\newcommand{\bass}{\begin{asss}}
\newcommand{\eass}{\end{asss}}
\newcommand{\bit}{\begin{itemize}}
\newcommand{\eit}{\end{itemize}}
\let\vec\mathbf
\newcommand{\data}{\vec{X}}
\newcommand{\labelscalar}{y}
\newcommand{\labelvec}{\vec{y}}
\newcommand{\labelmat}{\vec{Y}}
\newcommand{\firstwcnn}{\vec{z}}
\newcommand{\secondwcnn}{w}
\newcommand{\actl}[1]{\vec{A}^{(#1)} }
\newcommand{\weight}{\vec{w}}
\newcommand{\weightl}[1]{\vec{w}^{(#1)} }
\newcommand{\weightlbar}[1]{\bar{\vec{w}}^{(#1)} }
\newcommand{\weightmatl}[1]{\vec{W}^{(#1)} }
\newcommand{\weightscalarl}[1]{{w}^{(#1)} }
\newcommand{\weightscalarlbar}[1]{{\bar{w}}^{(#1)} }
\newcommand{\bnvar}{\gamma}
\newcommand{\bnvarvec}{\bm{\gamma}}
\newcommand{\bnvarl}[1]{\gamma^{(#1)} }
\newcommand{\bnvarlbar}[1]{\bar{\gamma}^{(#1)} }
\newcommand{\bnvarvecl}[1]{\bm{\gamma}^{(#1)} }
\newcommand{\bnmean}{\alpha}
\newcommand{\bnmeanvec}{\bm{\alpha}}
\newcommand{\bnmeanl}[1]{\alpha^{(#1)} }
\newcommand{\bnmeanlbar}[1]{\bar{\alpha}^{(#1)} }
\newcommand{\bnmeanvecl}[1]{\bm{\alpha}^{(#1)} }
\newcommand{\bn}[1]{{\mathrm{BN}}_{\bnvar,\bnmean}\left(#1\right)} 
\newcommand{\bncnn}[1]{{\text{BN}}^C_{\bnvar,\bnmean}\left(#1\right)} 
\newcommand{\norm}[1]{\left\|#1 \right\|_{2} }
\newcommand{\normone}[1]{\left\|#1 \right\|_{1} }
\newcommand{\normf}[1]{\left\|#1 \right\|_{F} }
\title{Demystifying Batch Normalization in ReLU Networks: Equivalent Convex Optimization Models and Implicit Regularization}
\author{Tolga Ergen\thanks{Equal Contribution},  Arda Sahiner\footnotemark[1],  Batu Ozturkler, John Pauly, Morteza Mardani \& Mert Pilanci\\\\
Department of Electrical Engineering\\
Stanford University\\
Stanford, CA 94305, USA \\
\texttt{\{ergen,sahiner,ozt,pauly,morteza,pilanci\}@stanford.edu} \\

}
\begin{document}

\maketitle
\doparttoc 
\faketableofcontents 

\begin{abstract}
Batch Normalization (BN) is a commonly used technique to accelerate and stabilize training of deep neural networks. Despite its empirical success, a full theoretical understanding of BN is yet to be developed. In this work, we analyze BN through the lens of convex optimization. We introduce an analytic framework based on convex duality to obtain exact convex representations of weight-decay regularized ReLU networks with BN, which can be trained in polynomial-time. Our analyses also show that optimal layer weights can be obtained as simple closed-form formulas in the high-dimensional and/or overparameterized regimes. Furthermore, we find that Gradient Descent provides an algorithmic bias effect on the standard non-convex BN network, and we design an approach to explicitly encode this implicit regularization into the convex objective. Experiments with CIFAR image classification highlight the effectiveness of this explicit regularization for mimicking and substantially improving the performance of standard BN networks. 
\end{abstract}

\section{Introduction}
Deep neural networks have achieved dramatic progress in the past decade. This dramatic progress largely hinged on improvements in terms of optimization techniques. One of the most prominent recent optimization techniques is Batch Normalization (BN) \citep{batch_norm}. BN is an operation that iscan be introduced in between layers to normalize the output of each layer and it has been shown to be extremely effective in stabilizing and accelerating training of deep neural networks. Hence, it became standard in numerous state-of-the-art architectures, e.g., ResNets \citep{resnet}. Despite its empirical success, it is still theoretically elusive why BN is extremely effective for training deep neural networks. Therefore, we investigate the mechanisms behind the success of BN through convex duality.

\subsection{Related Work}
\textbf{Batch Normalization: }One line of research has focused particularly on alternatives to BN, such as Layer Normalization \citep{ba2016layernorm}, Instance Normalization \citep{ulyanov2016instancenorm}, Weight Normalization \citep{salimans2016weightnorm}, and  Group Normalization \citep{wu2018groupnorm}. Although these techniques achieved performance competitive with BN, they do not provide any theoretical insight about its empirical success.

Another line of research studied the effects of BN on neural network training and identified several benefits. For example, \citet{im2016empirical} showed that training deep networks with BN reduces dependence on the parameter initialization. \citet{wei2019meanfield} analyzed BN via mean-field theory to quantify its impact on the geometry of the optimization landscape. They reported that BN flattens the optimization landscape so that it enables the use of larger learning rates. In addition, \citet{bjorck_understanding,santurkar2018optimization,arora2018theoretical} showed that networks trained with BN achieve faster convergence and generalize better. Furthermore, \citet{daneshmand2020rank} proved that BN avoids rank collapse so that gradient-based algorithms, e.g., Stochastic Gradient Descent (SGD), are able to effectively train deep networks. Even though these studies are important to understand the benefits of BN, they fail to provide a theoretically complete characterization for training deep networks with BN.

\textbf{Convex Neural Networks: }
Recently, a seriess of paperss \citep{pilanci2020neural,ergen2020aistats,ergen2020cnn,ergen2020convexgeo,sahiner2021vectoroutput, sahiner2021convex} studied ReLU networks through the lens of convex optimization theory. Particularly, \citet{pilanci2020neural} introduced exact convex representations for two-layer ReLU networks, which can be trained in polynomial-time via standard convex solvers. However, this work is restricted to two-layer fully connected networks with scalar outputs. Later on, \citet{ergen2020cnn} first extended this approach to two-layer scalar output Convolutional Neural Networks (CNNs) with average and max pooling and provided further improvements on the training complexity. These results were extended to two-layer fully convolutional networks and two-layer networks with vector outputs \citep{sahiner2021vectoroutput}. However, these convex approaches are restricted to two-layer ReLU networks without BN, thus, do not reflect the exact training framework in practice, i.e., regularized deep ReLU networks with BN. 

\subsection{Our Contributions}
\begin{itemize}[leftmargin=*]
\item We introduce an exact convex framework to explicitly characterize optimal solutions to ReLU network training problems with weight-decay regularization and BN. Thus, we obtain closed-form solutions for the optimal layer weights in the high-dimensional and overparameterized regime.
\item We prove that regularized ReLU network training problems with BN can be equivalently stated as a finite-dimensional convex problem. As a corollary, we also show that the equivalent convex problems involve whitened data matrices unlike the original non-convex training problem. Hence, using convex optimization, we reveal an implicit whitening effect introduced by BN. 
\item We demonstrate that GD applied to BN networks provides an implicit regularization effect by learning high singular value directions of the training data more aggressively, whereas this regularization is absent for GD applied to the equivalent whitened data formulation. We propose techniques to explicitly regularize BN networks to capture this implicit regularization effect.
\item Unlike previous studies, our derivations extend to deep ReLU networks with BN, CNNs, BN after ReLU\footnote{Presented in Appendix \ref{sec:bn_after_relu}.}, vector output networks, and arbitrary convex loss functions.
\end{itemize}

\subsection{Preliminaries}
\textbf{Notation: }We denote matrices and vectors as uppercase and lowercase bold letters, respectively, where a subscript indicates a certain element or column. We use $\vec{0}$ (or $\vec{1}$) to denote a vector or matrix of zeros (or ones), where the sizes are appropriately chosen depending on the context. We also use $\vec{I}_n$ to denote the identity matrix of size $n$. To represent Euclidean, Frobenius, and nuclear norms, we use $\|\cdot\|_2$, $\|\cdot \|_{F}$, and $\|\cdot\|_*$, respectively. Lastly, we denote the element-wise 0-1 valued indicator function and ReLU activation as $\mathbbm{1}[x\geq0]$ and $\relu{x}=\max\{x,0\}$, respectively.

We\footnote{All the proofs and some extensions are presented in Appendix.} consider an $L$-layer ReLU network with layer weights $\weightmatl{l} \in \mathbb{R}^{m_{l-1} \times m_l}$, where $m_0=d$ and $m_L=C$ are the input and output dimensions, respectively. Given a training data $\data \in \mathbb{R}^{n \times d}$ and a label matrix $\labelmat \in \mathbb{R}^{n \times C}$, we particularly focus on the following regularized training problem
\begin{align} \label{eq:generic_primal}
    \min_{\theta \in \Theta} \mathcal{L}(f_{\theta,L}(\data), \labelmat )+ \beta \mathcal{R}(\theta),
\end{align}
where we compactly represent the parameters as $\theta:=\{\weightmatl{l},\bnvarvec_l,\bnmeanvec_l\}_{l=1}^L$ and the corresponding parameter space as $\Theta:=\{\{\weightmatl{l},\bnvarvecl{l},\bnmeanvecl{l}\}_{l=1}^L: \weightmatl{l} \in \mathbb{R}^{m_{l-1} \times m_l},\bnvarvec_l \in \mathbb{R}^{m_l},\bnmeanvec_l \in \mathbb{R}^{m_l}, \forall l \in [L]\}$. We note that $\bnvarvec_l$ and $\bnmeanvec_l$ are the parameters of the BN operator, for which we discuss the details below. In addition, $\mathcal{L}(\cdot,\cdot)$ is an arbitrary convex loss function, including squared, hinge, and cross entropy loss, and $\mathcal{R}(\cdot)$ is the regularization function for the layer weights with the tuning parameter $\beta > 0$. We also compactly define the network output as 
\begin{align*}
    f_{\theta,L}(\data):= \actl{L-1}\weightmatl{L},\;  \text{ where }  \actl{l}:=\relu{\bn{\actl{l-1}\weightmatl{l}}}
\end{align*}
 we denote the $l^{th}$ layer activations as $\actl{l} \in \mathbb{R}^{n\times m_l}$, and $\actl{0}=\data$. Here, $\bn{\cdot}$ represents the BN operation introduced in \citet{batch_norm} and applies matrices column-wise. 
\begin{remark}
Note that above we use BN before ReLU activations, which is common practice and consistent with the way introduced in \citet{batch_norm}. However, BN can be placed after ReLU as well, e.g., \citet{bn_after_relu}, and thus in Section \ref{sec:bn_after_relu} of Appendix, we will also consider architectures where BN layers are placed after ReLU.
\end{remark}
For a layer with weight matrix $\weightmatl{l} \in \mathbb{R}^{m_{l-1} \times m_{l}}$ and arbitrary batch of activations denoted as $\actl{l-1}_b \in \mathbb{R}^{s \times m_{l-1}}$, BN applies to each column $j$ independently as follows
{\small
\begin{align} \label{eq:bn_def}
    &\bn{\actl{l-1}_b \weightl{l}_j}:= \frac{(\vec{I}_{s}-\frac{1}{s}\vec{1}\vec{1}^T ) \actl{l-1}_b\weightl{l}_j}{\|(\vec{I}_{s}-\frac{1}{s}\vec{1}\vec{1}^T ) \actl{l-1}_b\weightl{l}_j\|_2}\bnvarl{l}_j +\frac{\bnmeanl{l}_j\vec{1}}{\sqrt{n}},
\end{align}}
where $\bnvarvecl{l}$ and $\bnmeanvecl{l}$ are trainable parameters that scale and shift the normalized value. In this work, we focus on the full-batch case, i.e, \(\actl{l-1}_b = \actl{l-1} \in \mathbb{R}^{n \times m_l}\). This corresponds to training the network with GD as opposed to mini-batch SGD. We note that our empirical findings with GD indicate identical if not better performance compared to the mini-batch case, which is also consistent with the previous studies \citet{lian2019smallbn, summers2020four}. 

Throughout the paper, we consider a regression framework with squared loss and standard weight-decay regularization. Extensions to general convex loss functions are presented in Appendix \ref{sec:arbitrary_loss}. Moreover, below, we first focus on scalar outputs i.e. $C=1$, and then extend it to vector outputs.

\subsection{Overview of Our Results}
Here, we provide an overview of our main results. To simplify the notation, we consider $L$-layer ReLU networks with scalar outputs, i.e., $m_L=C=1$ thus the label vector is $\vec{y} \in \mathbb{R}^n$, and extend the analysis to vector output networks with the label matrix $\vec{Y} \in \mathbb{R}^{n \times C}$ in the next sections. 

The regularized training problems for an $L$-layer network with scalar output and BN is given by
\begin{align}\label{eq:overview_primal}
    &p_L^*:= \min_{\theta \in \Theta} \frac{1}{2}\norm{f_{\theta,L}(\data)-  \labelvec}^2+ \frac{\beta}{2}\sum_{l=1}^{L}\left(\norm{\bnvarvecl{l}}^2+\norm{\bnmeanvecl{l}}^2+\normf{\weightmatl{l}}^2 \right),
\end{align}
where we use $\bnvarvecl{L}=\bnmeanvecl{L}=\vec{0}$ as dummy variables for notational simplicity. 

\begin{lem}\label{lemma:scaling_overview}
The problem in \eqref{eq:overview_primal} is equivalent to the following optimization problem
\begin{align}\label{eq:overview_primal_scaled}
    &\min_{\theta \in \Theta_s} \frac{1}{2} \norm{ f_{\theta,L}(\data)- \labelvec}^2+ \beta \normone{\weightl{L}} ,
\end{align}
where $\Theta_s:=\{\theta \in \Theta :  {\bnvarl{L-1}_j}^2+{\bnmeanl{L-1}_j}^2 =1,\, \forall j \in [m_{L-1}] \}$.
\end{lem}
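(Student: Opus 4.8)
The engine of the equivalence is the scale invariance of batch normalization: reading off \eqref{eq:bn_def}, $\bn{t\vec u}=\bn{\vec u}$ for every scalar $t>0$, since the numerator and denominator of the normalized term are both multiplied by $t$, so a column's BN output depends on its pre-activation only through its direction. I would first record three consequences of this, each leaving the network map $f_{\theta,L}(\data)$ unchanged: (a) for any $l\in[L-1]$ and any column index $j$, replacing $\weightl{l}_j$ by $t\weightl{l}_j$ with $t>0$ leaves $\bn{\actl{l-1}\weightl{l}_j}$ fixed, hence leaves $\actl{l},\dots,\actl{L-1}$ fixed; (b) for any $l\le L-2$, replacing $(\bnvarvecl{l},\bnmeanvecl{l})$ by $c(\bnvarvecl{l},\bnmeanvecl{l})$ with $c>0$ multiplies $\actl{l}$ by $c$ (ReLU is positively homogeneous), and this scalar is annihilated by the BN applied at layer $l+1$, so $\actl{l+1},\dots,\actl{L-1}$ are unchanged; (c) for any $j$, replacing $(\bnvarl{L-1}_j,\bnmeanl{L-1}_j)$ by $\lambda_j(\bnvarl{L-1}_j,\bnmeanl{L-1}_j)$ and simultaneously $\weightl{L}_j$ by $\weightl{L}_j/\lambda_j$ with $\lambda_j>0$ scales the $j$-th column of $\actl{L-1}$ by $\lambda_j$ and therefore leaves $f_{\theta,L}(\data)=\sum_j\actl{L-1}_j\weightl{L}_j$ unchanged. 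Negative scalings are never needed, as the sign of each $\bnvarl{l}_j$ is itself a free parameter.

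To bound the value of \eqref{eq:overview_primal_scaled} by $p_L^*$ I would take any $\theta\in\Theta$ and apply (c) with $\lambda_j=({\bnvarl{L-1}_j}^2+{\bnmeanl{L-1}_j}^2)^{-1/2}$ on each neuron with ${\bnvarl{L-1}_j}^2+{\bnmeanl{L-1}_j}^2>0$; a dead neuron, for which $\actl{L-1}_j=\vec0$ and hence $\weightl{L}_j$ is irrelevant, is instead reset to $\bnvarl{L-1}_j=1,\bnmeanl{L-1}_j=0,\weightl{L}_j=0$, which can only decrease the objective. The resulting $\tilde\theta$ lies in $\Theta_s$ and has the same loss, and since AM--GM gives ${\bnvarl{L-1}_j}^2+{\bnmeanl{L-1}_j}^2+{\weightl{L}_j}^2\ge 2|\weightl{L}_j|\sqrt{{\bnvarl{L-1}_j}^2+{\bnmeanl{L-1}_j}^2}$ for $\theta$, while the remaining regularizer terms (the weight terms of layers $1,\dots,L-1$ and the BN terms of layers $1,\dots,L-2$) are all nonnegative, the objective of $\tilde\theta$ in \eqref{eq:overview_primal_scaled} is at most that of $\theta$ in \eqref{eq:overview_primal}; taking the infimum over $\theta$ gives this direction.

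For the reverse inequality I would take any $\theta\in\Theta_s$ and first apply (c) with $\lambda_j=|\weightl{L}_j|^{1/2}$ on each neuron with $\weightl{L}_j\ne0$; since $\sqrt{{\bnvarl{L-1}_j}^2+{\bnmeanl{L-1}_j}^2}=1$ on $\Theta_s$, this produces ${\bnvarl{L-1}_j}^2+{\bnmeanl{L-1}_j}^2={\weightl{L}_j}^2=|\weightl{L}_j|$, so the three quadratic terms of neuron $j$ now contribute exactly $\beta|\weightl{L}_j|$ to the objective of \eqref{eq:overview_primal} (neurons with $\weightl{L}_j=0$ contribute nothing), summing to $\beta\normone{\weightl{L}}$. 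I would then use (a) to shrink every column $\weightl{l}_j$ with $l\le L-1$ and (b) to shrink every pair $(\bnvarvecl{l},\bnmeanvecl{l})$ with $l\le L-2$ by a common factor $\varepsilon$, which drives all remaining regularizer terms to $O(\varepsilon^2)$ while fixing both $f_{\theta,L}(\data)$ and the $\ell_1$ term; hence $p_L^*\le \tfrac12\norm{f_{\theta,L}(\data)-\labelvec}^2+\beta\normone{\weightl{L}}+O(\varepsilon^2)$, and letting $\varepsilon\to0$ and taking the infimum over $\theta\in\Theta_s$ closes the argument.

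The step I expect to cost the most care is invariance (b): unlike (a) and (c), which concern a single layer, it must propagate a scalar rescaling through the composition of the linear map $\weightmatl{l+1}$, the BN at layer $l+1$, and the ReLU, checking column by column that the factor $c$ is exactly cancelled. The remaining fine points are the degenerate cases — dead output-layer neurons, and pre-activations that are constant so the corresponding normalized term is ill-defined — and the fact that the infima over the intermediate-layer rescalings are approached but not attained, so the statement is properly an equality of optimal values together with an explicit map from an optimizer of one problem to a near-optimizer of the other. Compared with the two-layer ReLU rescaling lemma of \citet{pilanci2020neural}, the new feature is precisely that BN's scale invariance makes the norms of all non-output weight matrices irrelevant, which is why they vanish from \eqref{eq:overview_primal_scaled}.
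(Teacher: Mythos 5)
Your proof is correct. Its second half --- the per-neuron rescaling $(\bnvarl{L-1}_j,\bnmeanl{L-1}_j)\mapsto\lambda_j(\bnvarl{L-1}_j,\bnmeanl{L-1}_j)$, $\weightscalarl{L}_j\mapsto\weightscalarl{L}_j/\lambda_j$, combined with AM--GM to trade the three quadratic penalties for $\beta \normone{\weightl{L}}$ --- is exactly the paper's argument in the appendix (``Rescaling for $\ell_1$ norm regularization''), down to the equalizing choice $\eta_j=\bigl(|\weightscalarl{L}_j|/\sqrt{{\bnvarl{L-1}_j}^2+{\bnmeanl{L-1}_j}^2}\bigr)^{1/2}$. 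Where you take a genuinely different route is in discarding the regularization on the inner weights $\weightmatl{1},\dots,\weightmatl{L-1}$ and the inner BN parameters: you do this entirely in the primal, using your invariances (a) and (b) to shrink those parameters by a common factor $\varepsilon$ without changing $f_{\theta,L}(\data)$, so that their penalty contribution is $O(\varepsilon^2)$, and you correctly flag that this infimum is approached but not attained. The paper instead (Appendix \ref{sec:effective_reg}) collects the inner penalties into a slack variable $t$, dualizes over $\weightl{L}$, and observes that the dual constraint sees the inner parameters only through the unit-norm direction $\vec{h}(\theta^\prime)$ and is therefore independent of $t$. The scale invariance of BN being exploited is the same in both arguments, but the paper's packaging leans on the dual characterization (and, in passing from ``the dual is unchanged'' to ``the primal is equivalent,'' implicitly on the strong-duality claims established later), whereas yours is self-contained at the level of optimal values and makes the non-attainment explicit. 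Your handling of the degenerate cases --- resetting dead output neurons so they satisfy the unit-norm constraint, and insisting on a \emph{common} scaling factor in (b) so that the BN of the following layer cancels it exactly --- is careful and goes beyond what the paper records.
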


Using the equivalence in Lemma \ref{lemma:scaling_overview}, we now take dual of \eqref{eq:overview_primal_scaled} with respect to the output layer weights $\weightl{L}$ to obtain \footnote{Details regarding the derivation of the dual problem are presented in Appendix \ref{sec:dual_derivation}.}
\begin{align}\label{eq:dual_overview}
   &p_L^* \geq  d_L^*:=\max_{\dual}   -\frac{1}{2}\|\dual - \labelvec\|_2^2 + \frac{1}{2}\|\labelvec\|_2^2 \text{ s.t. } \max_{\theta \in \Theta_s } \left \vert \dual^{\top}\relu{\bn{\actl{L-2} \weightl{L-1}}}\right \vert \leq \beta.
\end{align}
Since the original formulation in \eqref{eq:overview_primal} is a non-convex optimization problem, any solution $\dual$ in the dual domain yields a lower bound for the primal problem, i.e., $p_L^* \geq d_L^*$. In this paper, we first show that strong duality holds in this case, i.e., $p_L^*=d_L^*$, and then derive an exact equivalent convex formulation for the non-convex problem \eqref{eq:overview_primal}. Furthermore, we even obtain closed-form solutions for the layer weights in some cases so that there is no need to train a network in an end-to-end manner.

\section{Two-layer Networks} \label{sec:twolayer}
In this section, we analyze two-layer networks. Particularly, we first consider a high-dimensional regime, where $n \leq d$, and then extend the analysis to arbitrary data matrices by deriving an equivalent convex program. We also extend the derivations to vector output networks.

We now consider the regularized training problem for a two-layer network with scalar output and BN. Using the equivalence in Lemma \ref{lemma:scaling_overview}, the problem can be stated as
\begin{align}\label{eq:twolayer_primal_scaled}
    &p_2^*=\min_{\theta \in \Theta_s} \frac{1}{2} \norm{ f_{\theta,2}(\data)- \labelvec }^2+ \beta \normone{\weightl{2}} .
\end{align}
We then take dual of \eqref{eq:twolayer_primal_scaled} with respect to the output weights $\weightl{2}$ to obtain the following dual problem
\begin{align}\label{eq:twolayer_dual}
   p_2^* \geq  d_2^*=&\max_{\dual}   -\frac{1}{2}\|\dual-\labelvec\|_2^2+\frac{1}{2}\|\labelvec\|_2^2 \text{ s.t. } \max_{\theta \in \Theta_s } \left \vert \dual^T\relu{\bn{\data \weightl{1}}}\right \vert \leq \beta ,
\end{align}
where $\Theta_s=\{\theta \in \Theta : \weightl{1} \in \mathbb{R}^d, {\bnvarl{1}}^2+{\bnmeanl{1}}^2 =1 \}$. Notice that here we drop the hidden neuron index $j \in [m_1]$ since the dual constraint scans all possible parameters in the continuous set $\Theta_s$.

\subsection{High-dimensional Regime ($n \leq d$)}
In the sequel, we prove that the aforementioned dual problem is further simplified such that one can find an optimal solution \eqref{eq:twolayer_primal_scaled} in closed-form as follows.

\begin{theo} \label{theo:twolayer_closedform}
Suppose $n \leq d$ and $\data$ is full row-rank, then an optimal solution to \eqref{eq:twolayer_primal_scaled} is
{\small
\begin{align*}
  & \left({\weightl{1}_j}^* ,{\weightscalarl{2}_j}^*\right)= \left(\data^\dagger \relu{(-1)^{j}\labelvec}, (-1)^{j}\relu{\| \relu{(-1)^{j}\labelvec}\|_2-\beta }\right)\\
   &
  \begin{bmatrix} {\bnvarl{1}_j}^* \\ {\bnmeanl{1}_j}^*\end{bmatrix} =  \frac{1}{\|\relu{(-1)^{j}\labelvec}\|_2}\begin{bmatrix} \|\relu{(-1)^{j}\labelvec}-\frac{1}{n}\vec{1}\vec{1}^T \relu{(-1)^{j}\labelvec}\|_2 \\ \frac{1}{{\sqrt{n}} }\vec{1}^T\relu{(-1)^{j}\labelvec} \end{bmatrix} 
\end{align*}}
for $j=1,2$. Therefore, strong duality holds, $p_2^*=d_2^*$.
\end{theo}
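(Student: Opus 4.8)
The plan is to argue by sandwiching. Weak duality already gives $p_2^*\ge d_2^*$, so it suffices to produce one dual-feasible vector $\dual^\star$ and to check that the point $\theta^*$ displayed in the statement is feasible for \eqref{eq:twolayer_primal_scaled} with primal objective equal to the dual objective at $\dual^\star$; then
\[
p_2^*\;\le\;\tfrac12\norm{f_{\theta^*,2}(\data)-\labelvec}^2+\beta\normone{(\weightl{2})^*}\;=\;-\tfrac12\norm{\dual^\star-\labelvec}^2+\tfrac12\norm{\labelvec}^2\;\le\;d_2^*\;\le\;p_2^*,
\]
so every inequality is an equality, $p_2^*=d_2^*$, and $\theta^*$ is optimal.

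The first and only substantive step is to evaluate the inner maximization $\max_{\theta\in\Theta_s}|\dual^\top\relu{\bn{\data\weightl{1}}}|$ in \eqref{eq:twolayer_dual}. For any $\weightl{1}$ with $\data\weightl{1}$ non-constant, set $\vec g:=(\vec I_n-\tfrac1n\vec1\vec1^\top)\data\weightl{1}/\norm{(\vec I_n-\tfrac1n\vec1\vec1^\top)\data\weightl{1}}$, a unit vector orthogonal to $\vec1$; then \eqref{eq:bn_def} reads $\bn{\data\weightl{1}}=\bnvarl{1}\vec g+\tfrac{\bnmeanl{1}}{\sqrt n}\vec1$, whose squared norm is ${\bnvarl{1}}^2+{\bnmeanl{1}}^2$, so on $\Theta_s$ every BN output is a \emph{unit} vector. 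Conversely, full row-rank of $\data$ makes $\data\weightl{1}$ range over all of $\reals^n$, and a short computation shows $\{\bn{\data\weightl{1}}:\theta\in\Theta_s\}$ is exactly the unit sphere of $\reals^n$. Hence the inner maximum equals $\max_{\norm{\vec z}=1}|\dual^\top\relu{\vec z}|$, which by Cauchy--Schwarz (using $\dual^\top\relu{\vec z}\le\norm{\relu{\dual}}\norm{\relu{\vec z}}\le\norm{\relu{\dual}}$, and symmetrically with $-\dual$, with equality at $\vec z=\relu{\pm\dual}/\norm{\relu{\pm\dual}}$) equals $\max\{\norm{\relu{\dual}},\norm{\relu{-\dual}}\}$. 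So \eqref{eq:twolayer_dual} is equivalent to $\max_\dual\;\dual^\top\labelvec-\tfrac12\norm{\dual}^2$ subject to $\norm{\relu{\dual}}\le\beta$ and $\norm{\relu{-\dual}}\le\beta$.

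This reduced dual decouples over the blocks $\{i:\labelscalar_i>0\}$, $\{i:\labelscalar_i<0\}$, $\{i:\labelscalar_i=0\}$, and on each of the first two it is the Euclidean projection of $\relu{\labelvec}$ (resp.\ $\relu{-\labelvec}$) onto a radius-$\beta$ ball; this yields $\dual^\star=\tfrac{\min(r_+,\beta)}{r_+}\relu{\labelvec}-\tfrac{\min(r_-,\beta)}{r_-}\relu{-\labelvec}$ with $r_\pm:=\norm{\relu{\pm\labelvec}}$ (degenerate blocks contributing $\vec0$), which is dual-feasible and has objective $\sum_{s\in\{+,-\}}(\tfrac12 r_s^2-\tfrac12(r_s-\beta)_+^2)$. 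On the primal side, full row-rank gives $\data\data^\dagger=\vec I_n$, so $\data{\weightl{1}_j}^*=\relu{(-1)^j\labelvec}$; substituting this together with the stated ${\bnvarl{1}_j}^*,{\bnmeanl{1}_j}^*$ into \eqref{eq:bn_def}, the centered and mean terms recombine to $\bn{\data{\weightl{1}_j}^*}=\relu{(-1)^j\labelvec}/\norm{\relu{(-1)^j\labelvec}}$, and the identity $\norm{(\vec I_n-\tfrac1n\vec1\vec1^\top)\vec p}^2+\tfrac1n(\vec1^\top\vec p)^2=\norm{\vec p}^2$ gives $({\bnvarl{1}_j}^*)^2+({\bnmeanl{1}_j}^*)^2=1$, so $\theta^*\in\Theta_s$. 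Since $\relu{(-1)^j\labelvec}\ge\vec0$ the ReLU is inactive, so $f_{\theta^*,2}(\data)=\tfrac{(r_+-\beta)_+}{r_+}\relu{\labelvec}-\tfrac{(r_--\beta)_+}{r_-}\relu{-\labelvec}$, and because $\relu{\labelvec}$ and $\relu{-\labelvec}$ have disjoint supports with $\labelvec=\relu{\labelvec}-\relu{-\labelvec}$, a direct computation gives $\tfrac12\norm{f_{\theta^*,2}(\data)-\labelvec}^2+\beta\normone{(\weightl{2})^*}=\sum_s(\tfrac12 r_s^2-\tfrac12(r_s-\beta)_+^2)$, matching the dual value and closing the sandwich. (If $r_+=0$ or $r_-=0$ the corresponding ${\weightscalarl{2}_j}^*$ is zero and that hidden unit can be any element of $\Theta_s$.)

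I expect the constraint reduction of the second paragraph to be the crux: recognizing that $\Theta_s$ together with the normalization pins every BN output to the unit sphere, that full row-rank of $\data$ lets the whole sphere be realized, and hence that $\max_{\theta\in\Theta_s}|\dual^\top\relu{\bn{\data\weightl{1}}}|$ collapses to $\max\{\norm{\relu{\dual}},\norm{\relu{-\dual}}\}$. A little care is needed for the excluded constant pre-activations (where BN is undefined) and for the cases $\relu{\dual}=\vec0$ or $\relu{-\dual}=\vec0$, but none of these affect any objective value; everything after the reduction is a routine projection-plus-algebra calculation.
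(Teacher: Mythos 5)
Your proof is correct and follows essentially the same route as the paper's: it reduces the dual constraint to $\max\{\|\relu{\dual}\|_2,\|\relu{-\dual}\|_2\}\le\beta$ via the full-row-rank assumption (the paper's Lemma on the dual does this with a chain of inequalities, your unit-sphere characterization of the BN outputs over $\Theta_s$ is the same fact stated more cleanly), solves the reduced dual by projection (your single formula $\min(r_\pm,\beta)/r_\pm$ packages the paper's four cases), and closes the duality gap by matching the primal objective at the displayed weights with the dual objective. The only cosmetic difference is that you verify the objective equality explicitly, which the paper asserts after constructing the same weights.
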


\subsection{Exact Convex Formulation}
To obtain an equivalent convex formulation to \eqref{eq:twolayer_primal_scaled}, we first introduce a notion of hyperplane arrangements; see also \citet{pilanci2020neural}. We first define a diagonal matrix as $\diag:=\text{diag}(\mathbbm{1}[ \data \weight \geq 0])$ for an arbitrary $\weight \in \mathbb{R}^d$. Therefore, the output of a ReLU activation can be equivalently written as $\relu{\data \weight}=\diag\data\weight$ provided that $\diag\data\weight\geq 0$ and $\left(\vec{I}_n-\diag\right)\data\weight \leq 0 $ are satisfied. We can define these two constraints more compactly as $\left(2\diag-\vec{I}_n\right)\data \weight \geq 0$. We now denote the cardinality of the set of all possible diagonal matrices as $P$, and obtain the following upperbound $P\le 2r(e(n-1)/r)^r$,
where $r:=\mbox{rank}(\data)\leq \min(n,d)$. For the rest of the paper, we enumerate all possible diagonal matrices (or hyperplane arrangements) in an arbitrary order and denote them as $\{\diag_i\}_{i=1}^P$\footnote{We provide more details in Appendix \ref{sec:hyperplane_arrangements}.}

Based on the notion of hyperplane arrangement, we now introduce an equivalent convex formulation for \eqref{eq:twolayer_primal_scaled} without an assumption on the data matrix as in the previous section.
\begin{theo} \label{theo:twolayer_convexprogram}
For any data matrix $\data$, the non-convex training problem in \eqref{eq:twolayer_primal_scaled} can be equivalently stated as the following finite-dimensional convex program
\begin{align}\label{eq:twolayer_convexprogram}
    &\min_{\vec{s}_i,\vec{s}_i^\prime \in \mathbb{R}^{r+1}}\frac{1}{2}\left\|\sum_{i=1}^P \diag_i \vec{U}^\prime(\vec{s}_i-\vec{s}_i^\prime)-\labelvec \right\|_2^2 +\beta \sum_{i=1}^P (\|\vec{s}_i\|_2+\|\vec{s}_i^\prime\|_2)  \text{ s.t.} 
    \begin{split}
      &(2\diag_i-\vec{I}_n)\vec{U}^\prime\vec{s}_i\geq 0 \\
      &(2\diag_i-\vec{I}_n)\vec{U}^\prime\vec{s}_i^\prime\geq 0      
    \end{split}
, \forall i,
\end{align}
where $\vec{U}\in \mathbb{R}^{n \times r}$ and $\vec{U}^\prime\in \mathbb{R}^{n \times r+1} $ are computed via the compact SVD of the zero-mean data matrix, namely $(\vec{I}_n -\frac{1}{n}\vec{1}\vec{1}^T)\data:= \vec{U}\bm{\Sigma}\vec{V}^T$ and $\vec{U}^\prime:=\begin{bmatrix}\vec{U} &\frac{1}{\sqrt{n}}\vec{1} \end{bmatrix}$. 
\end{theo}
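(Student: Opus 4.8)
The plan is to start from the dual problem \eqref{eq:twolayer_dual} and show that its constraint can be rewritten purely in terms of the whitened data matrix $\vec{U}^\prime$, then invoke convex duality a second time to recover a finite-dimensional convex program. First I would analyze the inner maximization $\max_{\theta\in\Theta_s}|\dual^\top\relu{\bn{\data\weightl{1}}}|$. Using the definition \eqref{eq:bn_def} of BN in the full-batch case, $\bn{\data\weightl{1}} = \bnvarl{1}\frac{(\vec{I}_n-\frac1n\vec{1}\vec{1}^\top)\data\weightl{1}}{\|(\vec{I}_n-\frac1n\vec{1}\vec{1}^\top)\data\weightl{1}\|_2} + \frac{\bnmeanl{1}}{\sqrt n}\vec{1}$. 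Substituting the SVD $(\vec{I}_n-\frac1n\vec{1}\vec{1}^\top)\data = \vec{U}\bm{\Sigma}\vec{V}^\top$, the vector $\frac{(\vec{I}_n-\frac1n\vec{1}\vec{1}^\top)\data\weightl{1}}{\|\cdot\|_2}$ is an arbitrary unit vector in the column space of $\vec{U}$ as $\weightl{1}$ ranges over $\mathbb{R}^d$ (here full row-rank is not needed; we just get the range of $\vec{U}$). Combined with the $\frac{\bnmeanl{1}}{\sqrt n}\vec{1}$ term and the constraint ${\bnvarl{1}}^2+{\bnmeanl{1}}^2=1$, the argument of the ReLU becomes exactly $\vec{U}^\prime\vec{g}$ for an arbitrary unit vector $\vec{g}\in\mathbb{R}^{r+1}$, since $\vec{U}^\prime=[\vec{U}\ \frac{1}{\sqrt n}\vec{1}]$ has orthonormal columns. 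Hence the dual constraint collapses to $\max_{\|\vec{g}\|_2\le 1}|\dual^\top\relu{\vec{U}^\prime\vec{g}}|\le\beta$.

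Next I would discretize the ReLU using the hyperplane arrangement machinery introduced just before the theorem. For each fixed sign pattern $\diag_i$, restricting $\vec{g}$ to the cone $\{(2\diag_i-\vec{I}_n)\vec{U}^\prime\vec{g}\ge 0\}$ turns $\relu{\vec{U}^\prime\vec{g}}$ into the linear map $\diag_i\vec{U}^\prime\vec{g}$, so the constraint becomes $\max_i\ \max_{\|\vec{g}\|\le 1,\ (2\diag_i-\vec{I}_n)\vec{U}^\prime\vec{g}\ge 0}|\dual^\top\diag_i\vec{U}^\prime\vec{g}|\le\beta$. This is a finite collection of convex (conic, linearly-constrained, norm-ball) maximization problems in $\dual$. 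I would then write the two-layer objective as minimizing $\frac12\|\dual-\labelvec\|_2^2-\frac12\|\labelvec\|^2$ subject to these constraints and take the Lagrangian dual once more — equivalently, apply the standard semi-infinite-to-finite convex duality argument from \citet{pilanci2020neural}: the dual variables associated with the $\pm\beta$ constraints in direction $\diag_i$ become the variables $\vec{s}_i,\vec{s}_i^\prime\in\mathbb{R}^{r+1}$, the conic constraints $(2\diag_i-\vec{I}_n)\vec{U}^\prime\vec{g}\ge0$ get transferred onto $\vec{s}_i,\vec{s}_i^\prime$, and the $\ell_2$-ball on $\vec{g}$ dualizes to the $\|\vec{s}_i\|_2+\|\vec{s}_i^\prime\|_2$ penalty. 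Reconstructing the primal network output from these dual blocks gives $\sum_i\diag_i\vec{U}^\prime(\vec{s}_i-\vec{s}_i^\prime)$, matching \eqref{eq:twolayer_convexprogram}.

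Finally, to close the equivalence I would verify strong duality in both directions: that the convex program's value equals $d_2^*$ (no gap, since after discretization everything is a finite convex program satisfying Slater's condition, e.g. $\dual=\labelvec$ scaled down), and that any convex-program solution can be mapped back to a feasible $\theta\in\Theta_s$ achieving the same objective in \eqref{eq:twolayer_primal_scaled} — each nonzero block $(\diag_i,\vec{s}_i)$ becomes a hidden neuron with $\weightl{1}$ chosen so that $(\vec{I}_n-\frac1n\vec{1}\vec{1}^\top)\data\weightl{1}\propto\vec{U}\vec{s}_i^{(1:r)}$, $(\bnvarl{1},\bnmeanl{1})$ set to normalize and to produce the component $\frac{1}{\sqrt n}\vec{1}\,s_i^{(r+1)}$, and $\weightscalarl{2}=\pm\|\vec{s}_i\|_2$ (with the $\ell_1$ penalty on $\weightl{2}$ matching $\beta\sum_i\|\vec{s}_i\|_2$ by the rescaling in Lemma \ref{lemma:scaling_overview}). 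The main obstacle I anticipate is the careful bookkeeping in this last reconstruction step: showing that the ball constraint $\|\vec{g}\|\le1$ rather than an equality $\|\vec{g}\|=1$ is the correct relaxation (the unused slack is absorbed harmlessly), that the number of distinct arrangements needed is at most $P$, and that splitting into $\vec{s}_i$ versus $\vec{s}_i^\prime$ correctly accounts for both signs of $\dual^\top(\cdot)$ in the absolute value — i.e., that neurons of both "polarities" $(-1)^j$ are captured, exactly as the two-neuron closed form in Theorem \ref{theo:twolayer_closedform} foreshadows.
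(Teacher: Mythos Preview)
Your proposal is correct and follows essentially the same route as the paper: rewrite the dual constraint via the SVD so that $\bn{\data\weightl{1}}=\vec{U}^\prime\vec{s}^\prime$ with $\|\vec{s}^\prime\|_2=1$, discretize the ReLU via hyperplane arrangements $\{\diag_i\}$, and then apply the Lagrangian/Sion-minimax bidualization of \citet{pilanci2020neural} to land on \eqref{eq:twolayer_convexprogram}, finishing with the explicit neuron reconstruction and Slater's condition. The only ingredient you leave implicit that the paper states up front is the width condition $m_1\ge m^*$ (with $m^*\le n+1$) needed so that $p_2^*=d_2^*$ before the second dualization; otherwise your outline matches the paper's proof.
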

Notice that \eqref{eq:twolayer_convexprogram} is a convex program with $2Pr$ variables and $2Pn$ constraints, and thus can be globally optimized via interior-point solvers with $\mathcal{O}(r^6\left(\frac{n}{r}\right)^{3r})$ complexity.
\begin{remark}
 Theorem \ref{theo:twolayer_convexprogram} shows that a classical ReLU network with BN can be equivalently described as a convex combination of linear models $\{\vec{U}^\prime\vec{s}_i\}_{i=1}^P$ and $\{\vec{U}^\prime\vec{s}_i^\prime\}_{i=1}^P$ multiplied with fixed hyperplane arrangement matrices $\{\diag_i\}_{i=1}^P$. Therefore, this result shows that optimal ReLU networks with BN are sparse piecewise linear functions, where sparsity is enforced via the group lasso regularization in \eqref{eq:twolayer_convexprogram}. The convex program also reveals an implicit mechanism behind BN. Particularly, comparing \eqref{eq:twolayer_convexprogram} with the one without BN (see eqn. (8) in \citet{pilanci2020neural}), we observe that the data matrix for the convex program is whitened, i.e., $\vec{U}^{\prime^T}\vec{U}^\prime=\vec{I}_{r+1}$.
\end{remark}

\subsection{Vector Output Networks}
Here, we analyze networks with vector outputs, i.e., $\labelmat \in \mathbb{R}^{n\times C}$, trained via the following problem
\begin{align}\label{eq:twolayer_vector_primal}
    &p_{v2}^*:=\min_{\theta \in \Theta} \frac{1}{2} \normf{ f_{\theta,2}(\data)- \labelmat }^2 + \frac{\beta}{2} \sum_{j=1}^{m_{1}}\left(\norm{\weightl{1}_j}^2+{\bnvarl{1}_j}^2+{\bnmeanl{1}_j}^2+\norm{\weightl{2}_j}^2 \right).
\end{align}
\begin{lem}\label{lemma:scaling_twolayer_vector}
The problem in \eqref{eq:twolayer_vector_primal} is equivalent to the following optimization problem
\begin{align}\label{eq:twolayer_vector_primal_scaled}
    &p_{2v}^*=\min_{\theta \in \Theta_s} \frac{1}{2} \normf{ f_{\theta,2}(\data)- \labelmat}^2+ \beta \sum_{j=1}^{m_1}\norm{\weightl{2}_j} ,
\end{align}
where $\Theta_s:=\{\theta \in \Theta :  {\bnvarl{1}_j}^2+{\bnmeanl{1}_j}^2 =1,\, \forall j \in [m_{1}] \}$.
\end{lem}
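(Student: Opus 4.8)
The plan is to establish the equivalence by the same positive-homogeneity (rescaling) argument used for Lemma~\ref{lemma:scaling_overview}, carried out neuron by neuron, with the only change being that the output-layer weight of neuron $j$ is now a vector $\weightl{2}_j\in\mathbb{R}^C$ rather than a scalar. Two invariances of the forward map $f_{\theta,2}$ do all the work. First, from the definition \eqref{eq:bn_def} with $s=n$ (full batch), for any $t>0$ we have $\bn{\data(t\weightl{1}_j)}=\bn{\data\weightl{1}_j}$ whenever $(\vec{I}_n-\tfrac1n\vec{1}\vec{1}^T)\data\weightl{1}_j\neq\vec{0}$, since the factor $t$ cancels between the numerator and the denominator of the normalization; hence the hidden unit $\relu{\bn{\data\weightl{1}_j}}$ depends on $\weightl{1}_j$ only through its direction and not its norm (a sign flip of $\weightl{1}_j$ may additionally be absorbed into $\bnvarl{1}_j$). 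Second, replacing $(\bnvarl{1}_j,\bnmeanl{1}_j)$ by $t(\bnvarl{1}_j,\bnmeanl{1}_j)$ for $t>0$ scales $\bn{\data\weightl{1}_j}$ by $t$, so by positive homogeneity of the ReLU the $j$-th hidden unit scales by $t$; compensating with $\weightl{2}_j\mapsto\weightl{2}_j/t$ leaves $f_{\theta,2}(\data)$ unchanged. Thus the contribution of neuron $j$ to the output depends only on the directions $\weightl{1}_j/\norm{\weightl{1}_j}$ and $\weightl{2}_j/\norm{\weightl{2}_j}$ and on the scalar $\rho_j\norm{\weightl{2}_j}$, where $\rho_j:=\sqrt{{\bnvarl{1}_j}^2+{\bnmeanl{1}_j}^2}$.

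With these invariances, I would prove the two inequalities between the optimal values. For the direction $p_{v2}^*\ge$ (value of \eqref{eq:twolayer_vector_primal_scaled}): given any feasible $\theta$ in \eqref{eq:twolayer_vector_primal}, normalize each neuron by replacing $\weightl{1}_j$ with $\weightl{1}_j/\norm{\weightl{1}_j}$, $(\bnvarl{1}_j,\bnmeanl{1}_j)$ with $(\bnvarl{1}_j,\bnmeanl{1}_j)/\rho_j$ (so the unit-circle constraint of $\Theta_s$ holds), and $\weightl{2}_j$ with $\rho_j\weightl{2}_j$; by the invariances $f_{\theta,2}(\data)$ is unchanged, and the objective of \eqref{eq:twolayer_vector_primal_scaled} at the new point is $\tfrac12\normf{f_{\theta,2}(\data)-\labelmat}^2+\beta\sum_{j}\rho_j\norm{\weightl{2}_j}$, which by AM--GM ($\tfrac12(a^2+b^2)\ge ab$ with $a=\rho_j$, $b=\norm{\weightl{2}_j}$) and by discarding the nonnegative $\norm{\weightl{1}_j}^2$ terms is at most the objective of \eqref{eq:twolayer_vector_primal} at $\theta$. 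For the reverse direction: given any feasible $\theta$ in \eqref{eq:twolayer_vector_primal_scaled}, set $t_j:=\sqrt{\norm{\weightl{2}_j}}$, replace $(\bnvarl{1}_j,\bnmeanl{1}_j)$ by $t_j(\bnvarl{1}_j,\bnmeanl{1}_j)$ and $\weightl{2}_j$ by $\weightl{2}_j/t_j$ (equalizing the two norms so AM--GM is tight), and shrink $\weightl{1}_j\mapsto\varepsilon\weightl{1}_j$ for $\varepsilon>0$; the function value is preserved, the first-layer penalty becomes $\varepsilon^2\norm{\weightl{1}_j}^2$, and as $\varepsilon\downarrow0$ the objective of \eqref{eq:twolayer_vector_primal} tends to $\tfrac12\normf{f_{\theta,2}(\data)-\labelmat}^2+\beta\sum_j\norm{\weightl{2}_j}$. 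Combining the two inequalities yields equality of the optimal values.

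The obstacles here are bookkeeping rather than conceptual. Degenerate units need separate treatment: if $\weightl{1}_j$ produces a constant pre-activation (so $(\vec{I}_n-\tfrac1n\vec{1}\vec{1}^T)\data\weightl{1}_j=\vec{0}$), or $\rho_j=0$, or $\weightl{2}_j=\vec{0}$, one cannot normalize, but any such neuron contributes zero to $f_{\theta,2}(\data)$ and its penalty can be sent to $0$ in both formulations, so it is simply dropped. The one point to state carefully is that, precisely because BN is scale-invariant in $\weightl{1}_j$, the penalty $\tfrac\beta2\sum_j\norm{\weightl{1}_j}^2$ has infimum $0$ that is not attained; hence the equivalence is an equivalence of optimal values (a minimizer of \eqref{eq:twolayer_vector_primal_scaled} corresponding to an $\varepsilon$-minimizer of \eqref{eq:twolayer_vector_primal}), and I expect this limiting argument to be the main subtlety. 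Everything else --- verifying the two invariances directly from \eqref{eq:bn_def} and checking the AM--GM equality conditions --- is the scalar argument of Lemma~\ref{lemma:scaling_overview} with $|\weightl{2}_j|$ replaced by $\norm{\weightl{2}_j}$ throughout.
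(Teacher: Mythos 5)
Your proof is correct, and its core --- the per-neuron rescaling $(\bnvarl{1}_j,\bnmeanl{1}_j)\mapsto\eta_j(\bnvarl{1}_j,\bnmeanl{1}_j)$, $\weightl{2}_j\mapsto\weightl{2}_j/\eta_j$, combined with AM--GM $\tfrac12(a^2+b^2)\ge ab$ made tight at $\eta_j=\bigl(\norm{\weightl{2}_j}/\rho_j\bigr)^{1/2}$ with $\rho_j:=\sqrt{{\bnvarl{1}_j}^2+{\bnmeanl{1}_j}^2}$ --- is exactly the paper's argument. The one place you genuinely diverge is in disposing of the first-layer penalty $\tfrac{\beta}{2}\sum_j\norm{\weightl{1}_j}^2$: the paper delegates this to a dual-domain argument (its ``effective regularization'' appendix, which notes that the BN-normalized direction vector is unit-norm regardless of $\norm{\weightl{1}_j}$, so constraining or penalizing that norm leaves the dual constraint, and hence the optimal value, unchanged), whereas you argue directly in the primal by shrinking $\weightl{1}_j\mapsto\varepsilon\weightl{1}_j$ and letting $\varepsilon\downarrow 0$. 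Both routes exploit the same scale-invariance of BN; yours is more elementary and has the virtue of making explicit something the paper leaves implicit, namely that the infimum over $\norm{\weightl{1}_j}$ is not attained, so the ``equivalence'' is an equality of optimal values realized only in a limit. Your separate handling of degenerate neurons ($\rho_j=0$, $\weightl{2}_j=\vec{0}$, or constant pre-activation, where the normalization is undefined) is likewise a bookkeeping detail the paper skips but that is needed for the rescaling map to be well defined.
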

Using the equivalence in Lemma \ref{lemma:scaling_twolayer_vector}, we then take dual of \eqref{eq:twolayer_vector_primal_scaled} with respect to the output layer weights $\weightmatl{2}$ to obtain the following dual problem
\begin{align}\label{eq:twolayer_vector_dual}
   p_{2v}^* \geq  d_{2v}^*:=&\max_{\dualmat}   -\frac{1}{2}\normf{\dualmat-\labelmat}^2+\frac{1}{2}\normf{\labelmat}^2  \text{ s.t. } \max_{\theta \in \Theta_s } \norm{ \dualmat^T\relu{\bn{\data \weightl{1}}}} \leq \beta.
\end{align}
\begin{theo} \label{theo:twolayer_vector_convexprogram}
The non-convex training problem \eqref{eq:twolayer_vector_primal} can be cast as the following convex program
\begin{align} \label{eq:twolayer_vector_convexprogram}
    & p_{v2}^*=\min_{\vec{S}_i}\frac{1}{2}\left\|\sum_{i=1}^P \diag_i \vec{U}^\prime \vec{S}_i-\labelmat \right\|_F^2 +\beta \sum_{i=1}^P \|\vec{S}_i\|_{c_i,*} 
\end{align}
for the norm $\|\cdot\|_{c_i,*}$ defined as $$\|\vec{S}\|_{c_i,*}:=\min_{t \geq 0} t \text{ s.t. } \vec{S} \in t\mathrm{conv}\{\vec{Z}=\vec{h}\vec{g}^T : (2\diag_i-\vec{I}_n)\vec{U}^\prime\vec{h} \geq 0,\, \|\vec{Z}\|_* \leq 1\},$$
where $\mathrm{conv}\{\cdot\}$ denotes the convex hull of its argument and $\vec{U}^\prime=\begin{bmatrix}\vec{U} &\frac{1}{\sqrt{n}}\vec{1} \end{bmatrix}$ as in Theorem \ref{theo:twolayer_convexprogram}.
\end{theo}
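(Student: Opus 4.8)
The plan is to sandwich the non-convex value $p_{v2}^*$ between the dual value $d_{2v}^*$ of \eqref{eq:twolayer_vector_dual} and the value of the convex program \eqref{eq:twolayer_vector_convexprogram}, showing all three coincide. Weak duality $p_{2v}^*\ge d_{2v}^*$ is already in hand, so the first task is to rewrite the semi-infinite dual constraint $\max_{\theta\in\Theta_s}\norm{\dualmat^T\relu{\bn{\data\weightl{1}}}}\le\beta$ in finite-dimensional form. The key observation is a reparametrization of the batch-normalized feature map: with the compact SVD $(\vec I_n-\tfrac1n\vec1\vec1^T)\data=\vec U\bm\Sigma\vec V^T$, for any $\weightl{1}$ with nonzero centered projection the normalized vector $(\vec I_n-\tfrac1n\vec1\vec1^T)\data\weightl{1}/\norm{(\vec I_n-\tfrac1n\vec1\vec1^T)\data\weightl{1}}$ ranges over all unit vectors in the column space of $\vec U$ (since $\bm\Sigma\vec V^T$ is surjective onto $\mathbb R^r$), so using $\bnvarl{1}_j{}^2+\bnmeanl{1}_j{}^2=1$ on $\Theta_s$ we get $\bn{\data\weightl{1}}=\vec U'\vec g$ with $\norm{\vec g}_2=1$, and conversely every such $\vec g$ is realized; here $\vec1$ is orthogonal to the column space of $\vec U$, so $\vec U'$ has orthonormal columns. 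Hence $\{\relu{\bn{\data\weightl{1}}}:\theta\in\Theta_s\}=\{\relu{\vec U'\vec g}:\norm{\vec g}_2=1\}$.

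Next I invoke the hyperplane-arrangement decomposition used in Theorem \ref{theo:twolayer_convexprogram}: on the cone $\{\vec g:(2\diag_i-\vec I_n)\vec U'\vec g\ge0\}$ one has $\relu{\vec U'\vec g}=\diag_i\vec U'\vec g$, and the unit sphere is covered by these $P$ cones. Since $\vec g\mapsto\norm{\dualmat^T\diag_i\vec U'\vec g}_2$ is positively homogeneous and the cones contain the origin, the dual constraint is equivalent to $\rho_i(\dualmat):=\max\{\norm{\dualmat^T\diag_i\vec U'\vec g}_2:\norm{\vec g}_2\le1,\ (2\diag_i-\vec I_n)\vec U'\vec g\ge0\}\le\beta$ for all $i\in[P]$. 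Dualizing the Euclidean norm and recognizing $\vec g\vec b^T$ as a rank-one matrix with $\norm{\vec g\vec b^T}_*=\norm{\vec g}_2\norm{\vec b}_2$, I rewrite $\rho_i(\dualmat)=\max\{\langle(\vec U')^T\diag_i\dualmat,\,\vec h\vec g^T\rangle:(2\diag_i-\vec I_n)\vec U'\vec h\ge0,\ \norm{\vec h\vec g^T}_*\le1\}$, which is exactly the polar gauge of the convex set $\mathrm{conv}\{\vec h\vec g^T:(2\diag_i-\vec I_n)\vec U'\vec h\ge0,\ \norm{\vec h\vec g^T}_*\le1\}$ that defines $\norm{\cdot}_{c_i,*}$; that is, $\rho_i(\dualmat)$ equals the polar of $\norm{(\vec U')^T\diag_i\dualmat}_{c_i,*}$.

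With this identification, $d_{2v}^*$ is a finite-dimensional concave maximization over $\dualmat$ subject to the $P$ convex constraints $\rho_i(\dualmat)\le\beta$; it is strictly feasible at $\dualmat=\vec0$ since $\beta>0$, so Slater's condition yields strong duality and I can compute its dual. Using the Fenchel conjugate of $\tfrac12\normf{\cdot-\labelmat}^2$ together with the elementary fact that $\sup_{\vec S}\{\langle\vec R,\vec S\rangle-\beta\norm{\vec S}_{c_i,*}\}$ equals $0$ when the polar of $\norm{\vec R}_{c_i,*}$ is at most $\beta$ and $+\infty$ otherwise, the dual of $d_{2v}^*$ is precisely \eqref{eq:twolayer_vector_convexprogram} upon introducing variables $\vec S_i$ with $\diag_i\vec U'\vec S_i$ as the $i$-th block; hence $d_{2v}^*$ equals the value of \eqref{eq:twolayer_vector_convexprogram}.

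Finally, for the reverse inequality I construct an explicit network from an optimal $\{\vec S_i^\star\}$: by Carathéodory each $\vec S_i^\star$ is a finite convex combination of scaled rank-one terms $\vec h_{ik}\vec g_{ik}^T$ with $\vec h_{ik}$ in the $i$-th cone and $\norm{\vec h_{ik}\vec g_{ik}^T}_*\le1$, and each such term becomes one hidden neuron by inverting the reparametrization — choosing $\weightl{1},\bnvar,\bnmean$ with $\bnvar^2+\bnmean^2=1$ realizing the BN direction $\vec U'\vec h_{ik}/\norm{\vec h_{ik}}_2$, so that $\relu{\bn{\data\weightl{1}}}=\diag_i\vec U'\vec h_{ik}$ because $\vec h_{ik}$ lies in cone $i$, with output weight proportional to $\vec g_{ik}$ — and summing these neurons gives a feasible $\theta$ whose objective in \eqref{eq:twolayer_vector_primal_scaled} is at most the value of \eqref{eq:twolayer_vector_convexprogram}. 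Combined with Lemma \ref{lemma:scaling_twolayer_vector}, chaining $p_{2v}^*\ge d_{2v}^*=\text{(\ref{eq:twolayer_vector_convexprogram})}\ge p_{2v}^*$ closes the loop. I expect the main obstacle to be the rigorous justification of strong duality through the reparametrized semi-infinite constraint — specifically, that the maximization over the non-convex set $\Theta_s$ genuinely reduces, via the finitely many arrangement cones, to the convex quantities $\rho_i$ with no duality gap — together with the bookkeeping in the neuron-extraction step: handling the sign of $\bnvar$, the degenerate case $\bnvar=0$, and the passage from an $(r+1)$-dimensional $\vec h$ back to a $d$-dimensional $\weightl{1}$.
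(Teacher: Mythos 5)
Your proposal is correct and follows essentially the same route as the paper: the reparametrization $\bn{\data\weightl{1}}=\vec{U}'\vec{s}'$ over the unit sphere, the hyperplane-arrangement decomposition of the dual constraint into the $P$ convex sets $\mathcal{K}_i=\mathrm{conv}\{\vec{h}\vec{g}^T:(2\diag_i-\vec{I}_n)\vec{U}'\vec{h}\geq0,\ \|\vec{h}\vec{g}^T\|_*\leq1\}$, and the bidual computation yielding the gauge-regularized program. The only difference is one of exposition: the paper cites \citet{sahiner2021vectoroutput} for the strong-duality bound $m^*\leq nC+1$ and for the final dualization, whereas you spell out the Slater/Fenchel-gauge argument and the Carath\'eodory-based neuron extraction explicitly, which closes the sandwich $p_{2v}^*\geq d_{2v}^*=\eqref{eq:twolayer_vector_convexprogram}\geq p_{2v}^*$ without external appeal.
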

\begin{remark}
    We note that similar to the scalar output case, vector output BN networks training problem involves a whitened data matrix. Furthermore, unlike \eqref{eq:twolayer_convexprogram}, the nuclear norm regularization in \eqref{eq:twolayer_vector_convexprogram} encourages a low rank piecewise linear function at the network output.
\end{remark}

 The above problem can be solved in $\mathcal{O}((n/d)^d)$ time in the worst case \citep{sahiner2021vectoroutput}, which is polynomial for fixed $d$. We note that the problem in \eqref{eq:twolayer_vector_convexprogram} is similar to convex semi non-negative matrix factorizations \citep{semi_nmf}. Moreover, in certain practically relevant cases, \eqref{eq:twolayer_vector_convexprogram} can be significantly simplified so that one can even obtain closed-form solutions as demonstrated below.
\begin{theo} \label{theo:twolayer_vector_closedform}
Let $\labelmat$ be a one-hot encoded label matrix, and $\data$ be a full row-rank data matrix with $n \leq d$, then an optimal solution to \eqref{eq:twolayer_vector_primal_scaled} admits
\begin{align*}
   \begin{split}
       &\left({\weightl{1}_j}^* ,{\weightl{2}_j}^*\right)= 
   \left(\data^\dagger \labelvec_j, \left(\|\labelvec_j\|_2-\beta\right)_+\vec{e}_j\right) ,\;
      \begin{bmatrix} {\bnvarl{1}_j}^* \\{ \bnmeanl{1}_j}^*\end{bmatrix} =\frac{1}{\|\labelvec_j\|_2}\begin{bmatrix} \|\labelvec_j-\frac{1}{n}\vec{1}\vec{1}^T \labelvec_j\|_2\\ \frac{1}{{\sqrt{n}} }\vec{1}^T\labelvec_j \end{bmatrix}    
   \end{split}
\end{align*}
$\forall j \in [C]$, where $\vec{e}_j$ is the $j^{th}$ ordinary basis vector.
\end{theo}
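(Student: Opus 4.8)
The plan is to reduce the vector-output problem to the scalar-output analysis of Theorem~\ref{theo:twolayer_closedform}, exploiting both the column-separability of the objective and the structure of a one-hot label matrix. First I would observe that, since $\labelmat$ is one-hot encoded, its columns $\labelvec_j$ are nonnegative, so $\relu{\labelvec_j}=\labelvec_j$ and the sign alternation $(-1)^j$ that appears in the scalar result is no longer needed: for each output coordinate $j\in[C]$ we only need neurons whose contribution points in the $+\labelvec_j$ direction. The candidate construction is exactly the one stated: assign one hidden neuron per class, set $\weightl{1}_j{}^*=\data^\dagger\labelvec_j$ (well-defined and giving $\data\weightl{1}_j{}^*=\labelvec_j$ because $\data$ is full row-rank with $n\le d$), choose $\bnvarl{1}_j{}^*,\bnmeanl{1}_j{}^*$ to normalize as dictated by the BN formula \eqref{eq:bn_def} so that $\relu{\bn{\data\weightl{1}_j{}^*}}=\labelvec_j/\|\labelvec_j\|_2$, and set $\weightl{2}_j{}^*=(\|\labelvec_j\|_2-\beta)_+\vec{e}_j$ so the $j$-th neuron feeds only the $j$-th output. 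One must check the normalization identity: plugging $\data\weightl{1}_j{}^*=\labelvec_j$ into \eqref{eq:bn_def} gives numerator $(\vec{I}_n-\tfrac1n\vec1\vec1^T)\labelvec_j$ scaled by $\bnvarl{1}_j{}^*$ plus the shift $\bnmeanl{1}_j{}^*\vec1/\sqrt n$; with the stated values of $(\bnvarl{1}_j{}^*,\bnmeanl{1}_j{}^*)$ this telescopes to $\labelvec_j/\|\labelvec_j\|_2$ (using $\labelvec_j=(\vec I_n-\tfrac1n\vec1\vec1^T)\labelvec_j+\tfrac1n\vec1\vec1^T\labelvec_j$), and since $\labelvec_j\ge 0$ the ReLU is inactive. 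A short computation also confirms $(\bnvarl{1}_j{}^*)^2+(\bnmeanl{1}_j{}^*)^2=1$, so the point lies in $\Theta_s$.

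Next I would evaluate the scaled objective \eqref{eq:twolayer_vector_primal_scaled} at this candidate. The network output is $\sum_j \weightl{2}_j{}^* (\labelvec_j/\|\labelvec_j\|_2)^T$ read the right way, i.e.\ column $j$ of $f_{\theta,2}(\data)$ equals $(\|\labelvec_j\|_2-\beta)_+\,\labelvec_j/\|\labelvec_j\|_2$; by column-separability of the Frobenius loss and of the group-$\ell_1$ penalty $\sum_j\|\weightl{2}_j\|_2$, the objective decouples across $j$ into $C$ independent scalar subproblems of exactly the form already solved in Theorem~\ref{theo:twolayer_closedform} (with label $\labelvec_j$, the alternation dropped). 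Hence the per-column objective value is $\tfrac12\big(\min(\|\labelvec_j\|_2,\beta)\big)^2$ summed over $j$, matching the scalar bound.

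For the matching lower bound I would invoke the dual \eqref{eq:twolayer_vector_dual}: it suffices to exhibit a dual feasible $\dualmat$ whose objective equals the primal value above. The natural choice is $\dualmat$ with $j$-th column $\min(\beta/\|\labelvec_j\|_2,1)\,\labelvec_j$, i.e.\ $\dualmat-\labelmat$ has $j$-th column $-(1-\min(\beta/\|\labelvec_j\|_2,1))\labelvec_j$. Feasibility requires $\max_{\theta\in\Theta_s}\|\dualmat^T\relu{\bn{\data\weightl{1}}}\|_2\le\beta$; because the BN output $\relu{\bn{\data\weightl{1}}}$ is a unit-norm nonnegative vector whose pre-ReLU part ranges over (a dense subset of) all unit vectors orthogonal to... more precisely one bounds $\|\dualmat^T u\|_2$ over all $u=\relu{\bn{\data\weightl{1}}}$ with $\|u\|_2\le 1$; the columns of $\dualmat$ being scalar multiples of the $\labelvec_j$, which for a one-hot matrix have disjoint supports, makes $\dualmat^T\dualmat$ diagonal with entries $\le\beta^2$, giving $\|\dualmat^T u\|_2\le\beta\|u\|_2\le\beta$. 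Then the dual objective $-\tfrac12\|\dualmat-\labelmat\|_F^2+\tfrac12\|\labelmat\|_F^2$ evaluates columnwise to $\sum_j\tfrac12\big(\min(\|\labelvec_j\|_2,\beta)\big)^2$, matching the primal. Strong duality (already asserted to hold in this excerpt's setup, and provable via Theorem~\ref{theo:twolayer_vector_convexprogram}) then certifies optimality of the constructed weights.

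The main obstacle I anticipate is the dual-feasibility verification: one must control $\max_{\theta\in\Theta_s}\|\dualmat^T\relu{\bn{\data\weightl{1}}}\|_2$ over the full continuous parameter set, not just at the candidate neurons. The disjoint-support structure of a one-hot $\labelmat$ is what makes this tractable — it forces $\dualmat$ to have orthogonal columns so the operator norm is just the largest column norm — but writing this cleanly requires care about the ReLU (which can only decrease $\|u\|_2$) and about the fact that $\bn{\cdot}$ produces vectors of norm exactly $|\bnvarl{1}|\le 1$ plus a $\vec1$-component of size $|\bnmeanl{1}|/\sqrt n$, all under the constraint $(\bnvarl{1})^2+(\bnmeanl{1})^2=1$; a clean way is to note any such pre-ReLU vector has $\ell_2$ norm at most $1$, and ReLU is $1$-Lipschitz and fixes $0$, so $\|u\|_2\le 1$. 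Everything else is the routine closed-form bookkeeping already rehearsed in the scalar case.
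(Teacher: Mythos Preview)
Your approach is essentially the paper's: build the primal candidate, build the dual candidate $\dualmat^*$ with columns $\min(1,\beta/\|\labelvec_j\|_2)\,\labelvec_j$, verify dual feasibility by bounding $\|\dualmat^{*T}u\|_2\le\sigma_{\max}(\dualmat^*)\|u\|_2\le\beta$ using the disjoint-support/orthogonal-column structure of one-hot labels (the paper states this step more tersely as replacing the constraint by $\max_{\|\vec u\|=1}\|\dualmat^T\vec u\|_2\le\beta$), and then check that primal and dual objective values agree. One arithmetic slip to fix when you write it up: the per-column optimal value is not $\tfrac12\bigl(\min(\|\labelvec_j\|_2,\beta)\bigr)^2$ but rather $\beta\|\labelvec_j\|_2-\tfrac12\beta^2$ when $\beta\le\|\labelvec_j\|_2$ and $\tfrac12\|\labelvec_j\|_2^2$ otherwise --- both your primal and dual expressions actually evaluate to this common quantity, so the matching argument still goes through.
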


\section{Convolutional Neural Networks}
In this section, we extend our analysis to CNNs. Thus, instead of $\data$, we operate on the patch matrices that are subsets of columns extracted from $\data $ and we denote them as $\data_k \in \mathbb{R}^{n \times h}$, where $h$ is the filter size and $k$ is the patch index. With this notation, a convolution operation between the data matrix $\data$ and an arbitrary filter $\firstwcnn \in \mathbb{R}^h$ can be represented as $\{\data_k \firstwcnn\}_{k=1}^K$. Therefore, given patch matrices $\{\data_k\}_{k=1}^K$, regularized training problem with BN can be formulated as follows
\begin{align} \label{eq:cnn_primal}
&p_{2c}^*:=\min_{\theta \in \Theta_c} \frac{1}{2}\left \| \sum_{j=1}^{m_1} \sum_{k=1}^K \relu{\bncnn{\data_k\firstwcnn_j}}\secondwcnn_{j} -\labelvec \right\|_2 + \frac{\beta}{2} \sum_{j=1}^{m_1} (\|\firstwcnn_j\|_2^2+ \bnvar_j^2+\bnmean_j^2+ \secondwcnn_j^2),
\end{align}
where given $\mu_j=\frac{1}{nK}\sum_{k=1}^K \vec{1}^T \data_k\firstwcnn_j$, BN is defined as
\begin{align*}
   \bncnn{\data_k\firstwcnn_j}&:=\frac{ \data_k \firstwcnn_j-\mu_j\vec{1}}{\sqrt{\sum_{k^\prime=1}^K\|\data_{k^\prime} \firstwcnn_j-\mu_j\vec{1}\|_2^2}}\bnvarl{1}_j+\frac{\bnmeanl{1}_j \vec{1}}{\sqrt{nK}}.
\end{align*}
Using Lemma \ref{lemma:scaling_overview}, the dual problem with respect to $\weightl{2}$ is
\begin{align}\label{eq:cnn_dual}
   p_{2c}^* \geq  d_{2c}^*:=&\max_{\dual}   -\frac{1}{2}\|\dual-\labelvec\|_2^2+\frac{1}{2}\|\labelvec\|_2^2  \text{ s.t. } \max_{\theta \in \Theta_s } \left \vert \dual^T\sum_{k=1}^K \relu{\bncnn{\data_k\firstwcnn}}\right \vert \leq \beta .
\end{align}
Next, we state the equivalent convex program for CNNs.
\begin{theo} \label{theo:cnn_convexprgoram}
The non-convex training problem in \eqref{eq:cnn_primal} can be cast as a convex program
{\small
\begin{align}\label{eq:cnn_convexprogram}
  &\min_{\vec{q}_{i}, \vec{q}_{i}^\prime \in \mathbb{R}^{h+1}} \frac{1}{2} \left \| \sum_{i=1}^{P_c}\sum_{k=1}^{K} \diag_{ik }\vec{U}_{Mk}^\prime (\vec{q}_{i}-\vec{q}_{i}^\prime)\right\| +\beta \sum_{i=1}^{P_c}  (\norm{\vec{q}_{i}} + \norm{\vec{q}_{i}^\prime}) \text{ s.t.} \begin{split}
      &(2\diag_{ik} -\vec{I}_n) \vec{U}_{Mk}^\prime \vec{q}_{i}\geq 0\\
      &(2\diag_{ik} -\vec{I}_n) \vec{U}_{Mk}^\prime \vec{q}_{i}^\prime\geq 0 
  \end{split},\forall i,k
\end{align}}
where we first define a new matrix $\vec{M}=[\data_1;\, \data_2;\, \ldots \, \data_K]\in \mathbb{R}^{nK\times h}$ and then use the following notations: the compact SVD of the zero mean form of this matrix is  $(\vec{I}_{nK} -\frac{1}{nK}\vec{1}\vec{1}^T)\vec{M}:= \vec{U}_M\bm{\Sigma}_M\vec{V}_M^T$, and $\vec{U}_{M}^\prime:=\begin{bmatrix} \vec{U}_{M}& \frac{\vec{1}}{\sqrt{nK}} \end{bmatrix}=\begin{bmatrix}\vec{U}_{M1}^\prime ; \vec{U}_{M2}^\prime ;\ldots &\vec{U}_{MK}^\prime  \end{bmatrix}$.
\end{theo}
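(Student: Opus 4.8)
The plan is to follow the blueprint of the proof of Theorem~\ref{theo:twolayer_convexprogram}, starting from the dual problem \eqref{eq:cnn_dual} and reducing its semi-infinite constraint to finitely many convex constraints indexed by joint hyperplane arrangements over the $K$ patches. The key preliminary reduction is to stack the patches as $\vec{M}=[\data_1;\dots;\data_K]\in\mathbb{R}^{nK\times h}$ and observe that $[\data_1\firstwcnn;\dots;\data_K\firstwcnn]=\vec{M}\firstwcnn$ while $\mu=\tfrac{1}{nK}\vec{1}^\top\vec{M}\firstwcnn$, so the centered concatenated filter response equals $(\vec{I}_{nK}-\tfrac{1}{nK}\vec{1}\vec{1}^\top)\vec{M}\firstwcnn=\vec{U}_M\bm{\Sigma}_M\vec{V}_M^\top\firstwcnn\in\mathrm{range}(\vec{U}_M)$. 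Since $\bncnn{\cdot}$ normalizes this vector to unit norm, scales it by $\bnvar$, and adds $\tfrac{\bnmean}{\sqrt{nK}}\vec{1}$, in block form we get $\bncnn{\data_k\firstwcnn}=\vec{U}_{Mk}^\prime\vec{g}$ for a common $\vec{g}\in\mathbb{R}^{h+1}$, and under the normalization $\bnvar^2+\bnmean^2=1$ coming from $\Theta_s$ (Lemma~\ref{lemma:scaling_overview}) we have $\norm{\vec{g}}\le1$; conversely every such $\vec{g}$ is attainable by choosing $\firstwcnn$ with $\bm{\Sigma}_M\vec{V}_M^\top\firstwcnn$ proportional to the first $h$ coordinates of $\vec{g}$ and reading off $(\bnvar,\bnmean)$ from the remaining coordinate.

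Next I would introduce joint arrangements: for a given $\vec{g}$, record the $K$-tuple $(\diag_{i1},\dots,\diag_{iK})$ with $\diag_{ik}=\text{diag}(\mathbbm{1}[\vec{U}_{Mk}^\prime\vec{g}\ge0])$, and enumerate the $P_c$ distinct tuples realizable over $\vec{g}\in\mathbb{R}^{h+1}$. On arrangement $i$ one has $\sum_{k}\relu{\vec{U}_{Mk}^\prime\vec{g}}=\sum_{k}\diag_{ik}\vec{U}_{Mk}^\prime\vec{g}$ exactly when $(2\diag_{ik}-\vec{I}_n)\vec{U}_{Mk}^\prime\vec{g}\ge0$ for every $k$, so the dual constraint in \eqref{eq:cnn_dual} becomes: for every $i\in[P_c]$,
\[
\max_{\vec{g}}\ \Big|\dual^\top\textstyle\sum_{k}\diag_{ik}\vec{U}_{Mk}^\prime\vec{g}\Big|\le\beta
\quad\text{s.t.}\quad \norm{\vec{g}}\le1,\ \ (2\diag_{ik}-\vec{I}_n)\vec{U}_{Mk}^\prime\vec{g}\ge0\ \ \forall k,
\]
where relaxing the sphere to the ball is harmless because the objective and constraints are positively homogeneous in $\vec{g}$.

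Then I would bidualize. Splitting $|\cdot|$ into its two sign choices introduces the two families $\{\vec{q}_i\}$ and $\{\vec{q}_i^\prime\}$ appearing in \eqref{eq:cnn_convexprogram}; dualizing each resulting linear-objective convex maximization over $\vec{g}$ — the unit-ball constraints producing the $\norm{\vec{q}_i}+\norm{\vec{q}_i^\prime}$ penalties and the polyhedral constraints producing the sign constraints $(2\diag_{ik}-\vec{I}_n)\vec{U}_{Mk}^\prime\vec{q}_i\ge0$, $(2\diag_{ik}-\vec{I}_n)\vec{U}_{Mk}^\prime\vec{q}_i^\prime\ge0$ — and substituting back into $\max_{\dual}-\tfrac12\norm{\dual-\labelvec}^2+\tfrac12\norm{\labelvec}^2$ yields exactly the program \eqref{eq:cnn_convexprogram}. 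To close the loop I would note that \eqref{eq:cnn_convexprogram} is convex, feasible (take $\vec{q}_i=\vec{q}_i^\prime=\vec{0}$), and has an attained minimum, so conic duality gives that its value equals $d_{2c}^*$; and from an optimal $\{\vec{q}_i,\vec{q}_i^\prime\}$ I would reconstruct network parameters — one hidden unit per nonzero block, with $(\firstwcnn_j,\bnvar_j,\bnmean_j)$ recovered as in the first step and $\secondwcnn_j=\pm\norm{\vec{q}_i}$ (resp.\ $\norm{\vec{q}_i^\prime}$) — attaining the same objective value in \eqref{eq:cnn_primal}, hence $p_{2c}^*=d_{2c}^*$ and both coincide with the convex optimum.

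The main obstacle is the first step: proving that the per-patch BN maps, which share a \emph{single} mean $\mu$ and a \emph{single} normalizing constant aggregated over \emph{all} patches, collapse \emph{exactly} onto $\{\vec{U}_{Mk}^\prime\vec{g}\}_{k}$ with $\norm{\vec{g}}\le1$ — in particular the surjectivity direction, which needs a careful preimage argument through $\bm{\Sigma}_M\vec{V}_M^\top$, and the degenerate case where the centered response vanishes. A secondary difficulty, absent in the fully-connected two-layer setting, is the bookkeeping of the joint arrangements $\{\diag_{ik}\}_{k=1}^K$ across patches and verifying that replacing the semi-infinite constraint by the finite family and then dualizing introduces no duality gap; this is precisely where the two-layer argument must be extended rather than invoked verbatim.
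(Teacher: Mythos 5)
Your proposal is correct and follows essentially the same route as the paper's proof: stack the patches into $\vec{M}$, absorb the shared BN mean and normalization into the whitened blocks $\vec{U}_{Mk}^\prime$ via the change of variables $\vec{s}^\prime=\begin{bmatrix}\bnvar\vec{q}^T & \bnmean\end{bmatrix}^T$ with $\vec{q}=\bm{\Sigma}_M\vec{V}_M^T\firstwcnn$, enumerate joint hyperplane arrangements $\{\diag_{ik}\}_{k=1}^K$ over the $K$ patch blocks, and then repeat the bidualization and strong-duality verification of Theorem \ref{theo:twolayer_convexprogram}. The two obstacles you flag (exactness of the collapse onto $\{\vec{U}_{Mk}^\prime\vec{g}\}_k$ and the per-patch arrangement bookkeeping) are precisely the points the paper dispatches with these variable changes before deferring the remaining duality steps to the two-layer argument.
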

Theorem \ref{theo:cnn_convexprgoram} proves that \eqref{eq:cnn_primal} can be equivalently stated as a finite-dimensional convex program with $2r_c P_{c}$ variables (see Remark \ref{rem:cnn_complexity} for the definitions) and $2n P_{c}K$ constraints. Since $P_{c}$ is polynomial in $n$ and $d$ as detailed in the remark below, we can solve \eqref{eq:cnn_convexprogram} in polynomial-time.
 
 \begin{remark} \label{rem:cnn_complexity}
The number of hyperplane  arrangements for CNNs rely on the rank of the patch matrices instead of $\data$. Particularly, the number of hyperplane arrangements for $\vec{M}$, i.e., denoted as $P_c$, is upperbounded as $    P_{c}\le 2r_c(e(n-1)/r_c)^{r_c}$,
where $r_c:=\mathrm{rank}(\vec{M}) \leq h \ll d$. Therefore, given a fixed filter size $h$, $P_{c}$ is polynomial in $n$ and $d$ even when $\data$ is full rank (see \citet{ergen2020cnn} for more details).
\end{remark}

\section{Deep Networks} \label{sec:deeper_networks}
We now analyze $L$-layer neural networks trained via the non-convex optimization problem in \eqref{eq:overview_primal}. Below, we provide an explicit formulation for the last two layers' weights.
\begin{theo} \label{theo:deep_closedform}
Suppose the network is overparameterized such that the range of $\actl{L-2}$ is $\mathbb{R}^{n}$. Then an optimal solution for the last two layer weights in \eqref{eq:overview_primal_scaled} admits
{\small
\begin{align*}
  & \left({\weightl{L-1}_j}^* ,{\weightscalarl{L}_j}^*\right)= \left({\actl{L-2}}^\dagger \relu{(-1)^{j}\labelvec}, (-1)^{j}\relu{\| \relu{(-1)^{j}\labelvec}\|_2-\beta }\right)\\
   &
  \begin{bmatrix} {\bnvarl{L-1}_j}^* \\ {\bnmeanl{L-1}_j}^*\end{bmatrix} =  \frac{1}{\|\relu{(-1)^{j}\labelvec}\|_2}\begin{bmatrix} \|\relu{(-1)^{j}\labelvec}-\frac{1}{n}\vec{1}\vec{1}^T \relu{(-1)^{j}\labelvec}\|_2 \\ \frac{1}{{\sqrt{n}} }\vec{1}^T\relu{(-1)^{j}\labelvec} \end{bmatrix} 
\end{align*}}
for $j=1,2$. Thus, strong duality holds, i.e., $p_L^*=d_L^*$.
\end{theo}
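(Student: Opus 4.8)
The plan is to reduce the $L$-layer statement to the two-layer result of Theorem \ref{theo:twolayer_closedform} by treating the last two layers as a two-layer batch-normalized network whose ``data matrix'' is the penultimate activation $\actl{L-2}$. First I would apply Lemma \ref{lemma:scaling_overview} to replace \eqref{eq:overview_primal} by the scaled problem \eqref{eq:overview_primal_scaled}, in which only the output weights $\weightl{L}$ are penalized (by $\beta\normone{\weightl{L}}$) and $\theta\in\Theta_s$ forces ${\bnvarl{L-1}_j}^2+{\bnmeanl{L-1}_j}^2=1$. Splitting the variables into the first $L-2$ layers $\theta_{1:L-2}$ (which appear in \emph{neither} the regularizer nor any constraint of \eqref{eq:overview_primal_scaled}, only through $\actl{L-2}$) and the last two layers, for every fixed $\theta_{1:L-2}$ the inner minimization over $(\weightmatl{L-1},\bnvarvecl{L-1},\bnmeanvecl{L-1},\weightl{L})$ is literally the two-layer problem \eqref{eq:twolayer_primal_scaled} with $\data$ replaced by $\actl{L-2}$, since $\relu{\bn{\actl{L-2}\weightmatl{L-1}}}\weightl{L}$ has the form $f_{\theta,2}(\actl{L-2})$.

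For the \emph{upper bound}, I would use the overparameterization hypothesis to pick $\theta_{1:L-2}$ so that $\actl{L-2}$ has rank $n$ (a ReLU--BN layer can realize scaled basis directions, e.g.\ $\relu{(\vec e_i-\vec e_j)/\sqrt2}$), which costs nothing in \eqref{eq:overview_primal_scaled}; then instantiate Theorem \ref{theo:twolayer_closedform} with $\data\leftarrow\actl{L-2}$ on two of the $m_{L-1}$ hidden units (zeroing the rest), which gives exactly the claimed closed forms for ${\weightl{L-1}_j}^*,{\weightscalarl{L}_j}^*,{\bnvarl{L-1}_j}^*,{\bnmeanl{L-1}_j}^*$ and attains the two-layer optimum. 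That optimum is independent of which full-row-rank matrix is used (the fitted output and the $\ell_1$ term depend only on $\labelvec,\beta$, because $\relu{\bn{\actl{L-2}{\weightl{L-1}_j}^*}}=\relu{(-1)^j\labelvec}/\norm{\relu{(-1)^j\labelvec}}$), and by Theorem \ref{theo:twolayer_closedform} it equals the two-layer dual value; call it $v^\star$. Hence $p_L^*\le v^\star$. For the matching \emph{lower bound} I would use weak duality $p_L^*\ge d_L^*$ from \eqref{eq:dual_overview} and show $d_L^*=v^\star$. The point is that the dual feasibility in \eqref{eq:dual_overview} is governed entirely by the set $\{\relu{\bn{\actl{L-2}\weightl{L-1}}}:\theta\in\Theta_s\}$; since $\actl{L-2}$ can be made full row rank (and other choices only yield vectors in proper subspaces), $\actl{L-2}\weightl{L-1}$ ranges over all of $\mathbb{R}^n$, so using \eqref{eq:bn_def} and the unit-circle constraint this set equals $\{\relu{\vec u\,\bnvar+\bnmean\vec1/\sqrt n}:\vec u\perp\vec1,\ \norm{\vec u}=1,\ \bnvar^2+\bnmean^2=1\}$, which is exactly the set in the two-layer dual \eqref{eq:twolayer_dual} for full-row-rank $\data$. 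Thus the two dual problems coincide and $d_L^*=v^\star$.

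Combining, $v^\star=d_L^*\le p_L^*\le v^\star$, so every inequality is an equality: strong duality $p_L^*=d_L^*$ holds and the constructed weights are globally optimal. I expect the crux to be the step identifying $\{\relu{\bn{\actl{L-2}\weightl{L-1}}}\}$ with the two-layer reachable set: one must (i) verify that the overparameterization assumption genuinely allows $\actl{L-2}$ to have rank $n$ \emph{after} a ReLU--BN layer, and (ii) handle the degenerate directions where $\actl{L-2}\weightl{L-1}\in\mathrm{span}\{\vec1\}$, for which $\bn{\cdot}$ is not defined by \eqref{eq:bn_def}; these are routine but need care. A minor secondary bookkeeping point is that the unused layer-$(L-1)$ neurons still need BN parameters on the unit circle, which is harmless since their output weights vanish.
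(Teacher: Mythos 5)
Your proposal is correct and follows essentially the same route as the paper: the paper's proof simply applies the steps of Theorem \ref{theo:twolayer_closedform} with $\data$ replaced by the full-row-rank matrix $\actl{L-2}$, and then verifies $p_L^*=d_L^*$ by substituting the resulting weights and the corresponding dual variable into the primal and dual objectives. Your write-up is in fact slightly more careful than the paper's, since you explicitly argue that the dual constraint in \eqref{eq:dual_overview}, maximized over all choices of the first $L-2$ layers, reduces to the two-layer constraint for full-row-rank data (containment of the reachable BN outputs in the unit-circle set already suffices for the sandwich $v^\star\le d_L^*\le p_L^*\le v^\star$) — a point the paper leaves implicit.
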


\subsection{Vector Output Networks}
Vector output deep networks are commonly used for multi-class classification problems, where one-hot encoding is a prevalent strategy to convert categorical variables into a binary representation that can be processed by neural networks. Although empirical evidence, e.g., \citet{papyan2020neuralcollapse}, shows that deep neural networks trained with one-hot encoded labels exhibit some common patterns among classes, it is still theoretically elusive how these patterns emerge. Therefore, in this section, we analyze vector output deep networks trained using one-hot labels via our advocated convex duality.

The following theorem presents a complete characterization for the last two layers' weights.
\begin{theo} \label{theo:deep_vector_closedform}
Let $\labelmat$ be a one-hot encoded label matrix and the network be overparameterized such that the range of $\actl{L-2}$ is $\mathbb{R}^{n}$, then an optimal solution to \eqref{eq:overview_primal} can be found in closed-form as
\begin{align*}
   \begin{split}
       &\left({\weightl{L-1}_j}^* ,{\weightl{L}_j}^*\right)=
   \left({\actl{L-2}}^\dagger \labelvec_j, \left(\|\labelvec_j\|_2-\beta\right)_+\vec{e}_j\right) ,\;\begin{bmatrix} {\bnvarl{L-1}_j}^* \\{ \bnmeanl{L-1}_j}^*\end{bmatrix} =\frac{1}{\|\labelvec_j\|_2}\begin{bmatrix} \|\labelvec_j-\frac{1}{n}\vec{1}\vec{1}^T \labelvec_j\|_2\\ \frac{1}{{\sqrt{n}} }\vec{1}^T\labelvec_j \end{bmatrix}    
   \end{split}
\end{align*}
$\forall j \in [C]$, where $\vec{e}_j$ is the $j^{th}$ ordinary basis vector.
\end{theo}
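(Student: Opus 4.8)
The plan is to reduce Theorem~\ref{theo:deep_vector_closedform} to the already-proved two-layer vector case of Theorem~\ref{theo:twolayer_vector_closedform} by treating the first $L-2$ layers as a feature map. Concretely, write $\vec{H}:=\actl{L-2}\in\mathbb{R}^{n\times m_{L-2}}$ for the activations feeding into the last two layers. The overparameterization hypothesis says that the range of $\vec{H}$ is all of $\mathbb{R}^n$, i.e.\ $\vec{H}$ is full row-rank with $n\le m_{L-2}$. Observe that the last two layers of the deep network---the map $\vec{A}\mapsto\relu{\bn{\vec{A}\weightmatl{L-1}}}\weightmatl{L}$ applied to $\vec{A}=\vec{H}$, together with the weight-decay terms $\tfrac{\beta}{2}(\normf{\weightmatl{L-1}}^2+\norm{\bnvarvecl{L-1}}^2+\norm{\bnmeanvecl{L-1}}^2+\normf{\weightmatl{L}}^2)$---are \emph{exactly} a two-layer vector-output BN network with ``data matrix'' $\vec{H}$. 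After applying the scaling reduction of Lemma~\ref{lemma:scaling_twolayer_vector} to these two layers (which is legitimate because the constraint set $\Theta_s$ only touches the $(L-1)$-st layer BN parameters), the inner problem over $(\weightmatl{L-1},\bnvarvecl{L-1},\bnmeanvecl{L-1},\weightmatl{L})$ for fixed earlier layers is precisely \eqref{eq:twolayer_vector_primal_scaled} with $\data$ replaced by $\vec{H}$ and the same one-hot $\labelmat$. Since $\vec{H}$ is full row-rank with $n\le m_{L-2}$, Theorem~\ref{theo:twolayer_vector_closedform} applies verbatim and gives the stated closed forms with $\data^\dagger$ replaced by $\vec{H}^\dagger=\actl{L-2}{}^\dagger$.

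The key steps in order: (i) fix the first $L-2$ layers and isolate the last-two-layer subproblem; note its objective and constraints match \eqref{eq:twolayer_vector_primal}/\eqref{eq:twolayer_vector_primal_scaled} with data $\vec{H}$; (ii) invoke the overparameterization hypothesis to conclude $\vec{H}$ is full row-rank, $n\le m_{L-2}$, so Theorem~\ref{theo:twolayer_vector_closedform}'s hypotheses hold; (iii) read off the optimal last-two-layer weights and BN parameters from that theorem, substituting $\actl{L-2}{}^\dagger$ for $\data^\dagger$; (iv) observe that the resulting optimal objective value of the subproblem is $\tfrac12\sum_{j}\left(\relu{\norm{\labelvec_j}-\beta}\right)^2$-type expression depending only on $\labelmat$ and \emph{not} on the particular choice of earlier layers (as long as range$(\vec{H})=\mathbb{R}^n$), hence the outer minimization over the first $L-2$ layers is unaffected and the displayed $(\weightmatl{L-1},\weightmatl{L},\bnvarvecl{L-1},\bnmeanvecl{L-1})$ is globally optimal; (v) since the reduction is to the two-layer case where strong duality was established, strong duality $p_L^*=d_L^*$ follows as well.

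The main obstacle is step (iv): one must argue that the value of the inner subproblem genuinely depends only on $\labelmat$ and not on which overparameterized $\actl{L-2}$ is realized, so that decoupling the minimizations is valid; this hinges on the closed-form optimal value from Theorem~\ref{theo:twolayer_vector_closedform} being a function of $\{\norm{\labelvec_j}\}_{j}$ alone. A secondary subtlety is verifying that the weight-decay contributions of the first $L-2$ layers, which we have not optimized, do not interfere---here one simply notes that those terms are nonnegative and can be driven toward their infimum independently, or that the theorem only claims optimality of the last two layers' weights given any optimal configuration of the earlier ones (which is how the statement is phrased: ``an optimal solution $\ldots$''). One also has to confirm the $\bn{\cdot}$ operator in \eqref{eq:bn_def} applied at layer $L-1$ with batch activations $\vec{H}$ coincides notationally with the two-layer BN definition used in Section~\ref{sec:twolayer}, which is immediate since both are the full-batch column-wise normalization with $s=n$.
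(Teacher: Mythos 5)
Your proposal is correct and follows essentially the same route as the paper, whose proof of this theorem is literally ``apply the same steps as in Theorem \ref{theo:twolayer_vector_closedform} with $\data$ replaced by $\actl{L-2}$''; your extra care in step (iv) about the inner optimal value depending only on $\{\|\labelvec_j\|_2\}_j$, and in handling the earlier layers' weight decay, fills in details the paper delegates to its effective-regularization argument (Appendix~\ref{sec:effective_reg}) but does not change the argument.
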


\section{Implicit Regularization of GD for BN }\label{sec:implictreg}
\begin{figure*}[t!]
    \centering
    \begin{subfigure}{.42\textwidth}
   \includegraphics[width=.9\textwidth,height=.7\textwidth]{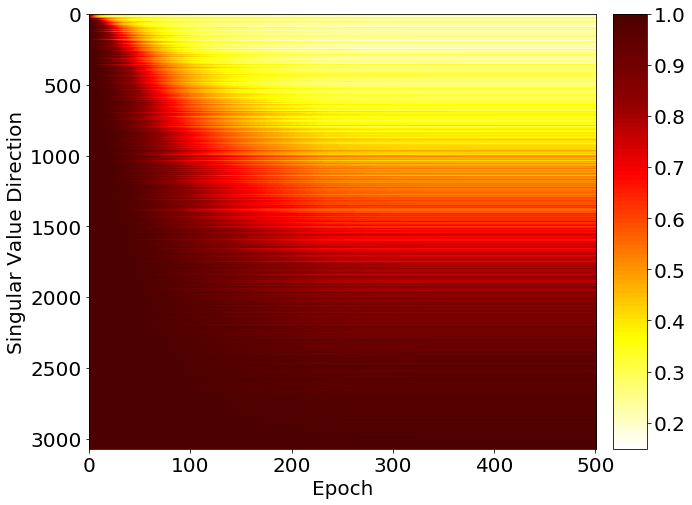}
   \caption{BN model \eqref{eq:twolayer_vector_primal}}\vskip -0.1in
    \label{fig:bn_implicitreg}
    \end{subfigure}%
    \hfill
    \begin{subfigure}{.42\textwidth}
       \includegraphics[width=.9\textwidth,height=.7\textwidth]{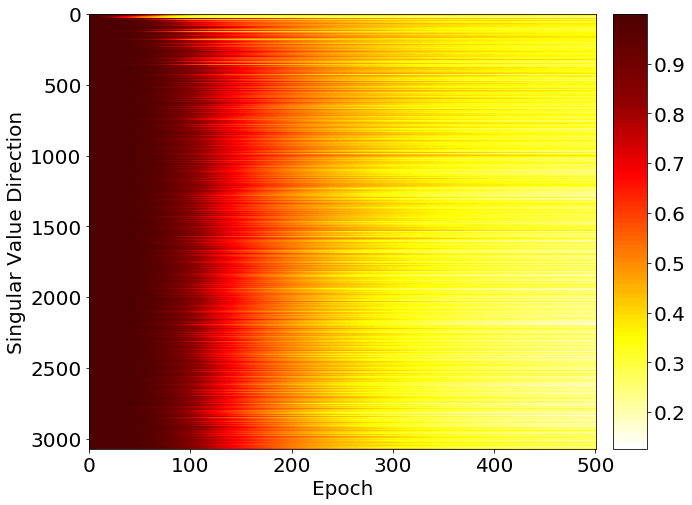}
       \caption{Whitened model \eqref{eq:primal_equivalent_vectoroutput}}\vskip -0.1in
        \label{fig:no_bn_implicitreg}
    \end{subfigure}
    \caption{Cosine similarity of network weights relative to initialization for each singular (in descending order) value direction of the data. Here, we use the same setting in Figure \ref{fig:train_test_implicitreg}. We compare training the BN model \eqref{eq:twolayer_vector_primal} to the equivalent whitened model \eqref{eq:primal_equivalent_vectoroutput}, noting that the BN model fits high singular value directions of the data first, while the whitened model fits all directions equally, thereby overfitting to non-robust features. This shows implicit regularization effect of SGD.}
    \label{fig:singular_directions_implicitreg} \vspace{-.3cm}
\end{figure*} 

We first note that proposed convex formulations, e.g., \eqref{eq:twolayer_convexprogram}, \eqref{eq:twolayer_vector_convexprogram} and \eqref{eq:cnn_convexprogram}, utilize whitened data matrices. Thus, when we train them via a standard first order optimizers such as GD/SGD, the optimizer might follow an unfavorable optimization path that leads to slower convergence and worse generalization properties throughout the training. Based on this, we first analyze the impact of utilizing a whitened data matrix for the non-convex problem in \eqref{eq:twolayer_vector_primal} trained with GD. Then, we characterize this impact as an implicit regularization due to the choice of optimizer. Finally, we propose an approach to explicitly incorporate this benign implicit regularization into both convex \eqref{eq:twolayer_convexprogram} and non-convex \eqref{eq:twolayer_vector_primal} training objectives.

\subsection{Evidence of Implicit Regularization}\label{sec:implictreg_evidence}
\begin{lem}\label{lem:equivalent_whitening}
The non-convex problem \eqref{eq:twolayer_vector_primal} is equivalent to the following problem (i.e., \(p_{v2}^* = p_{v2b}^*\))
{\small
\begin{align}\label{eq:primal_equivalent_vectoroutput}
    &p_{v2b}^*:=\min_{\theta \in \Theta} \frac{1}{2} \normf{ \sum_{j=1}^{m_{1}}\relu{\frac{\vec{U} \vec{q}_j}{\|\vec{q}_j\|_2}{\bnvarl{1}_j}+\frac{\vec{1} }{\sqrt{n}}{\bnmeanl{1}_j} }{\weightl{2}_j}^T- \labelmat }^2 + \frac{\beta}{2} \sum_{j=1}^{m_{1}}\left({\bnvarl{1}_j}^2+{\bnmeanl{1}_j}^2+\norm{\weightl{2}_j}^2 \right).
\end{align}}
\end{lem}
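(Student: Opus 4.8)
\textbf{Proof proposal for Lemma \ref{lem:equivalent_whitening}.}

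The plan is to show that the batch normalization operation in \eqref{eq:bn_def}, applied columnwise to $\data \weightl{1}_j$, depends on $\weightl{1}_j$ only through a low-dimensional reparametrization, and that this reparametrization can range freely while the regularization on $\weightl{1}_j$ decouples from the remaining penalties. First I would write out explicitly the BN output for the $j^{th}$ neuron, namely
\begin{align*}
\bn{\data \weightl{1}_j}=\frac{(\vec{I}_n-\frac{1}{n}\vec{1}\vec{1}^T)\data\weightl{1}_j}{\|(\vec{I}_n-\frac{1}{n}\vec{1}\vec{1}^T)\data\weightl{1}_j\|_2}\bnvarl{1}_j+\frac{\bnmeanl{1}_j\vec{1}}{\sqrt{n}},
\end{align*}
and substitute the compact SVD $(\vec{I}_n-\frac{1}{n}\vec{1}\vec{1}^T)\data=\vec{U}\bm{\Sigma}\vec{V}^T$ from Theorem \ref{theo:twolayer_convexprogram}. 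Then $(\vec{I}_n-\frac{1}{n}\vec{1}\vec{1}^T)\data\weightl{1}_j=\vec{U}\bm{\Sigma}\vec{V}^T\weightl{1}_j=\vec{U}\vec{q}_j$ where I define $\vec{q}_j:=\bm{\Sigma}\vec{V}^T\weightl{1}_j\in\mathbb{R}^r$. Since $\vec{U}$ has orthonormal columns, $\|\vec{U}\vec{q}_j\|_2=\|\vec{q}_j\|_2$, so the first term becomes $\frac{\vec{U}\vec{q}_j}{\|\vec{q}_j\|_2}\bnvarl{1}_j$, which matches the form appearing in \eqref{eq:primal_equivalent_vectoroutput}. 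Hence the network output $f_{\theta,2}(\data)$ in \eqref{eq:twolayer_vector_primal_scaled} coincides with the argument of the data-fitting term in \eqref{eq:primal_equivalent_vectoroutput}.

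Next I would argue the map between parametrizations is surjective onto the relevant set. Given any target $\vec{q}_j\in\mathbb{R}^r$, since $\data$ is arbitrary but $\bm{\Sigma}$ is invertible on its range and $\vec{V}$ has orthonormal columns, one can choose $\weightl{1}_j=\vec{V}\bm{\Sigma}^{-1}\vec{q}_j$ to realize it; conversely any $\weightl{1}_j$ produces some $\vec{q}_j$. The key observation is that in the scaled problem \eqref{eq:twolayer_vector_primal_scaled} (obtained from \eqref{eq:twolayer_vector_primal} via Lemma \ref{lemma:scaling_twolayer_vector}), the penalty $\beta\sum_j\|\weightl{1}_j\|_2$ has already been eliminated — the regularization there is only $\beta\sum_j\|\weightl{2}_j\|_2$ together with the constraint ${\bnvarl{1}_j}^2+{\bnmeanl{1}_j}^2=1$. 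Wait — I need to be careful: \eqref{eq:primal_equivalent_vectoroutput} instead carries the weight-decay form $\frac{\beta}{2}\sum_j({\bnvarl{1}_j}^2+{\bnmeanl{1}_j}^2+\|\weightl{2}_j\|^2)$ and no constraint. So the cleanest route is to start from the \emph{original} problem \eqref{eq:twolayer_vector_primal} rather than its scaled form: there the penalty is $\frac{\beta}{2}\sum_j(\|\weightl{1}_j\|_2^2+{\bnvarl{1}_j}^2+{\bnmeanl{1}_j}^2+\|\weightl{2}_j\|_2^2)$, and since the BN output is invariant to the scale of $\weightl{1}_j$ (it appears only through the normalized vector $\vec{U}\vec{q}_j/\|\vec{q}_j\|_2$), at optimum one may drive $\|\weightl{1}_j\|_2\to 0$ — more precisely, replace $\weightl{1}_j$ by $\epsilon\weightl{1}_j$, which leaves the objective's data term unchanged and shrinks the $\|\weightl{1}_j\|_2^2$ penalty to zero as $\epsilon\to0$, while $\vec{U}\vec{q}_j/\|\vec{q}_j\|_2$ is unchanged. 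Thus the infimum is unaffected by dropping the $\|\weightl{1}_j\|_2^2$ term entirely, and replacing $\weightl{1}_j$ throughout by the free variable $\vec{q}_j\in\mathbb{R}^r$ (which, as noted, is an onto reparametrization) yields exactly \eqref{eq:primal_equivalent_vectoroutput}, with the optimal value preserved: $p_{v2}^*=p_{v2b}^*$.

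The main obstacle I anticipate is handling the degeneracy $\vec{q}_j=\vec{0}$ (equivalently $(\vec{I}_n-\frac{1}{n}\vec{1}\vec{1}^T)\data\weightl{1}_j=\vec{0}$), where BN as written in \eqref{eq:bn_def} divides by zero; I would treat this as in the convention already implicit in the paper — such a neuron contributes only a constant $\frac{\bnmeanl{1}_j}{\sqrt{n}}\vec{1}$ to the output, and the expression $\frac{\vec{U}\vec{q}_j}{\|\vec{q}_j\|_2}$ should be read as $\vec{0}$ in that case (or one argues that these neurons are not needed at optimum, or are captured in the limit) — and I would note that the supremum/infimum statement is unaffected since one can perturb $\vec{q}_j$ away from zero at negligible cost. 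A secondary point needing care is that the $\epsilon\to0$ scaling argument shows the two problems have equal \emph{infima} even if the infimum in \eqref{eq:primal_equivalent_vectoroutput} is not attained at a finite point in the original coordinates; since both are phrased as $\min$, I would remark that \eqref{eq:primal_equivalent_vectoroutput} does attain its minimum (it is a penalized problem with a coercive objective in $(\bnvarl{1}_j,\bnmeanl{1}_j,\weightl{2}_j,\vec{q}_j/\|\vec{q}_j\|_2)$ on the relevant compact directions), closing the equivalence.
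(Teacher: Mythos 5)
Your proposal is correct and follows essentially the same route as the paper: substitute the compact SVD of the centered data into the BN expression, observe that the output depends on $\weightl{1}_j$ only through $\vec{U}\vec{q}_j/\|\vec{q}_j\|_2$ with $\vec{q}_j=\bm{\Sigma}\vec{V}^\top\weightl{1}_j$ (a surjective reparametrization onto $\mathbb{R}^r$ with $\|\vec{U}\vec{q}_j\|_2=\|\vec{q}_j\|_2$), and use scale-invariance of BN to drop the $\|\weightl{1}_j\|_2^2$ penalty — the paper delegates this last step to its Appendix argument on effective regularization, whereas you reproduce the $\epsilon$-scaling argument inline and additionally address the degenerate $\vec{q}_j=\vec{0}$ case and attainment, which are reasonable refinements but not a different method.
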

 \eqref{eq:primal_equivalent_vectoroutput} is equivalent to a problem with new data matrix \(\vec{U}\) and weight normalization \citep{salimans2016weightnorm}. The identity in Lemma \ref{lem:equivalent_whitening} is achieved by substituting the transformation \({\vec{q}_j} =\vec{\Sigma}\vec{V}^\top{\weightl{1}_j}\).

While \eqref{eq:twolayer_vector_primal} and \eqref{eq:primal_equivalent_vectoroutput} have identical global optima, in practice, GD behaves quite differently when applied to the two problems. Furthermore, these differences extend to the convex formulations in \eqref{eq:twolayer_convexprogram}, \eqref{eq:twolayer_vector_convexprogram}, and \eqref{eq:cnn_convexprogram}. In particular, while these programs are adept at finding the global optimum, they generalize quite poorly when trained with local search algorithms such as GD. This suggests an implicit regularization effect of GD, the form of which is demonstrated in the following theorem. 

\begin{theo}\label{thm:implicit_reg}
    Consider \eqref{eq:twolayer_vector_primal} and \eqref{eq:primal_equivalent_vectoroutput} with corresponding models \(f_{v2}(\data)\) and \(f_{v2b}(\vec{U})\). At iteration \(k\) of GD, assume that \(f_{v2}^k (\data) = f_{v2b}^k(\vec{U})\), i.e. both models have identical weights, besides \(\vec{q}_j^{(k)}=\vec{\Sigma}\vec{V}^\top{\weightl{1}_j}^{(k)} \). Then, the subsequent GD updates to model weights are identical, besides those to \(\weightl{1}_j\) and \(\vec{q}_j\). In \(\vec{q}_j\)-space, the updates \(\Delta_{v2, j}^{(k)}\) and \(\Delta_{v2b, j}^{(k)}\), respectively, admit
    \begin{align*}
        \Delta_{v2, j}^{(k)} = \vec{\Sigma}^2\Delta_{v2b, j}^{(k)}.
    \end{align*}
\end{theo}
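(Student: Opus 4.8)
The plan is to compare the two GD trajectories term by term using the chain rule, treating $\vec{q}_j = \vec{\Sigma}\vec{V}^\top \weightl{1}_j$ as a fixed linear change of variables. First I would write out both loss functions explicitly. For \eqref{eq:primal_equivalent_vectoroutput} the first-layer contribution enters only through the normalized quantity $\vec{U}\vec{q}_j/\|\vec{q}_j\|_2$, while for \eqref{eq:twolayer_vector_primal} the corresponding quantity is $\bn{\data\weightl{1}_j}$, whose direction part is $(\vec{I}_n - \tfrac1n\vec{1}\vec{1}^\top)\data\weightl{1}_j$ normalized. Using the compact SVD $(\vec{I}_n-\tfrac1n\vec{1}\vec{1}^\top)\data = \vec{U}\bm{\Sigma}\vec{V}^\top$, we have $(\vec{I}_n-\tfrac1n\vec{1}\vec{1}^\top)\data\weightl{1}_j = \vec{U}(\vec{\Sigma}\vec{V}^\top\weightl{1}_j) = \vec{U}\vec{q}_j$, so under the substitution the two networks produce the \emph{same} output map and the same regularizer (the $\|\weightl{1}_j\|^2$ term of \eqref{eq:twolayer_vector_primal} is absorbed by the scale invariance of BN, as in Lemma \ref{lemma:scaling_twolayer_vector}, leaving only $\bnvarl{1}_j,\bnmeanl{1}_j,\weightl{2}_j$ penalized — matching \eqref{eq:primal_equivalent_vectoroutput}). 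Hence under the stated hypothesis $f_{v2}^k(\data)=f_{v2b}^k(\vec{U})$, all loss values, and all partial derivatives with respect to the shared parameters $\bnvarl{1}_j,\bnmeanl{1}_j,\weightmatl{2}$, coincide; this immediately gives that the GD updates to those parameters are identical.

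The remaining point is the update to the first-layer weights. Let $g(\cdot)$ denote the (common) loss viewed as a function of the vector $\vec{q}_j \in \mathbb{R}^r$ entering through $\vec{U}\vec{q}_j/\|\vec{q}_j\|_2$; write $\nabla_{\vec{q}_j} g$ for its gradient. For \eqref{eq:primal_equivalent_vectoroutput}, $\vec{q}_j$ is itself the optimization variable, so $\Delta_{v2b,j}^{(k)} = -\eta\,\nabla_{\vec{q}_j} g$. For \eqref{eq:twolayer_vector_primal}, the variable is $\weightl{1}_j$ with $\vec{q}_j = \vec{\Sigma}\vec{V}^\top\weightl{1}_j$, so by the chain rule $\nabla_{\weightl{1}_j} = \vec{V}\vec{\Sigma}\,\nabla_{\vec{q}_j} g$ (using $\data^\top(\vec I_n - \tfrac1n\vec 1\vec 1^\top) = \vec{V}\vec{\Sigma}\vec{U}^\top$), i.e. $\Delta_{v2,j}^{(k)}$ \emph{in $\weightl{1}_j$-space} equals $-\eta\,\vec{V}\vec{\Sigma}\nabla_{\vec{q}_j} g$. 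Pushing this update through the linear map to $\vec{q}_j$-space gives $\vec{\Sigma}\vec{V}^\top(-\eta\,\vec{V}\vec{\Sigma}\nabla_{\vec{q}_j} g) = -\eta\,\vec{\Sigma}^2\nabla_{\vec{q}_j} g = \vec{\Sigma}^2 \Delta_{v2b,j}^{(k)}$, which is the claimed identity. One should also check consistency: since $\vec{q}_j^{(k)}=\vec{\Sigma}\vec{V}^\top{\weightl{1}_j}^{(k)}$ is assumed at step $k$, the induced update keeps the relation $\vec{q}_j^{(k+1)} = \vec{\Sigma}\vec{V}^\top{\weightl{1}_j}^{(k+1)}$ only up to the $\vec{\Sigma}^2$ distortion — which is precisely the content of the theorem, not a contradiction.

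The main obstacle is bookkeeping rather than depth: one must be careful that the $\|\weightl{1}_j\|^2$ regularizer and the BN normalization interact so that the effective first-layer objective really does depend on $\weightl{1}_j$ only through $\vec{U}\vec{q}_j/\|\vec{q}_j\|_2$ (so that no stray $\weightl{1}_j$-dependent gradient term survives), and that $\vec{V}$ has orthonormal columns so $\vec{V}^\top\vec{V}=\vec{I}_r$ when composing the chain-rule map with the change of variables. A secondary subtlety is that $\vec{V} \in \mathbb{R}^{d\times r}$ is generally not square when $d>r$, so the map $\weightl{1}_j \mapsto \vec{q}_j$ is not invertible; the statement is nonetheless well-posed because GD updates to $\weightl{1}_j$ started from a point in the row space of $\data$ stay in that $r$-dimensional subspace, on which the correspondence is a bijection. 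Handling this component-subspace argument cleanly, and confirming that the loss gradient indeed has no component orthogonal to $\mathrm{row}(\data)$, is the one place where a little care is needed; everything else is a direct chain-rule computation.
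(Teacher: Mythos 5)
Your proposal is correct and follows essentially the same route as the paper: both arguments rest on the observation that, once the first-layer weight decay is absorbed by the BN scale invariance, the BN loss depends on $\weightl{1}_j$ only through $\vec{q}_j=\vec{\Sigma}\vec{V}^\top\weightl{1}_j$, so the shared-parameter gradients coincide and the first-layer update mapped to $\vec{q}_j$-space picks up the factor $\vec{\Sigma}\vec{V}^\top\vec{V}\vec{\Sigma}=\vec{\Sigma}^2$. The paper merely carries out your chain-rule step concretely, writing out every gradient term by term, whereas you package it abstractly; the mechanism and conclusion are identical.
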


\begin{remark}\label{rem:implicit_gd_path}
    Theorem \ref{thm:implicit_reg} indicates that when solving \eqref{eq:primal_equivalent_vectoroutput}, GD will fit directions corresponding to higher singular values of data much slower relative to other directions compared to \eqref{eq:twolayer_vector_primal}. By fitting these directions more aggressively, GD applied to \eqref{eq:twolayer_vector_primal} provides an implicit regularization effect that fits to more robust features. We also note that although our analysis holds only for GD, we empirically observe and validate the emergence of this phenomenon for networks trained with SGD as well.
\end{remark}

To verify this intuition, in Figure \ref{fig:train_test_implicitreg}, we compare the training and test loss curves of solving both \eqref{eq:twolayer_vector_primal} and \eqref{eq:primal_equivalent_vectoroutput} with mini-batch SGD applied to CIFAR-10 \citep{cifar}, demonstrating that SGD applied to \eqref{eq:twolayer_vector_primal} generalizes much better than  \eqref{eq:primal_equivalent_vectoroutput}. In Figure \ref{fig:singular_directions_implicitreg}, we plot the cosine similarity of the weights of both networks compared to their initialization in each singular value direction.

Our results confirm that the BN model fits high singular value directions much faster than low ones, and in fact the lower-singular value directions almost do not move at all from their initialization. In contrast, the whitened model fits all singular value directions roughly equally, meaning that it overfits to less robust components of the data.  These experiments confirm our theoretical intuition that SGD applied to BN networks has an implicit regularization effect. 

\begin{wrapfigure}{R}{0.4\textwidth}
    \centering
   \includegraphics[width=.4\columnwidth]{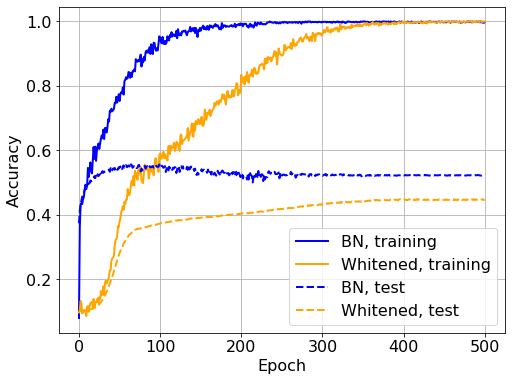}
   \caption{CIFAR-10 classification classification accuracies for a two-layer ReLU network in \eqref{eq:twolayer_vector_primal} and the equivalent whitened model \eqref{eq:primal_equivalent_vectoroutput} with SGD and $(n,d,m_1,\beta, \text{bs})=(50000,3072,1000,10^{-4},1000)$, where $\text{bs}$ denotes batch size. The whitened model overfits much much more than the standard BN model, indicating implicit regularization.}\vskip -0.3in
    \label{fig:train_test_implicitreg}
\end{wrapfigure}

\subsection{Making Implicit Regularization Explicit}\label{sec:explicitreg_defn}
Since the convex programs \eqref{eq:twolayer_convexprogram}, \eqref{eq:twolayer_vector_convexprogram}, and \eqref{eq:cnn_convexprogram} rely on the whitened data matrix \(\vec{U}\), they lack the regularization necessary for generalization when optimized with GD. Thus, we aim to impose an explicit regularizer which does not hamper the performance of standard BN networks, but can improve the generalization of the equivalent programs trained using first-order optimizers such as GD. One simple idea to prevent the whitened model from fitting low singular value directions is to obtain a low-rank approximation of the data by filtering its singular values:
\begin{equation}\label{eq:window_fn}
    g(\data) := \vec{U}\mathcal{T}(\vec{\Sigma}) \vec{V}^\top,
\end{equation}
with $ \mathcal{T}_{k}(\vec{\Sigma})_{ii} := \sigma_i \mathbbm{1}\{i \geq k\}$,
which takes the \(k \leq r\) top singular values and removes the rest. We can then solve \eqref{eq:generic_primal} using $f_{\theta,L}(g(\data))$ as the network output, to form what we denote as the truncated problem. This can also reduce the problem dimension in the whitened-data case since one can simply omit the columns of \(\vec{U}\). We find this low-rank approximation to be effective, which can even improve the generalization of GD applied to BN networks.

\section{Numerical Experiments}\label{sec:numerical_exps}
Here\footnote{We present additional numerical experiments and details on the experiments in Appendix \ref{sec:additional_experiments}.}, we present experiments to verify our theory. Throughout this section, we denote the baseline model \eqref{eq:overview_primal} and our non-convex model with truncation as ``SGD'' and ``SGD-Truncated'', respectively. Likewise, the proposed convex models are denoted as ``Convex'' and ``Convex-Truncated''.
\begin{figure*}[h!]
    \centering
    \begin{subfigure}{.45\textwidth}
   \includegraphics[width=.9\columnwidth,height=.7\columnwidth]{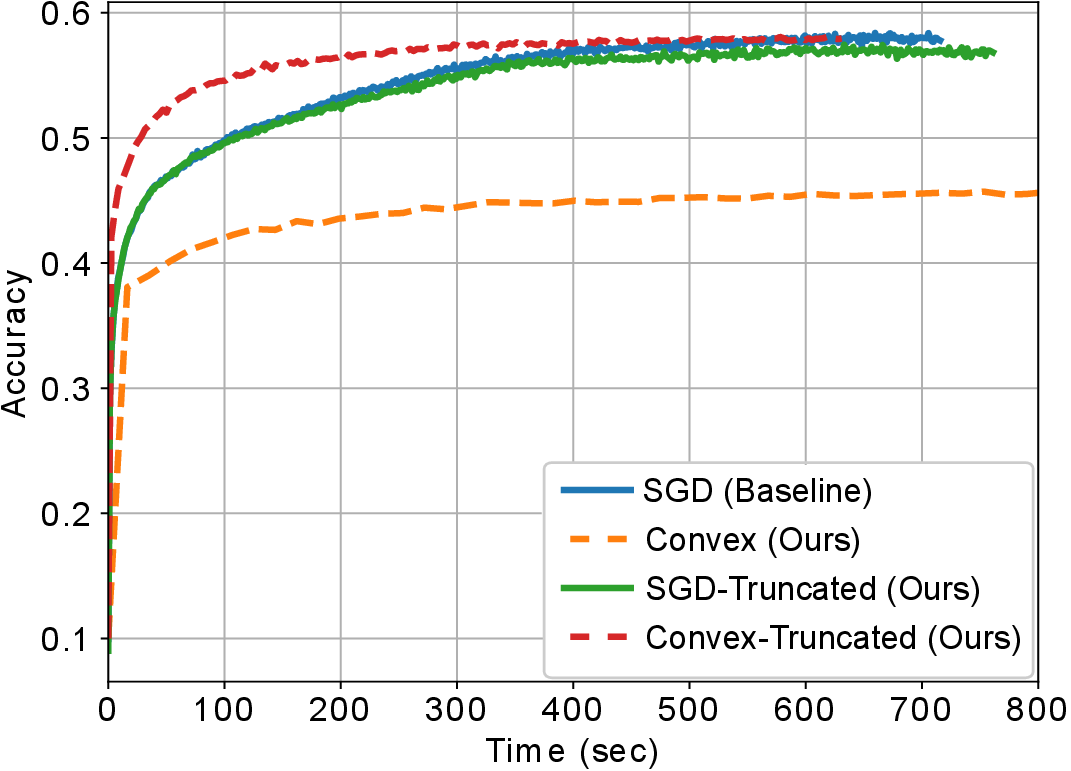}
   \caption{CIFAR-10-$(m_1,\beta, \text{bs})=(4096,10^{-4},10^3)$} \vskip -0.1in
    \label{fig:sgd_testacc_cifar10}
    \end{subfigure}%
    \hfill
    \begin{subfigure}{.45\textwidth}
       \includegraphics[width=.9\columnwidth,height=.7\columnwidth]{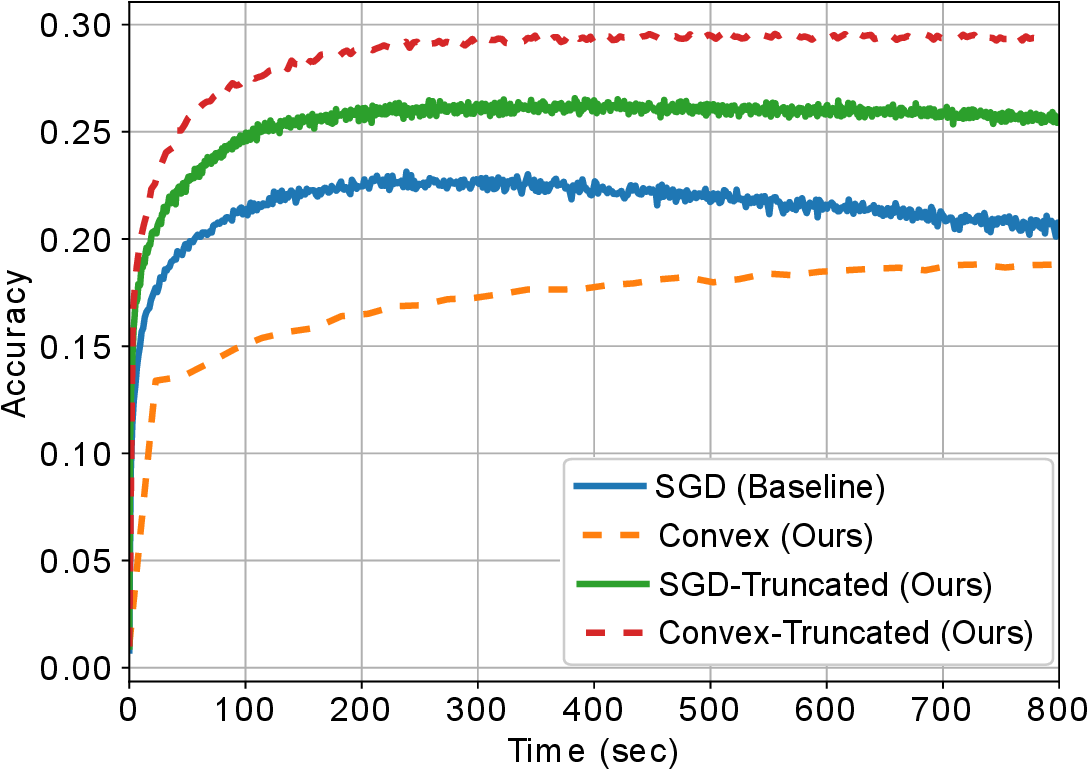}
       \caption{CIFAR-100-$(m_1,\beta, \text{bs})=(1024,10^{-4},10^3)$}    \vskip -0.1in
        \label{fig:sgd_testacc_cifar100}
    \end{subfigure}
     \caption{Test accuracy of two-layer BN ReLU networks including the standard non-convex (baseline), its convex equivalent \eqref{eq:twolayer_convexprogram}, along with their truncated versions with $k=(215, 200)$ respectively. All problems are solved with SGD. The learning rates are chosen based on test performance and we provide learning rate and optimizer comparisons in Appendix \ref{sec:additional_experiments}.}
    \label{fig:sgd_testaccs} \vspace{-0.4cm}
\end{figure*} 
\begin{wrapfigure}{R}{0.4\textwidth}
    \centering
    \includegraphics[width=.4\columnwidth]{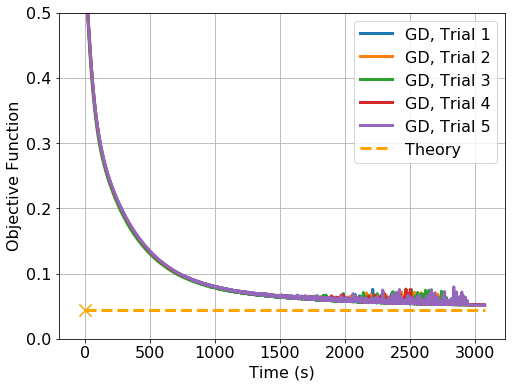}
    \caption{Comparing the objective values of the closed-form solution in Theorem \ref{theo:twolayer_vector_closedform}, (``Theory''), and GD (5 trials) applied to the non-convex problem \eqref{eq:twolayer_vector_primal} for three-class CIFAR classification with $(n,d,m_1,\beta)=(1500,3072,1000,1)$. Here, ``x" denotes the computation time for the closed-form solution.}
    \label{fig:closedform_exp}
\end{wrapfigure}

\textbf{Closed-Form Solutions:} We first verify Theorem \ref{theo:twolayer_vector_closedform}. We consider a three-class classification task with the CIFAR-100 image classification dataset \citep{cifar} trained with squared loss and one-hot encoded labels, while we set $(n,d,m_1,\beta)=(1500,3072,1000,1)$, where $\text{bs}$ denotes the batch size. In Figure \ref{fig:closedform_exp}, we compare the objective values achieved by the closed-form solution and the classical network \eqref{eq:twolayer_primal_scaled}. We observe that our formula is computationally light-weight and can quickly find a global optimum that cannot be achieved with GD even after convergence.

\textbf{Convex Formulation and Implicit Regularization:} We now demonstrate the effectiveness of our convex formulation to solve \eqref{eq:twolayer_vector_primal} on CIFAR-10 and CIFAR-100 \citep{cifar}. Here, instead of enumerating all hyperplane arrangements, we randomly sample via randomly generated Gaussian vectors such that \thickmuskip=0.5\thickmuskip$m_1=P$. We compare the results of training with and without the explicit regularizer \eqref{eq:window_fn} for both cases. We particularly choose \(k\) such that singular values explain 95\% of the variance in the training data. Then, we consider two-layer networks for image classification on CIFAR-10 with $(n,d,m_1,\beta, \text{bs},k,C)=(5\text{x}10^{4},3072,4096,10^{-4},10^3,215,10)$ and CIFAR-100 with $(n,d,m_1,\beta, \text{bs},k,C)=(5\text{x}10^{4},3072,1024,10^{-4},10^3,200,100)$. In Figure \ref{fig:sgd_testaccs}, we observe that Convex generalizes poorly (as expected with our results in Section \ref{sec:implictreg_evidence}). After the singular value truncation, SGD, SGD-Truncated, and Convex-Truncated generalize equally well for CIFAR-10, and Convex-Truncated in fact generalizes better for CIFAR-100. This shows that the optimizer benefits from the convex landscape of the proposed formulation \eqref{eq:twolayer_convexprogram}. Furthermore, truncation actually improves the performance of SGD in the case of CIFAR-100, suggesting this is a good regularizer to use.

\section{Concluding Remarks}
In this paper, we studied training of deep weight-decay regularized ReLU networks with BN, and introduced a convex analytic framework to characterize an optimal solution to the training problem. Using this framework, we proved that the non-convex training problem can be equivalently cast as a convex optimization problem that can be trained in polynomial-time. Moreover, the convex equivalents involve whitened data matrix in contrast to the original non-convex problem using the raw data, which reveals a whitening effect induced by BN. We also extended this approach to analyze different architectures, e.g., CNNs and networks with BN placed after ReLU. In the high dimensional and/or overparameterized regime, our characterization further simplifies the equivalent convex representations such that closed-form solutions for optimal parameters can be obtained. Therefore, in these regimes, there is no need to train a network in an end-to-end manner. In the light of our results, we also unveiled an implicit regularization effect of GD on non-convex ReLU networks with BN, which prioritizes learning high singular-value directions of the training data. We also proposed a technique to incorporate this regularization to the training problem explicitly, which is numerically verified to be effective for image classification problems. Finally, we remark that after this work, similar convex duality arguments were applied to deeper networks \citet{ergen2021deep,ergen2021path,ergen2021revealing,wang2021parallel} and Wasserstein Generative Adversarial Networks (WGANs) \cite{sahiner2021gan}. Extensions of our convex BN analysis to these architectures are left for future work.

\subsubsection*{Acknowledgments}
This work was partially supported by the National Science Foundation under grants ECCS- 2037304, DMS-2134248, the Army Research Office.


\bibliography{iclr2022_conference}
\bibliographystyle{iclr2022_conference}


\clearpage
\newpage
\appendix
\addcontentsline{toc}{section}{Appendix} 
\part{Appendix} 
\parttoc 

\section{Additional Details and Numerical Experiments} \label{sec:additional_experiments}
In this section, we present the details about our experimental setup in the main paper and provide new numerical experiments. 

We first note that since we derive exact convex formulations in Theorem \ref{theo:twolayer_convexprogram}, \ref{theo:twolayer_vector_convexprogram}, \ref{theo:cnn_convexprgoram}, we can use solver such as CVX \citep{cvx} and CVXPY \citep{cvxpy,cvxpy_rewriting} with the SDPT3 solver \citep{tutuncu2001sdpt3} to train regularized ReLU networks training problems with BN. Although these solvers ensure quite fast convergence rates to a global minimum, they do not scale to moderately large datasets, e.g., CIFAR-10. Therefore, in our main experiments, we use an equivalent unconstrained form for the constrained convex optimization problems such that it can be simply trained via SGD. Particularly, let us consider the constrained convex optimization problem in \eqref{eq:twolayer_convexprogram}, for which the constraints can be incorporated into the objective as follows
\begin{align*}
    &\min_{\vec{s}_i,\vec{s}_i^\prime}\frac{1}{2}\left\|\sum_{i=1}^P \diag_i \vec{U}^\prime(\vec{s}_i-\vec{s}_i^\prime)-\labelvec \right\|_2^2 +\beta \sum_{i=1}^P (\|\vec{s}_i\|_2+\|\vec{s}_i^\prime\|_2) \\
    & \hspace{3.5cm}+ \rho \sum_{i=1}^P  \vec{1}^T \left( \relu{-(2\diag_i-\vec{I}_n)\vec{U}^\prime\vec{s}_i} + \relu{-(2\diag_i-\vec{I}_n)\vec{U}^\prime\vec{s}_i^\prime}\right)
\end{align*}
where $\rho>0$ is a hyper-parameter to penalize the violating constraints. Therefore, using the unconstrained form above, we are able to use SGD or other standard first-order optimizers to optimize the constrained convex optimization problem in \eqref{eq:twolayer_convexprogram}.

Unless otherwise stated, all problems are run on a single Titan Xp GPU with 12 GB of RAM, with the Pytorch deep learning library \citep{NEURIPS2019_9015}. All SGD models use a momentum parameter of 0.9.  All baseline models are trained with standard weight-decay regularization as outlined in \eqref{eq:overview_primal}. All batch-norm models are initialized with scale \(\bnvarl{l}_j = 1\) and bias \(\bnmeanl{l}_j = 0\), and all other non-convex model weights are initialized with standard Kaiming uniform initialization \citep{he2015delving}. For the convex program, we simply initialize all weights to zero. 

\subsection{Inference in BN models}
For the non-convex standard BN network, we perform inference in the traditional way. All BN problems with SGD take an exponential moving average of the mini-batch mean and standard deviation with a momentum of 0.1, as is used as a default in deep learning packages, except in the case of full-batch GD, in which case a momentum value of 1 is used. In truncated variants, while the training data is truncated with the explicit regularizer \(g(\data)\), for test data \(\hat{\data}\), we compute the standard forward pass \(f_{\theta, L} (\hat{\data}\)), i.e. without changing the test data in any fashion, and removing the training-data mean and normalizing by the training-data standard deviation as found with the exponential moving average method described above. For the convex program, we can form the weights to the non-convex standard BN network from the convex program weights, and compute the test-set predictions with the non-convex standard BN network. 

\subsection{Additional Experiment: Batch Sizes for SGD}
The model of BN we present in this paper is with the model with GD without mini-batching. In this section, we demonstrate that mini-batching does not change the training or generalization performance of BN networks in any significant manner. Our observations also align with previous work examining this subject, such as \citet{lian2019smallbn, summers2020four}.\\\\
In particular, we compare the effect of different batch sizes using a four layer CNN, where the first three layers are convolutional, followed by BN, ReLU, and average pooling, and the final layer is a fully-connected layer. Each CNN layer has 1000 filters with kernel size \(3 \times 3\) and padding of 1. \\\\
We take the first two classes from CIFAR-100 to perform a binary classification task with \(\data \in \mathbb{R}^{1000 \times 3072}\) and \(\labelvec \in \mathbb{R}^{1000}\). For \((\text{lr}, \beta) = (10^{-6}, 10^{-4})\), where lr denotes the learning rate, and weight-decay is applied to all parameters. We compare the effect of using a batch size of \(50, 100, 500\), and 1000. We run the batch size 1000 case for 501 epochs, and then train the other batch size cases such that the training time is roughly matched. Due to memory constraints, the full-batch form of SGD does not fit on a standard GPU--thus, all models are trained with a standard CPU with 256 GB of RAM. We compare the train and test accuracy curves in Figure \ref{fig:bs_extra_experiment}, where we see that all batch sizes perform roughly equivalently in both test and training performance. This demonstrates that our full-batch model for BN accurately captures the same dynamics as BN as used in the mini-batch case. 

\begin{figure}
    \centering
    \includegraphics[width=0.5\textwidth]{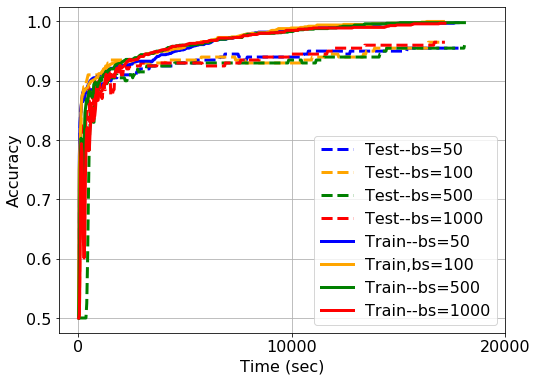}
    \caption{Comparison of different batch sizes (bs) for four-layer CNN with BN layers, \((m_l, \text{lr}, \beta) = (1000, 10^{-6}, 10^{-4})\). We demonstrate that different batch sizes perform roughly equivalently in both training and test convergence. }
    \label{fig:bs_extra_experiment}
\end{figure}

We note that we also perform an ablation study on batch sizes for the nonconvex programs in Section \ref{sec:numerical_exps} in Appendix \ref{sec:bs_ablation}, and find the same conclusion.

\subsection{Additional Experiment: Singular Value Truncation Ablation}
In order to make clear the effect of the singular value truncation as proposed in Section \ref{sec:explicitreg_defn} on generalization performance, we include an additional ablation study on a new dataset to verify our intuition. In particular, we evaluated the proposed methods on the CNAE-9 dataset \citep{ciarelli2009agglomeration}, a dataset of 1080 emails with 856 text features over 9 classes. Following \citet{ciarelli2009agglomeration}, the first 900 samples were used as the training set, while the final 180 samples are used as the test set in this experiment, to obtain \(\data \in \mathbb{R}^{900 \times 856}\), \(\labelvec \in \mathbb{R}^{900 \times 9}\). \\\\
We evaluated the proposed approaches with \((m_1, \beta, \text{bs}) = (10000, 10^{-2}, 900)\), comparing the baseline non-convex SGD approach to the convex truncated formulation for \(k \in \{10, 100, 200, 400, 856\}\), where \(k\) is the index for which all smaller singular values are truncated from the training dataset. We trained each model for roughly an equal amount of time (15 seconds). For all architectures, a learning rate of \(10^{-5}\) was fixed. \\\\
We find in Figure \ref{fig:cnae}, that as expected, singular value truncation can effectively match the training and test performance of the SGD BN baseline, but only in a certain range of \(k\). When \(k\) is too low, such as \(k=10\), valuable features of the data are eliminated and the convex program underfits, whereas when \(k\) is too large, such as \(k=856\), the convex program overfits and does not generalize well. Values of \(k=100\) and \(k=200\) generalize the best for this problem, and their corresponding truncations preserve \(80\%\) and \(91\%\) of the training data respectively. The exact value of \(k\) which is optimal for generalization is a hyper-parameter that can be tuned, though it is clear that a large range of values can often work well. This ablation lends additional support to the truncation approach proposed in Section \ref{sec:explicitreg_defn}. 

\begin{figure*}[ht!]
    \centering
    \begin{subfigure}{.45\textwidth}
   \includegraphics[width=.9\columnwidth,height=.7\columnwidth]{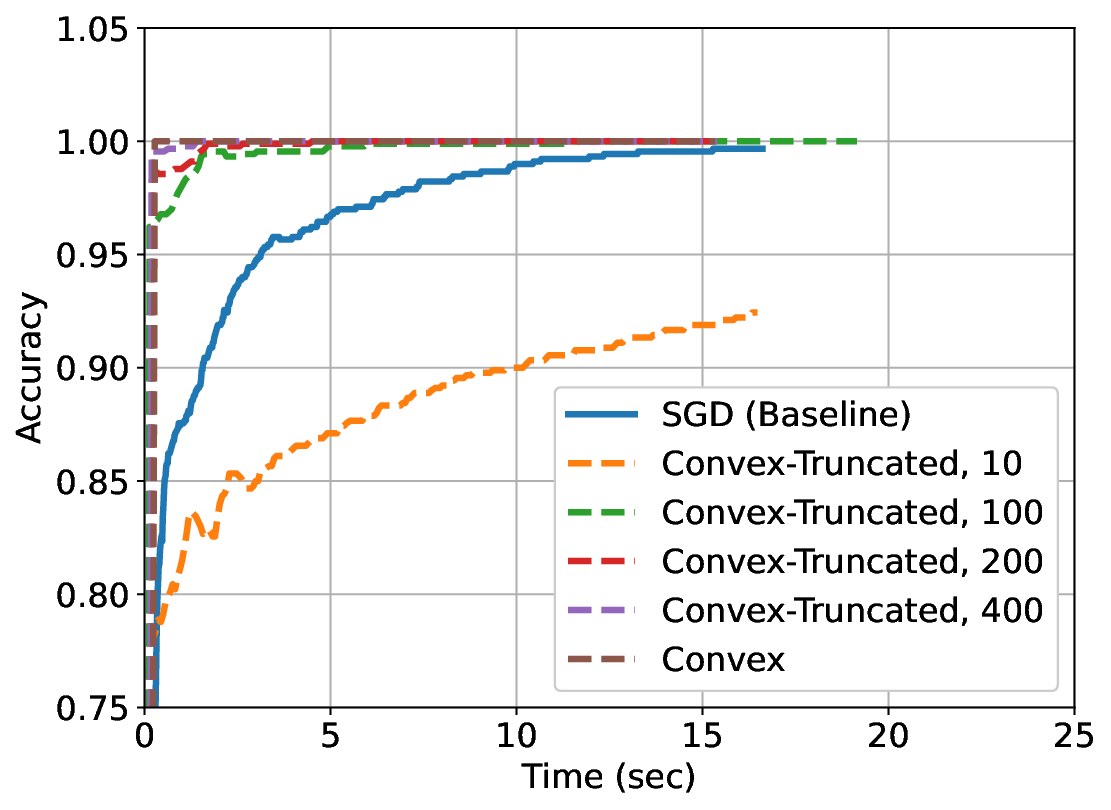}
   \caption{CNAE-9-Training accuracy}
    \label{fig:cnae_train}
    \end{subfigure}%
    \hfill
    \begin{subfigure}{.45\textwidth}
       \includegraphics[width=.9\columnwidth,height=.7\columnwidth]{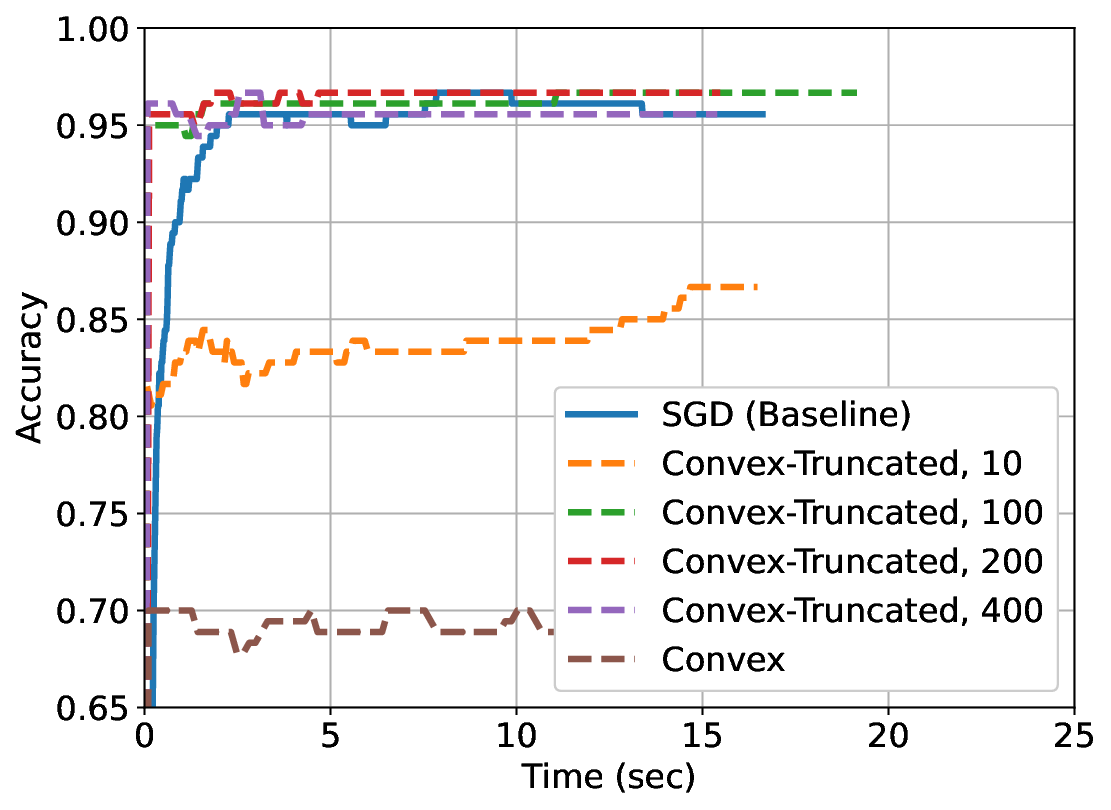}
       \caption{CNAE-9-Test accuracy}  
        \label{fig:cnae_test}
    \end{subfigure}
     \caption{Comparison of different truncation schemes of convex program compared to SGD baseline, with \((m_1, \beta, \text{bs}, \text{lr}) = (10000, 10^{-2}, 900, 10^{-5})\), and  \(k \in \{10, 100, 200, 400, 856\}\), where \(k=856\) is simply the convex program without truncation. Large values of \(k\) overfit, while low values underfit, and intermediate values match exactly the performance of SGD baseline.}
    \label{fig:cnae}
\end{figure*} 

\subsection{Additional Experiment: Convolutional Architectures}
To demonstrate the effectiveness of the convex formulation in convolutional settings, we provide an additional experiment to show the success of this approach. In particular, as the baseline architecture, we consider a two-layer network which consists of a convolutional layer, followed by a channel-wise BN, followed by a ReLU activation, then subsequently followed by an average pooling operation, flattening, and a fully-connected layer. To use our convex formulation (e.g. \eqref{eq:cnn_convexprogram}) for such an architecture, we can split an input image into spatial blocks (each block corresponds to a region whose outputs will be averaged), and apply a global-average pooling model as in \eqref{eq:cnn_convexprogram} to each spatial block separately and combine the predictions from each spatial block together to obtain one prediction. This suffices as a relaxation of the baseline convolutional architecture.\\\\
To this end, we implement the non-convex and convex architectures on CIFAR-10 with the AdamW optimizer for 50 epochs. The convolution is with a \(3 \times 3\) kernel with padding of 2 and stride of 3, with average pooling to a spatial dimension of \(4 \times 4\). The parameters chosen for this model are given by \((n, m_1, \beta, \text{bs}, k, C, \text{lr}) = (5\times10^4, 1024, 5\times10^{-4}, 250, 22, 10, 10^{-2})\). \\\\
We compare the results of the baseline, baseline-truncated, convex, and convex-truncated approaches. In Table \ref{tab:conv_results}, we see that the accuracy of this CNN approach is higher than the fully-connected methods we reported in our paper, and as expected, there is a strong correspondence between the performance of the convex and baseline models, again reiterating our strong theoretical results, even in the case of this convex relaxation.

As illustrated in Table \ref{tab:conv_results}, truncation enables the convex approach to obtain equivalent or higher test accuracy than its non-convex counterparts, while obtaining similar training accuracy. However, notice that the accuracy improvement provided by truncation is considerably less than the ones for fully-connected layers (e.g. Figure \ref{fig:sgd_testacc_cifar10}). This phenomenon is essentially due the distribution of the singular values of the data matrix. Particularly, for fully connected layers, we have \(\vec{U} \in \mathbb{R}^{n \times d}\), and therefore the dimension we truncate has \(d=3072\) features. In this case, the distribution of singular values follows an exponentially decaying pattern such that the ratio of maximal and minimum singular values, also known as the condition number of the data, is quite large (\(\sigma_{\max}/\sigma_{\min} = 5892.5\)). On the contrary, for CNNs, we operate on the patch matrices so that \(\vec{U}_{mk} \in \mathbb{R}^{n \times h}\), where \(h=27\) is the size of each convolutional filter. Since the data matrix is much more well conditioned (\(\sigma_{\max}/\sigma_{\min} = 271.2\)) and has a significantly smaller feature dimension, the impact of truncation is less emphasized in our experiment.

\begin{table}[]
    \centering
    \begin{tabular}{c|c|c} \toprule
        \textbf{Method} & \textbf{Train Accuracy} & \textbf{Test Accuracy}  \\ \hline
        Baseline & 87.33& 67.06 \\\hline
        Baseline Truncated & 78.41& 66.62 \\\hline
        Convex (Ours) & 81.60 & 66.91 \\\hline
        Convex Truncated (Ours) & 81.35& \textbf{67.11}\\\hline
    \end{tabular}
    \caption{Results of CIFAR-10 classification with two-layer ReLU CNN architectures, with a \(3 \times 3\) kernel size, padding of 2, stride of 3, and averaging to \(4 \times 4\) spatial dimensions, with \((n, m_1, \beta, \text{bs}, k, C, \text{lr}) = (5\times10^4, 1024, 5\times10^{-4}, 250, 22, 10, 10^{-2})\). Consistent with our results on fully-connected architectures, the convex architecture matches or improves upon the performance of the non-convex architecture, verifying our theoretical results. We also observe the less pronounced impact of truncation in convolutional experiments. }
    \label{tab:conv_results}
\end{table}

\subsection{Experiments from Section \ref{sec:implictreg_evidence}--Evidence of Implicit Regularization}
As described in the main paper, for both the standard BN architecture and whitened architecture, we use the parameters \((n,d,m_1,\beta,\text{bs}, \text{lr})  =  (50000,3072,1000,10^{-4},1000, 10^{-4})\), where bs stands for batch size and lr stands for learning rate. Both models were trained with SGD for 501 epochs, and the learning rate was decayed by a factor of 2 in the case of a training loss plateau. We now expand on the definition of cosine similarity to clarify Figure \ref{fig:singular_directions_implicitreg} (in the main paper).\\\\

For weights \((\vec{W}, \vec{W}')\), we define the distance measure
\begin{equation}\label{eq:distance_measure}
    d_{\data, i}(\vec{W}, \vec{W}') := \frac{(\vec{v}_i^\top \vec{W})(\vec{v}_i^\top \vec{W}')^\top}{\|\vec{v}_i^\top \vec{W}\|_2\|\vec{v}_i^\top \vec{W}'\|_2 }
\end{equation}
where \(\vec{v}_i\) are the columns of \(\vec{V}\) from the SVD of the zero-mean data \((\vec{I}_n - \frac{1}{n}\vec{1}\vec{1}^\top)\data\). This distance metric thus computes the cosine similarity between \(\vec{W}\) and \(\vec{W}'\) after being projected to the right singular subspace \(\vec{v}_i\).

For the same networks trained in Figure \ref{fig:train_test_implicitreg} (in the main paper),  we compute \(\{d_{\data, i}({\vec{W}^{(1)}}^{(k)}, {\vec{W}^{(1)}}^{(0)})\}_{i=1}^d\) for the BN model \eqref{eq:twolayer_vector_primal}, and \(\{d_{\vec{U}, i}({\vec{Q}}^{(k)}, \vec{Q}^{(0)})\}_{i=1}^d\) for the equivalent whitened model \eqref{eq:primal_equivalent_vectoroutput}, for each epoch \(k\). We display these in Figure \ref{fig:singular_directions_implicitreg} (in the main paper). These correspond to a measure of how far the hidden-layer weights move from their initial values over time in the right singular subspace of the data, with lower values indicating that the weights have moved further from their initialization. 

\subsection{Verification of Closed-Form Solutions}
See Figure \ref{fig:closedform_exp} for the results of this experiment. The GD baseline model is trained with a learning rate of \(10^{-5}\), and the learning rate was decayed by a factor of 2 in the case of a training loss plateau. The closed-form solution takes only 0.578 seconds to compute, whereas we run the GD baseline model for 40001 epochs, which takes approximately 50 minutes to run per trial. Each trial behaves slightly differently due to the randomness of the initialization of each network. \\\\
In terms of the generalization performance, the closed-form expression obtains a test three-class classification accuracy of 47.7\%, whereas the baseline GD networks achieve an average of 60.1\% final test accuracy, again indicating the implicit regularization effect imposed by GD applied to BN networks.

\subsection{Exact Convex Formulation and Implicit Regularization}
In this section, we describe additional details about our main experimental results of the main paper. In particular, we show the effect of training all models for a longer duration than displayed in the main paper, the different learning rates considered, the different batch sizes considered, along with the effect of using Adam as opposed to SGD, and comparison to a non-BN architecture.\\\\
For these experiments, we trained each model (SGD, SGD-Truncated, Convex, Convex-Truncated) for roughly the same amount of time, and considered a range of learning rates for each problem. In particular, for CIFAR-10, we considered the learning rates \(\{10^{-5}, 5\times 10^{-5}, 10^{-4}\}\), and trained the SGD, SGD-Truncated, Convex, and Convex-Truncated models for 501, 501, 51, and 201 epochs respectively, and each model was trained for roughly 700 seconds. In contrast, for CIFAR-100, we considered the learning rates \(\{10^{-5}, 5\times 10^{-5}, 10^{-4}\}\), and trained the SGD, SGD-Truncated, Convex, and Convex-Truncated models for 621,1001,41, and 251 epochs respectively, and each model was trained for roughly 800 seconds.
\\\\The range of learning rates was chosen such that all models reached their peak test accuracy within the allotted time, and the learning rate with the best final test accuracy was chosen for each model. In particular for CIFAR-10, the chosen learning rates for the SGD, SGD-Truncated, Convex, and Convex-Truncated models were \(10^{-5}\), \(10^{-5}\), \(5\times10^{-5}\), and \(10^{-4}\), respectively.  In contrast, for CIFAR-100 the chosen learning rates for the SGD, SGD-Truncated, Convex, and Convex-Truncated models were \(5\times10^{-5}\), \(10^{-4}\), \(10^{-4}\), and \(5\times10^{-4}\), respectively. 
\\\\
For our CIFAR-10 experiments in this section, we always use $(m_1,\beta, \text{bs})=(4096,10^{-4},10^3)$, while for CIFAR-100, we used $(m_1,\beta, \text{bs})=(1024,10^{-4},10^3)$, where bs stands for batch size, as described in the main paper. The only exception is during our batch size ablation study in Section \ref{sec:bs_ablation}, in which case the batch size is varied.  All models are trained with SGD, except in Section \ref{sec:Adam_ablation}. 

\subsubsection{Training Curves and Overfitting}

\begin{figure*}[ht!]
    \centering
    \begin{subfigure}{.45\textwidth}
   \includegraphics[width=.9\columnwidth,height=.7\columnwidth]{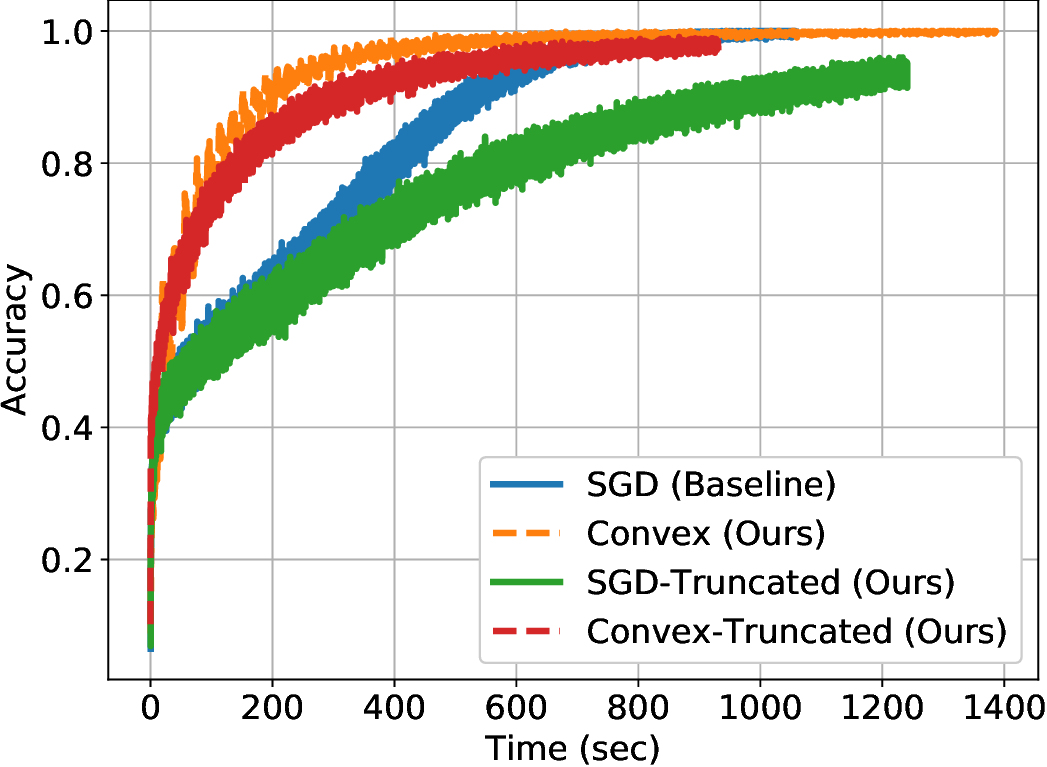}
   \caption{CIFAR-10-Training accuracy}
    \label{fig:sgd_trainlonger_cifar10}
    \end{subfigure}%
    \hfill
    \begin{subfigure}{.45\textwidth}
       \includegraphics[width=.9\columnwidth,height=.7\columnwidth]{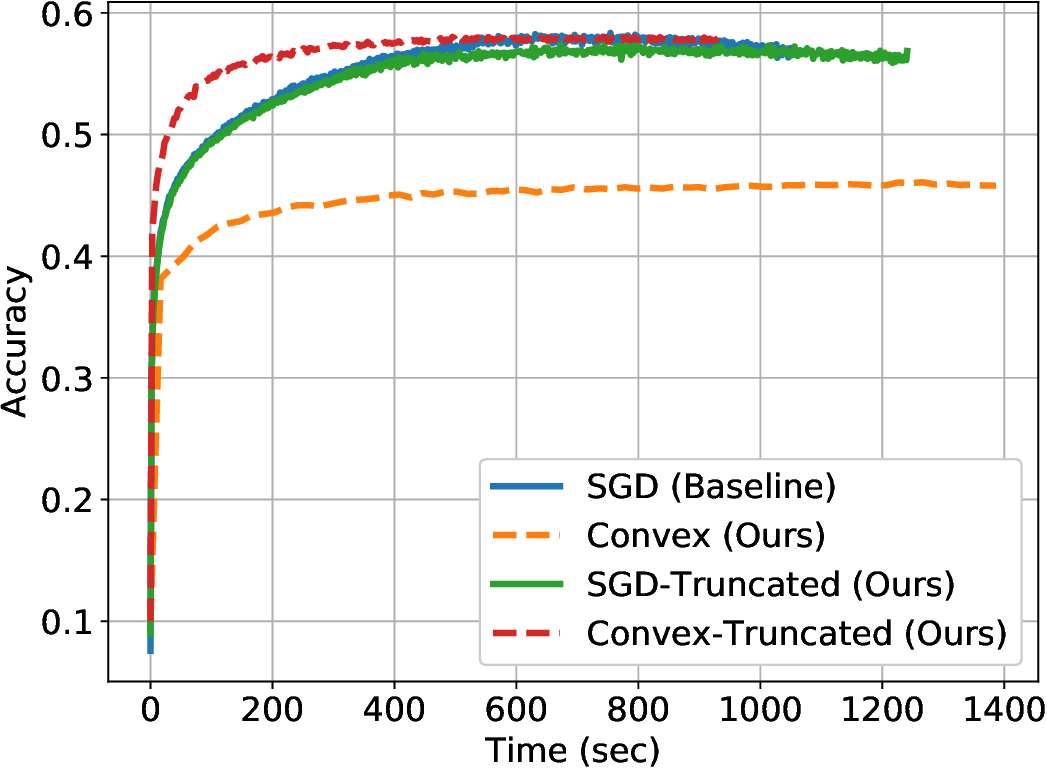}
       \caption{CIFAR-10-Test accuracy}  
        \label{fig:sgd_testlonger_cifar10}
    \end{subfigure}           \vskip\baselineskip
        \begin{subfigure}{.45\textwidth}
   \includegraphics[width=.9\columnwidth,height=.7\columnwidth]{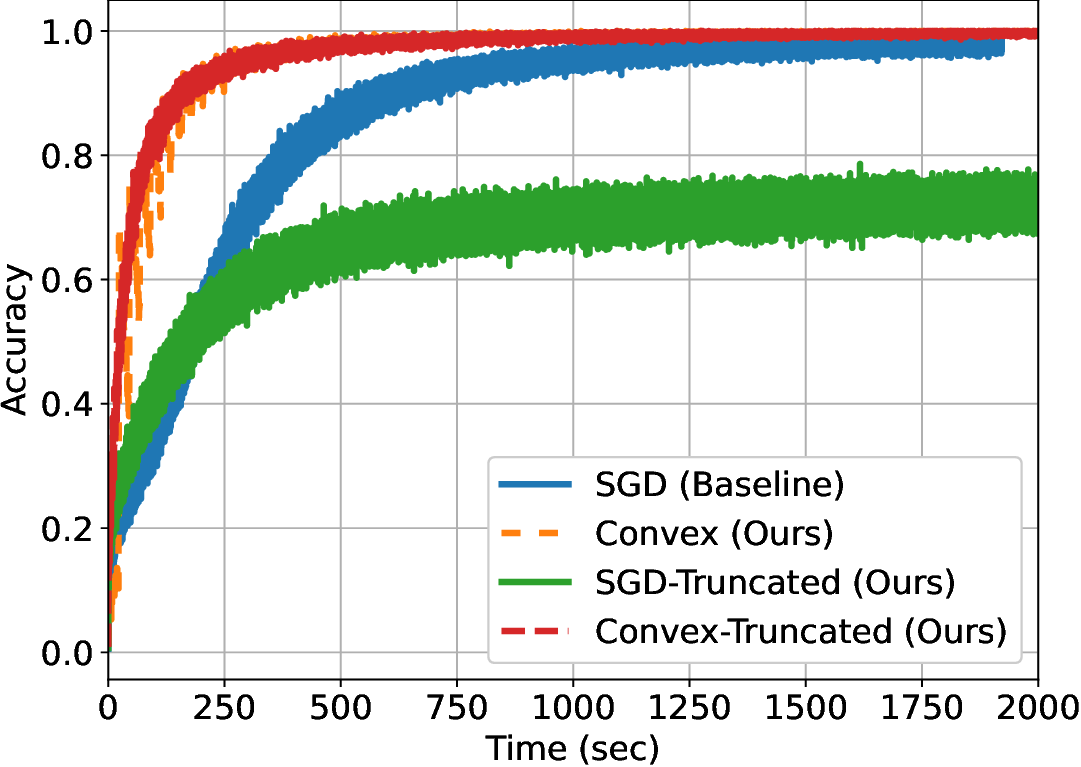}
   \caption{CIFAR-100-Training accuracy }
    \label{fig:sgd_trainlonger_cifar100}
    \end{subfigure}%
    \hfill
    \begin{subfigure}{.45\textwidth}
       \includegraphics[width=.9\columnwidth,height=.7\columnwidth]{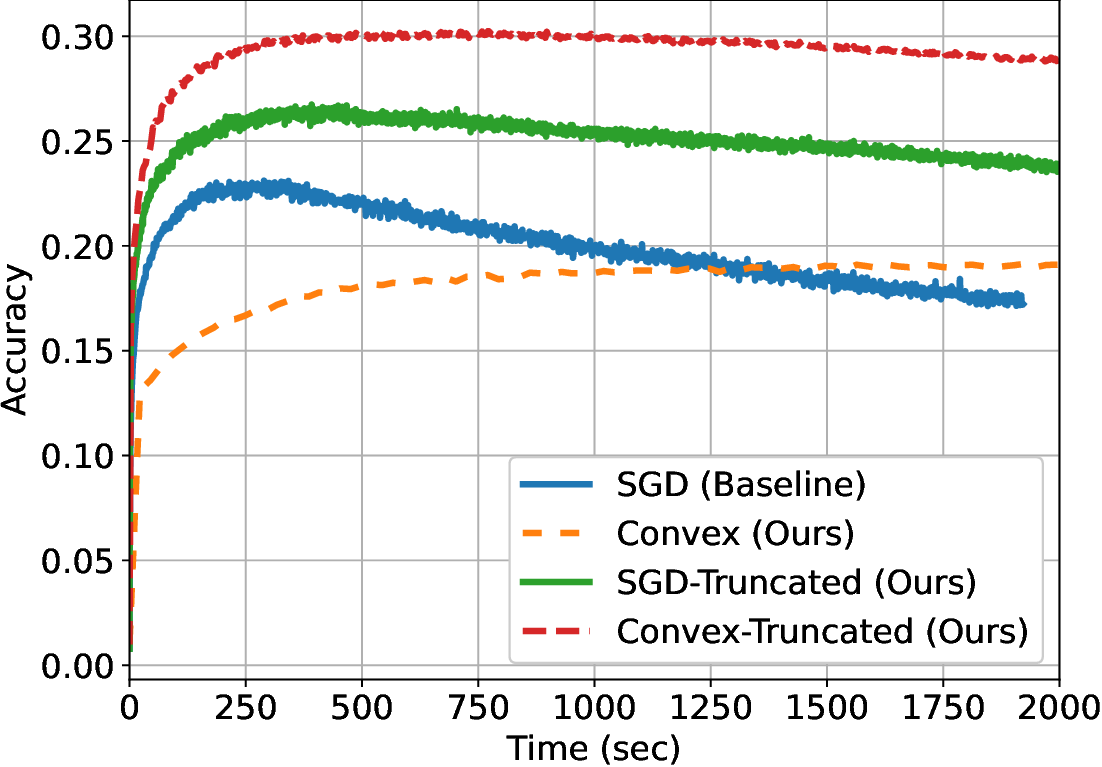}
       \caption{CIFAR-100-Test accuracy}    
        \label{fig:sgd_testlonger_cifar100}
    \end{subfigure}
     \caption{Here, we provide a longer trained version of the CIFAR-10 and CIFAR-100 experiments in Figure \ref{fig:sgd_testaccs} (in the main paper). We see that training for longer does not affect the results, as all models begin to overfit, and they overfit at the same rate. For CIFAR-100, we observe that the baseline overfits so much that the standard Convex model eventually overtakes it in generalization performance. }
    \label{fig:sgd_accs_longer}
\end{figure*}

To make more robust our claims in the main paper, we consider training each network for a longer duration, such that each network can near 100\% training accuracy. We  plot the results of both training and test accuracy for CIFAR-10 and CIFAR-100 in Figure \ref{fig:sgd_accs_longer}. We note that extending training for longer only seems to increase over-fitting, and increasing train accuracy to 100\% does not improve model performance. Therefore, the results of our main paper, which display the curves for a shorter duration, represent the peak performance of all algorithms before they begin to overfit. 

\subsubsection{Impact of Different Batch Sizes}\label{sec:bs_ablation}
\begin{figure*}[ht!]
    \centering
    \begin{subfigure}{.45\textwidth}
   \includegraphics[width=.9\columnwidth,height=.7\columnwidth]{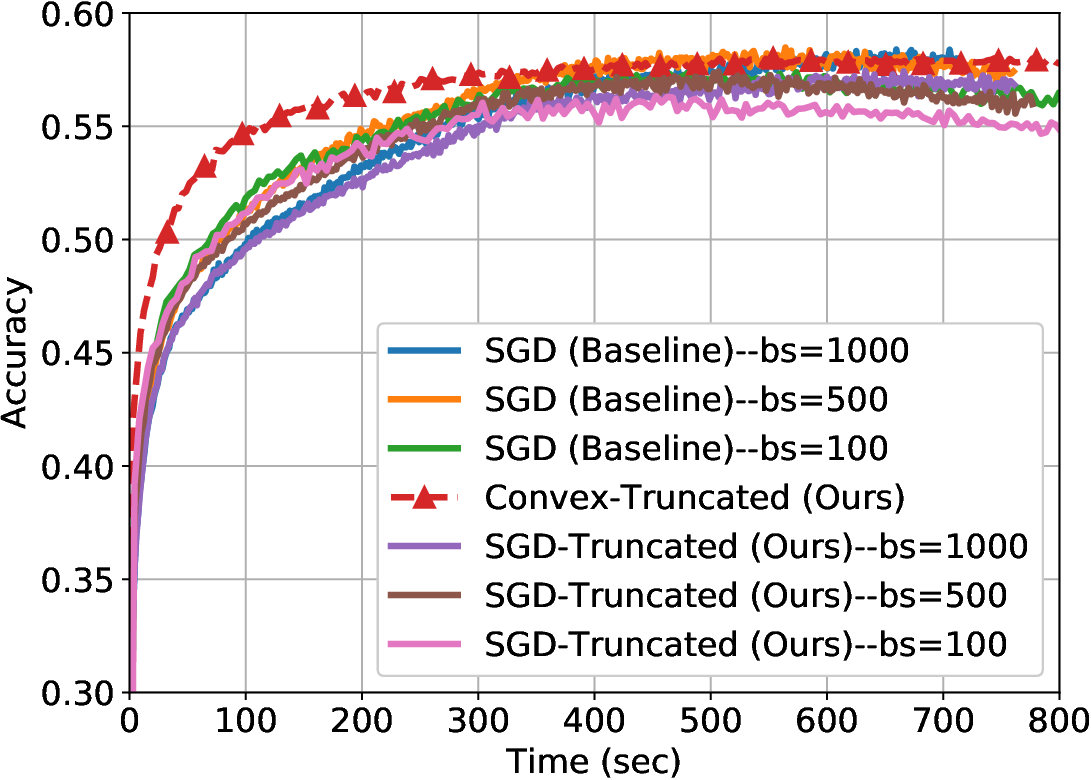}
   \caption{CIFAR-10} \vskip -0.1in
    \label{fig:sgd_testacc_cifar10_bs_ablation}
    \end{subfigure}%
    \hfill
    \begin{subfigure}{.45\textwidth}
       \includegraphics[width=.9\columnwidth,height=.7\columnwidth]{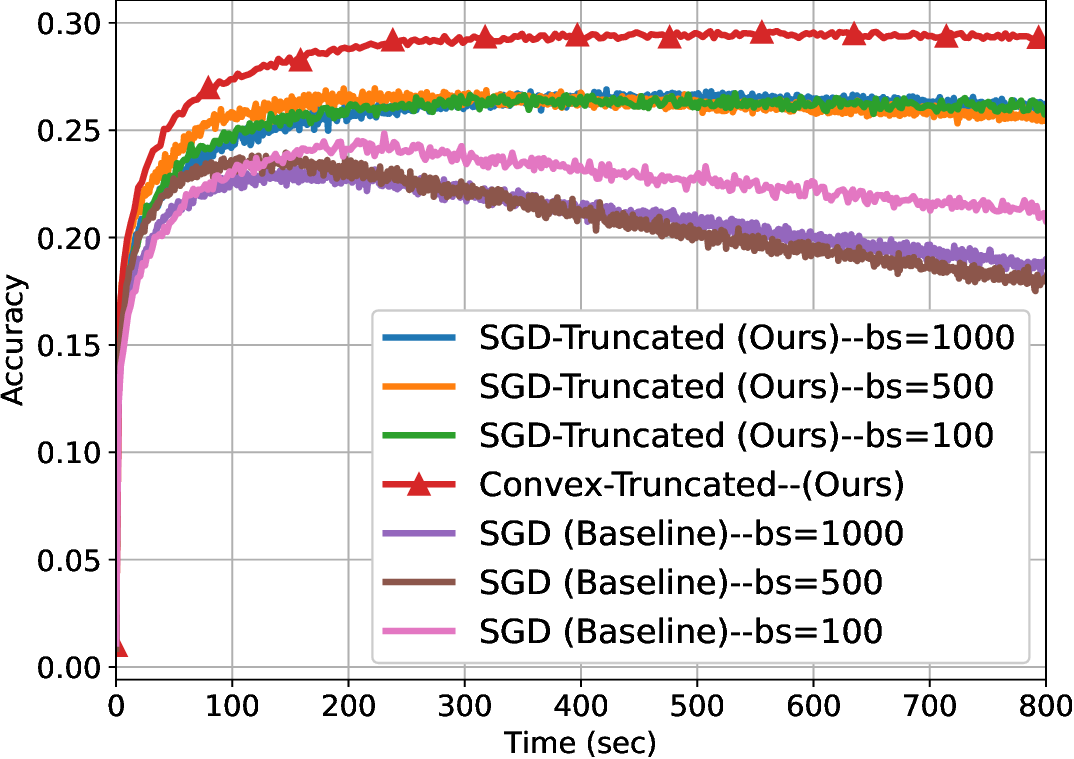}
       \caption{CIFAR-100}    \vskip -0.1in
        \label{fig:sgd_testacc_cifar100_bs_ablation}
    \end{subfigure}
     \caption{Batch size (bs) ablation study, considering \(\text{bs} \in \{100, 500, 1000\}\) for SGD (Baseline) and SGD-Truncated for the experiment in Figure \ref{fig:sgd_testaccs} (in the main paper). We see that smaller batch sizes do not improve the performance of SGD. Therefore, we choose $\text{bs}=1000$ for all the main experiments.}
    \label{fig:sgd_testaccs_bs_ablation}
\end{figure*}

To further bolster our claim that taking the convex dual form from a full-batch case of SGD is sufficient for capturing the performance of standard BN networks, we compare the performance of Convex and Convex-Truncated with SGD with different batch sizes, namely \(\text{bs} \in \{100, 500\}\). In Figure \ref{fig:sgd_testaccs_bs_ablation}, we show that different batch sizes do not improve the generalization performance of BN networks, thereby validating that the convex dual derived from the full-batch model is accurate at representing BN networks when trained with mini-batch SGD. Therefore, for all of our other experiments, we use \(\text{bs} = 1000\). 

\subsubsection{Impact of Different Learning Rates}
\begin{figure*}[t!]
    \centering
    \begin{subfigure}{.45\textwidth}
   \includegraphics[width=.9\columnwidth,height=.7\columnwidth]{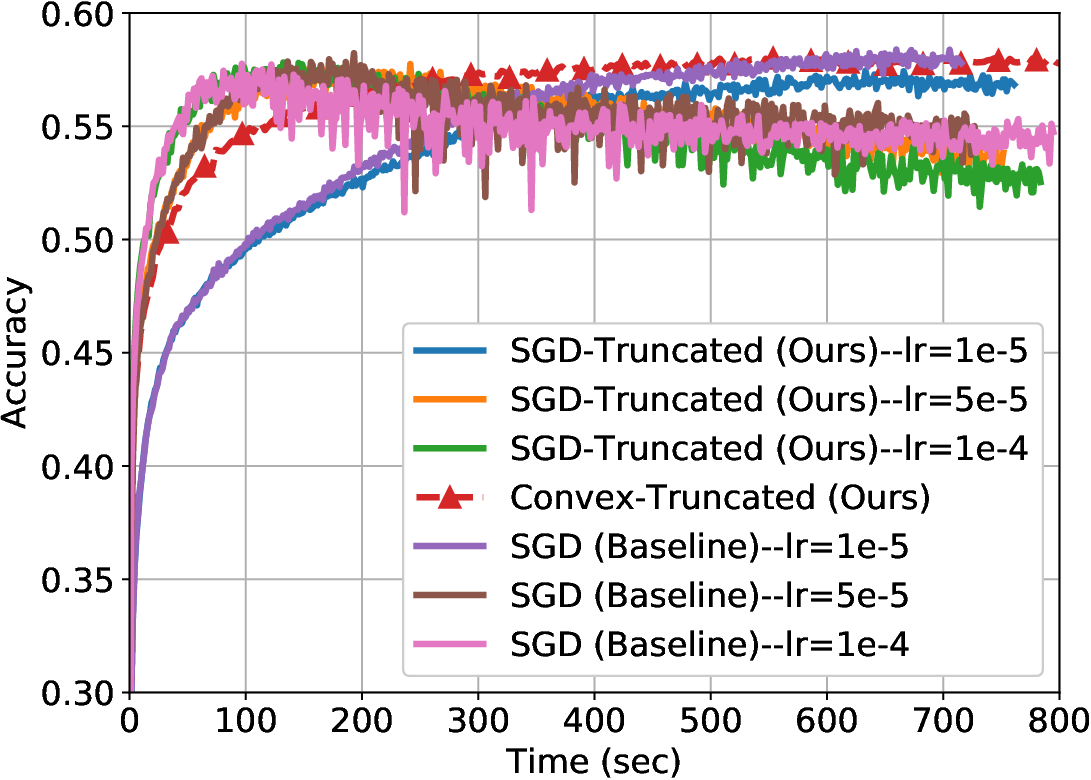}
   \caption{CIFAR-10} \vskip -0.1in
    \label{fig:sgd_testacc_cifar10_lr_ablation}
    \end{subfigure}%
    \hfill
    \begin{subfigure}{.45\textwidth}
       \includegraphics[width=.9\columnwidth,height=.7\columnwidth]{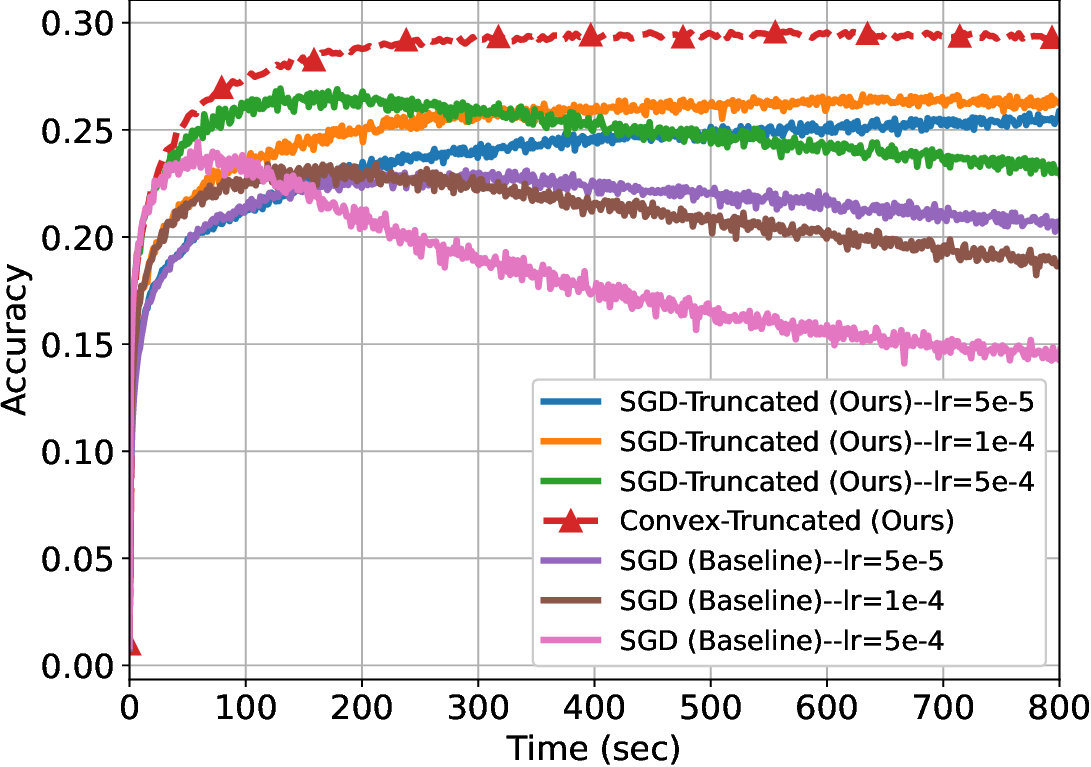}
       \caption{CIFAR-100}    \vskip -0.1in
        \label{fig:sgd_testacc_cifar100_lr_ablation}
    \end{subfigure}
     \caption{Learning rate (lr) ablation study for the experiment in Figure \ref{fig:sgd_testaccs} (in the main paper), considering \(\text{lr} \in \{10^{-5}, 5 \times 10^{-5}, 10^{-4}\}\) for SGD and SGD-Truncated. We see that larger lrs do not improve the performance of SGD, because these models with larger lrs reach the same peak accuracy but overfit quicker. For CIFAR-10, we selected learning rates \(10^{-5}\) for both SGD and SGD-Truncated. For CIFAR-100, we chose learning rates \(5 \times 10^{-5}\) for SGD, and \(10^{-4}\) for SGD-Truncated.}
    \label{fig:sgd_testaccs_lr_ablation}
\end{figure*} 

We plot test curves for all considered learning rates for each program for the baseline program for CIFAR-10 and CIFAR-100 in Figure \ref{fig:sgd_testaccs_lr_ablation}. We see that the different learning rates considered do not significantly change the peak performance, with higher learning rates simply overfitting earlier. 

\subsubsection{Impact of Different Optimizers}\label{sec:Adam_ablation}
\begin{figure*}[ht!]
    \centering
    \begin{subfigure}{.45\textwidth}
   \includegraphics[width=.9\columnwidth,height=.7\columnwidth]{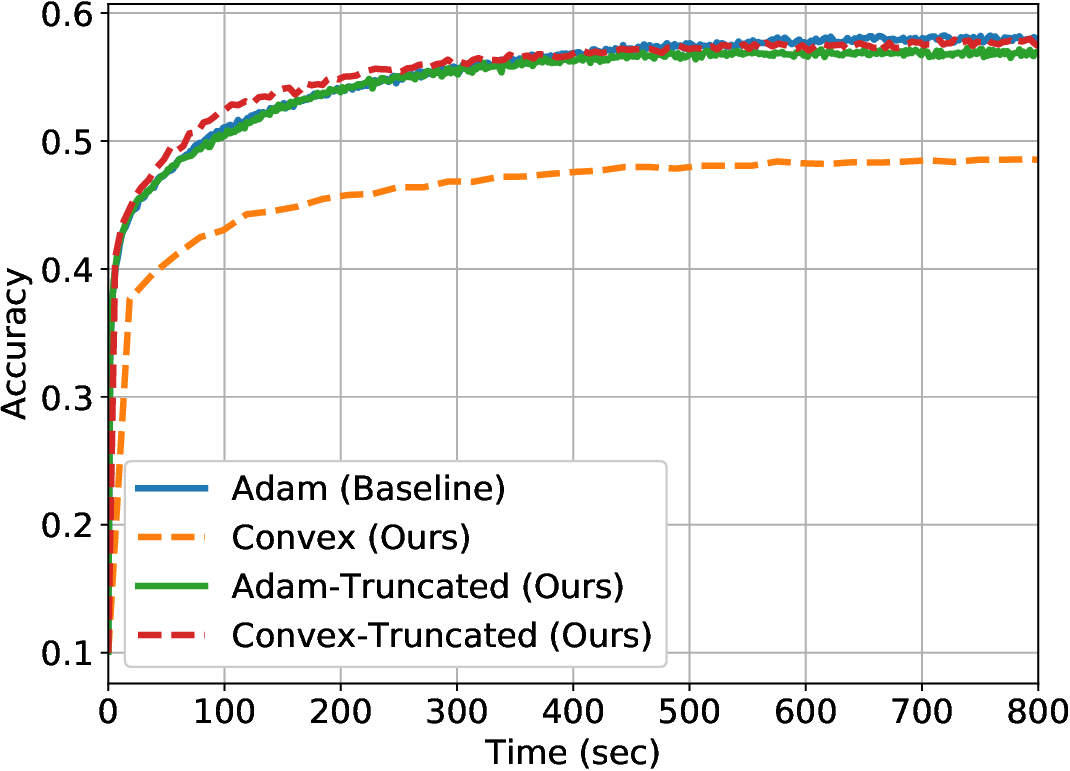}
   \caption{CIFAR-10} \vskip -0.1in
    \label{fig:adam_testacc_cifar10}
    \end{subfigure}%
    \hfill
    \begin{subfigure}{.45\textwidth}
       \includegraphics[width=.9\columnwidth,height=.7\columnwidth]{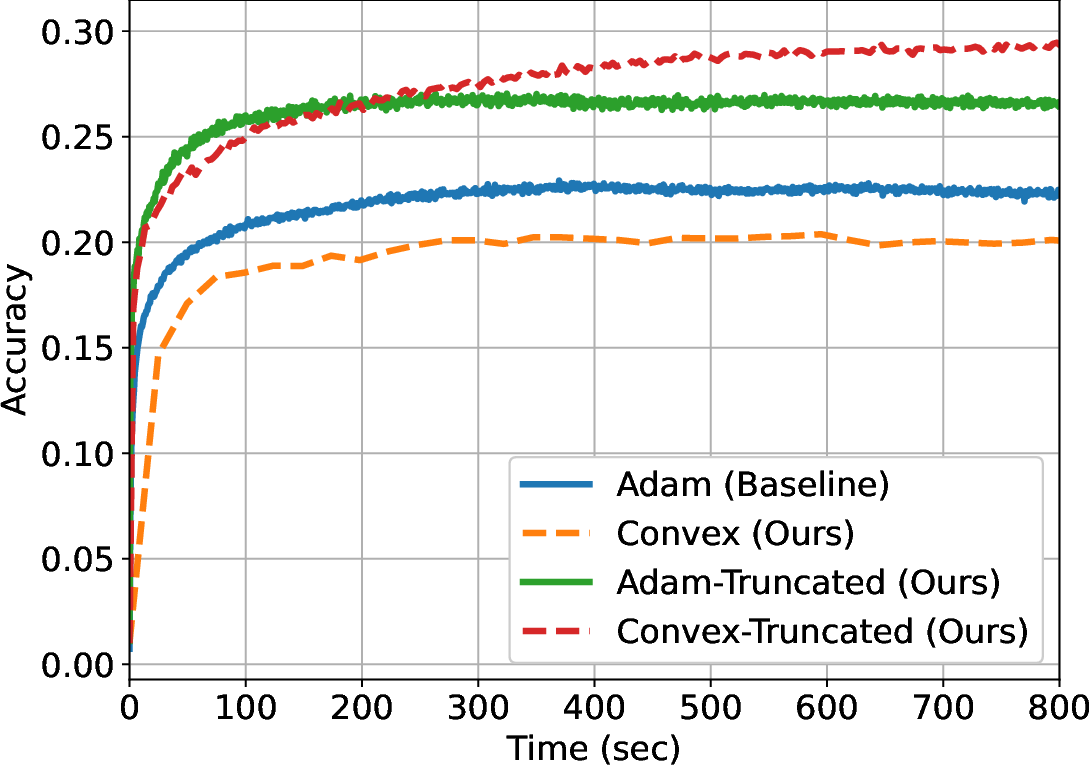}
       \caption{CIFAR-100}    \vskip -0.1in
        \label{fig:adam_testacc_cifar100}
    \end{subfigure}
     \caption{A counterpart of the experiment in Figure \ref{fig:sgd_testaccs} (in the main paper), where we show test accuracies and all problems are solved with Adam. We see that these results mirror those with SGD, where Convex-Truncated performs equally well as or outperforms the baseline.}
    \label{fig:adam_testaccs}
\end{figure*} 

\begin{figure*}[t!]
    \centering
    \begin{subfigure}{.45\textwidth}
   \includegraphics[width=.9\columnwidth,height=.7\columnwidth]{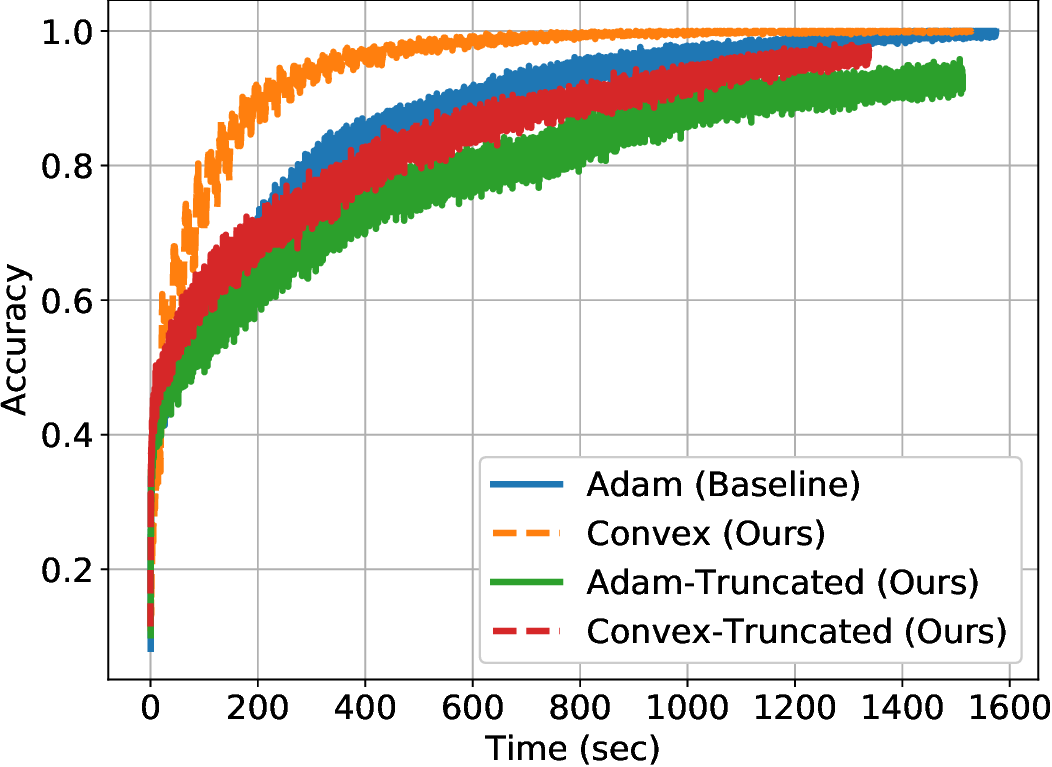}
   \caption{CIFAR-10-Training accuracy } \vskip -0.1in
    \label{fig:adam_trainlonger_cifar10}
    \end{subfigure}%
    \hfill
    \begin{subfigure}{.45\textwidth}
       \includegraphics[width=.9\columnwidth,height=.7\columnwidth]{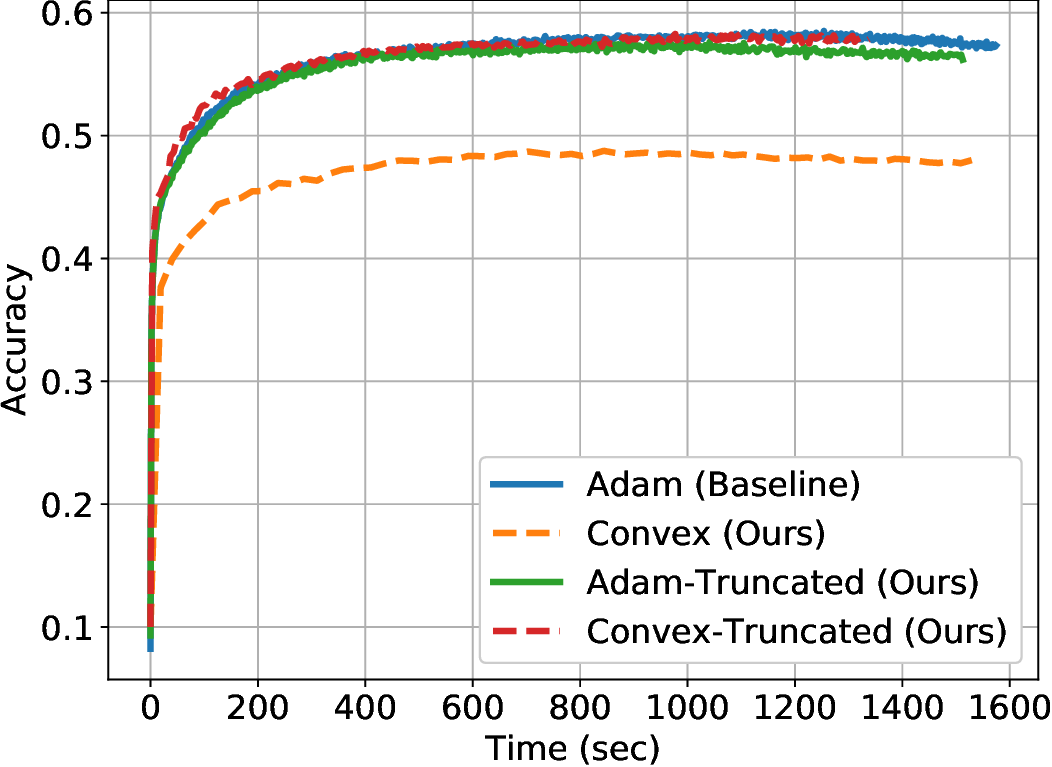}
       \caption{CIFAR-10-Test accuracy}    \vskip -0.1in
        \label{fig:adam_testlonger_cifar10}
    \end{subfigure}\vskip\baselineskip
        \begin{subfigure}{.45\textwidth}
   \includegraphics[width=.9\columnwidth,height=.7\columnwidth]{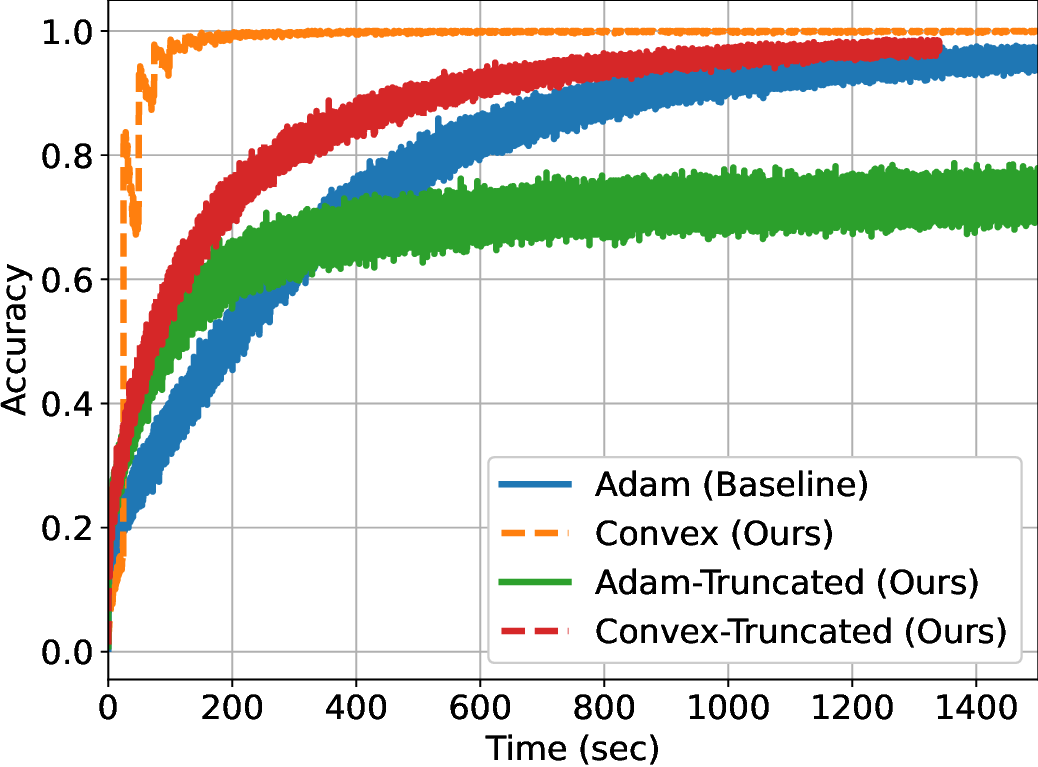}
   \caption{CIFAR-100-Training accuracy } \vskip -0.1in
    \label{fig:adam_trainlonger_cifar100}
    \end{subfigure}%
    \hfill
    \begin{subfigure}{.45\textwidth}
       \includegraphics[width=.9\columnwidth,height=.7\columnwidth]{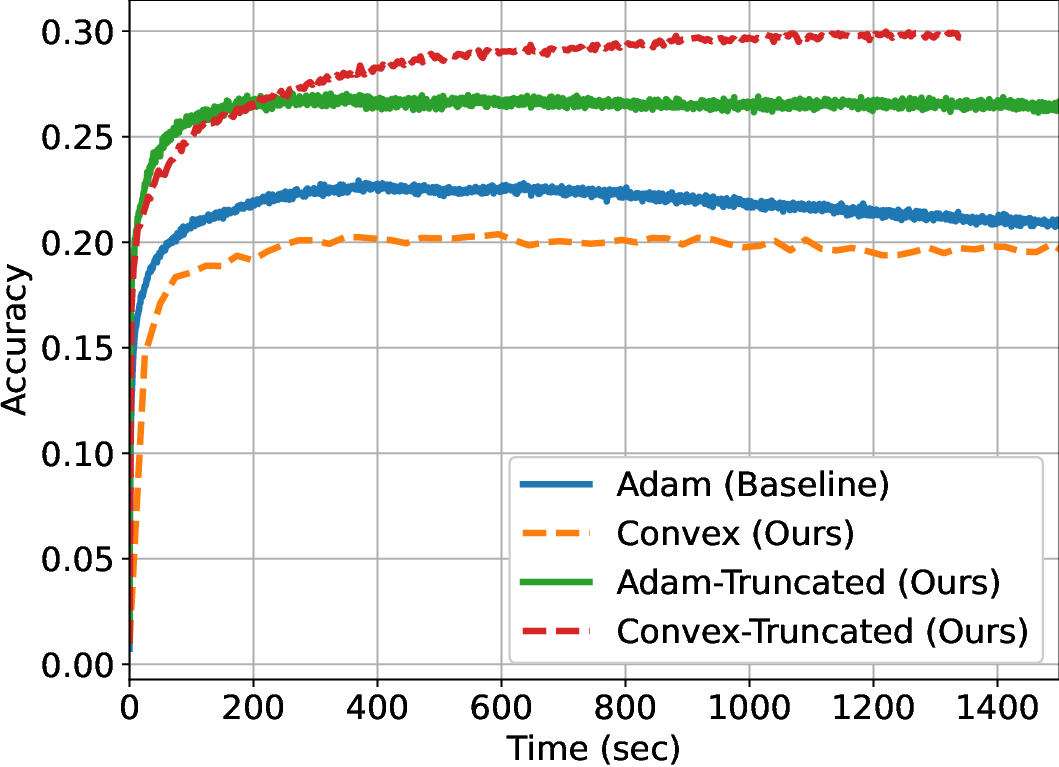}
       \caption{CIFAR-100-Test accuracy }    \vskip -0.1in
        \label{fig:adam_testlonger_cifar100}
    \end{subfigure}
     \caption{Here, we provide a longer trained version of the CIFAR-10 and CIFAR-100 experiments in Figure \ref{fig:adam_testaccs}. We see that training for longer does not change the results, as all models begin to overfit.}
    \label{fig:adam_accs_longer}
\end{figure*}

\begin{figure*}[t!]
    \centering
    \begin{subfigure}{.45\textwidth}
   \includegraphics[width=.9\columnwidth,height=.7\columnwidth]{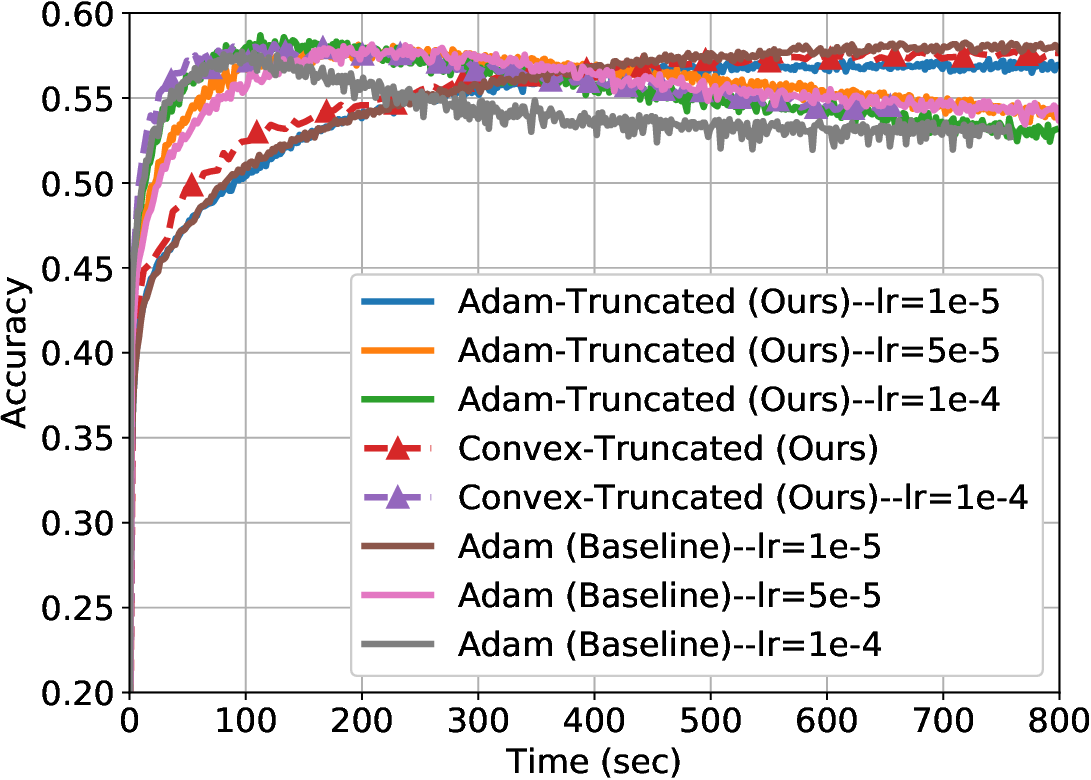}
   \caption{CIFAR-10} \vskip -0.1in
    \label{fig:adam_testacc_cifar10_lr_ablation}
    \end{subfigure}%
    \hfill
    \begin{subfigure}{.45\textwidth}
       \includegraphics[width=.9\columnwidth,height=.7\columnwidth]{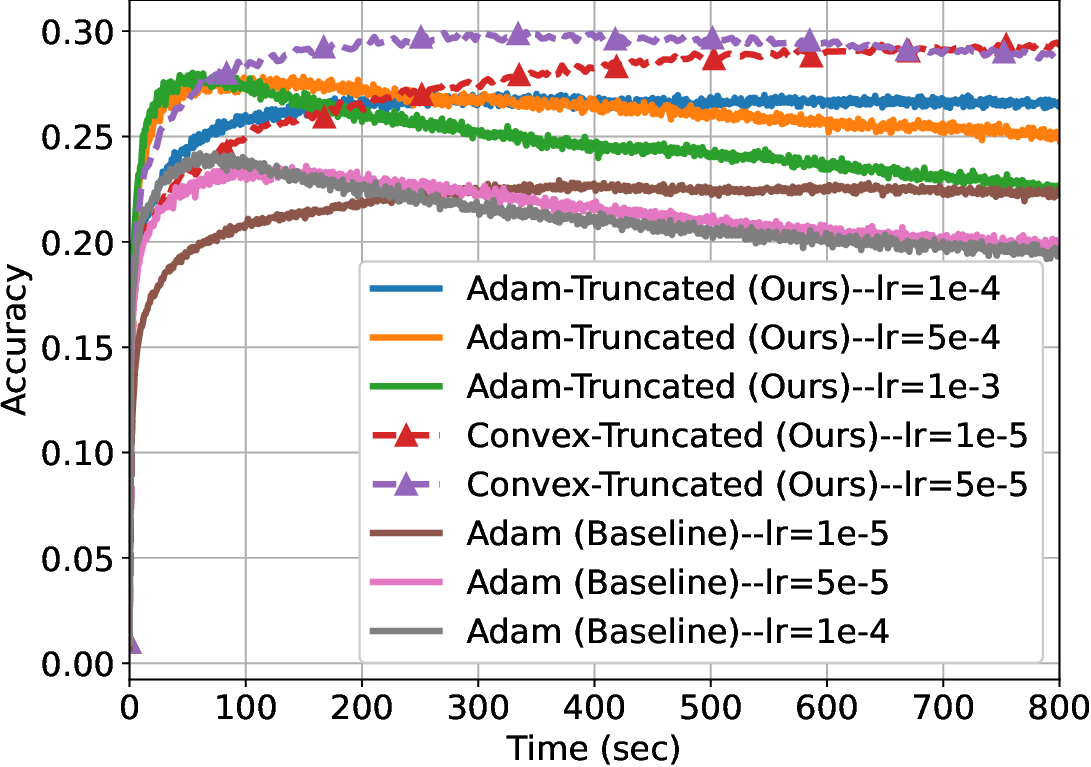}
       \caption{CIFAR-100}    \vskip -0.1in
        \label{fig:adam_testacc_cifar100_lr_ablation}
    \end{subfigure}
     \caption{Learning rate (lr) ablation study for the experiment in Figure \ref{fig:adam_testaccs}, where we consider \(\text{lr} \in \{10^{-5},  5 \times 10^{-5}, 10^{-4}\}\) for Adam. We see that larger learning rates do not improve the performance of Adam or Convex-Truncated, because these models reach the same peak accuracy but simply overfit earlier. For all models, we select a learning rate of \(10^{-5}\) for the CIFAR-10 experiment in Figure \ref{fig:adam_testacc_cifar10}. We choose a learning rate of \(10^{-5}\) for Adam and Convex-Truncated, and a learning rate of \(10^{-4}\) for Adam-Truncated for the CIFAR-100 experiment in Figure \ref{fig:adam_testacc_cifar100}.}
    \label{fig:adam_testaccs_lr_ablation}
\end{figure*} 

We also consider training both the standard non-convex and our convex formulations with the Adam optimizer \citep{kingma2014adam}. Not surprisingly, we see a similar implicit regularization effect when applied to BN networks. For the Adam experiments, we consider the same set of learning rates as the SGD experiments, and fix hyper-parameters \((\beta_1, \beta_2, \epsilon) = (0.9, 0.999, 10^{-8})\), as are the standard default values. We run each program for the number of epochs as the SGD experiments, and consider the same learning rates as the SGD experiments. For CIFAR-10, the chosen learning rates for all the models was \(10^{-5}\). For CIFAR-100, the chosen learning rates for the Adam, Adam-Truncated, Convex, and Convex-Truncated models were \(10^{-5}\), \(10^{-4}\), \(5\times10^{-5}\), and \(10^{-5}\), respectively. \\\\
We plot the results of the test curves in Figure \ref{fig:adam_testaccs}, and plots of longer trained curves in Figure \ref{fig:adam_accs_longer}, as well as the effect of different learning rates in Figure \ref{fig:adam_testaccs_lr_ablation}. We find that with Adam, the results mirror that of SGD. For the best learning rate chosen for each method, the truncated convex formulation performs the best, followed by the truncated non-convex and standard non-convex formulations, followed by the standard convex formulation. We can conclude that our results are robust to the choice of optimizer. 

\subsubsection{Comparison of BN to non-BN Two-Layer Architectures}
\begin{figure*}[t!]
    \centering
    \begin{subfigure}{.45\textwidth}
   \includegraphics[width=.9\columnwidth,height=.7\columnwidth]{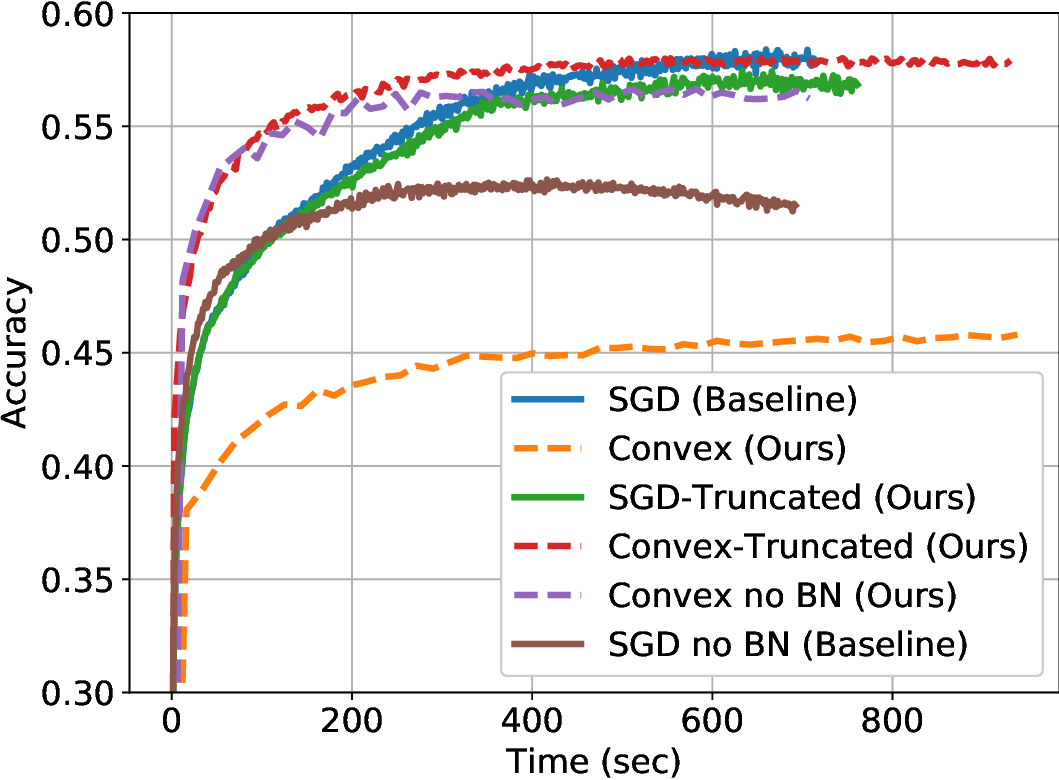}
   \caption{CIFAR-10} \vskip -0.1in
    \label{fig:compare_nobn_testacc_cifar10}
    \end{subfigure}%
    \hfill
    \begin{subfigure}{.45\textwidth}
       \includegraphics[width=.9\columnwidth,height=.7\columnwidth]{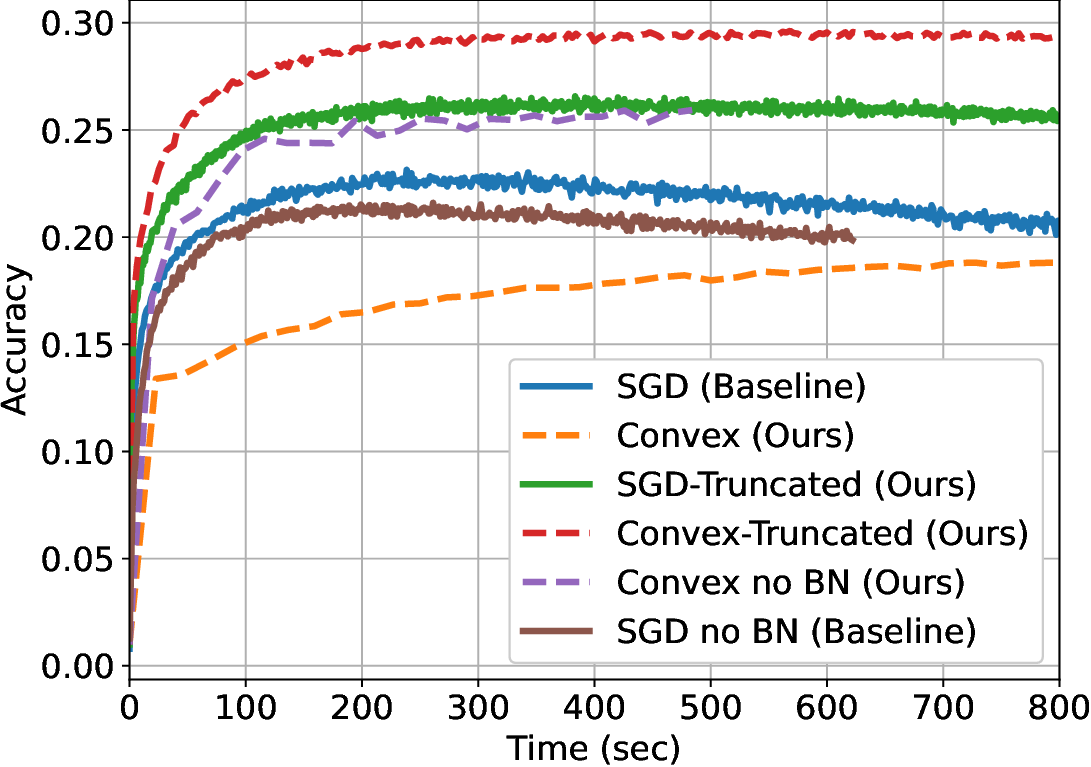}
       \caption{CIFAR-100}    \vskip -0.1in
        \label{fig:compare_nobn_testacc_cifar100}
    \end{subfigure}
     \caption{Comparison of two-layer BN networks presented Figure \ref{fig:sgd_testaccs} (in the main paper), to the non-convex two-layer ReLU architecture without BN, and its equivalent convex dual as presented in \citet{pilanci2020neural}. All problems are solved with SGD and the learning rates are chosen based on test performance. We find that both the SGD and Convex-Truncated programs improve in generalization performance when adding BN, though the margin may vary depending on the problem. We also observe that the Convex BN program without truncation does not perform as well as the standard no-BN architecture.}
    \label{fig:compare_nobn_testaccs}
\end{figure*} 
One might speculate what the impact of adding BN to a two-layer network can have, compared to a simple standard fully-connected architecture. In this section, we compare the standard two-layer network in both its primal and convex dual form, as first presented in \citet{pilanci2020neural}, to the BN networks described in this paper. We let all parameters, i.e. \((n,d,m_1,\beta,\text{bs},C)\) be identical to the setting of the main paper, and again consider a sweep of learning rates as was done for the BN experiments. For the non-convex primal no BN form, the best performance came from a learning rate of \(5\times 10^{-7}\) on CIFAR-10 and \(10^{-5}\) for CIFAR-100, while for the convex no BN problem, the best performance was with a learning rate of \(5 \times 10^{-10}\) on CIFAR-10 and \(5 \times 10^{-9}\) for CIFAR-100.

We then compare the best results of each method in Figure \ref{fig:compare_nobn_testaccs}. We see that for both CIFAR-10 and CIFAR-100, introducing a BN layer can significantly improve the performance of the two-layer network architecture, both when compared to the result found with SGD, and the equivalent convex formulation.  

\section{Additional Theoretical Results}\label{sec:additional_theory}
In this section, we present additional theoretical results that are not included in the main paper due to the page limit.

\subsection{Extensions to Arbitrary Convex Loss Functions} \label{sec:arbitrary_loss}
In this section, we prove that our convex characterization holds for arbitrary convex loss functions including cross entropy and hinge loss. We first restate two-layer training problem with arbitrary convex loss functions after the resealing in Lemma \ref{lemma:scaling_overview}
\begin{align}
    \min_{\theta \in \Theta_s} \mathcal{L}(f_{\theta,L}(\data),\vec{y}) + \beta \normone{\weightl{L}} \label{eq:overview_general_loss}
\end{align}
where $\mathcal{L}(\cdot,\vec{y})$ is a convex loss function. Then, the dual of \eqref{eq:overview_general_loss} is given by
\begin{align*}
    &\max_{\dual}  - \mathcal{L}^*(\dual) \mbox{ s.t. } \max_{\theta \in \Theta_s}\left \vert \dual^T \bn{\actl{L-2} \weightl{1}} \right \vert \leq \beta, 
\end{align*}
where $\mathcal{L}^*$ is the Fenchel conjugate function defined as \citet{boyd_convex}
\begin{align*}
\mathcal{L}^*(\dual) = \max_{\vec{z}} \vec{z}^T \dual - \mathcal{L}(\vec{z},\vec{y})\,.
\end{align*}
The analysis above proves that our convex characterization in the main paper applies to arbitrary loss function. Therefore, optimal parameters for \eqref{eq:overview_primal} are a subset of the set that includes the extreme points of the dual constraint in \eqref{eq:dual_overview} independent of loss function.


\subsection{Proof of Lemma \ref{lemma:scaling_overview}}
Here, we first prove that regularizing the intermediate BN parameters and hidden layer weights don't affect the optimization problem therefore the set of optimal solutions for the problems in \eqref{eq:overview_primal_fullreg} and \eqref{eq:overview_primal_v2} are the same. Then, we show that \eqref{eq:overview_primal_v2} can also be cast as an optimization problem with $\ell_1$ norm regularization penalty on the output layer weights $\weight_L$.
\subsubsection{Effective Regularization in \eqref{eq:overview_primal}} \label{sec:effective_reg}
Here, we prove our claim about the effective regularization. Let us restate the training problem as follows
\begin{align}\label{eq:overview_primal_fullreg}
    &p_{Lr}^*:=\min_{\theta \in \Theta} \frac{1}{2}\norm{\sum_{j=1}^{m_{L-1}}\relu{\bn{\actl{L-2} \weightl{L-1}_j}}\weightscalarl{L}_j-  \labelvec}^2+ \frac{\beta}{2}\sum_{l=1}^{L}\left(\norm{\bnvarvecl{l}}^2+\norm{\bnmeanvecl{l}}^2+\normf{\weightmatl{l}}^2 \right),
\end{align}
where we use $\bnvarvecl{L}=\bnmeanvecl{L}=\vec{0}$ as dummy variables for notational simplicity. Now, let us rewrite \eqref{eq:overview_primal_fullreg} as 
\begin{align*}
    &p_{Lr}^*=\min_{t\geq0}\min_{\theta \in \Theta} \frac{1}{2}\norm{\sum_{j=1}^{m_{L-1}}\relu{\bn{\actl{L-2} \weightl{L-1}_j}}\weightscalarl{L}_j-  \labelvec}^2+  \frac{\beta}{2}\sum_{j=1}^{m_{L-1}}\left({\bnvarl{L-1}_j}^2+{\bnmeanl{L-1}_j}^2+{\weightscalarl{L}_j}^2 \right) +\frac{\beta}{2}t\\
    &\text{s.t. } \sum_{l=1}^{L-2}\left(\norm{\bnvarvecl{l}}^2+\norm{\bnmeanvecl{l}}^2+\normf{\weightmatl{l}}^2 \right)+\normf{\weightmatl{L-1}}^2 \leq t.
\end{align*}
After applying the scaling in Lemma \ref{lemma:scaling_overview}, we then take the dual with respect to $\weightl{L}$ as in the main paper to obtain the following problem
\begin{align}\label{eq:overview_dual_fullreg}
       &p_{Lr}^* \geq  d_{Lr}^*:=\max_{t\geq 0}\max_{\dual}   -\frac{1}{2}\|\dual - \labelvec\|_2^2 + \frac{1}{2}\|\labelvec\|_2^2 +\frac{\beta}{2}t\\ \nonumber
       &\text{ s.t. } \max_{\theta \in \Theta_{sr} } \left \vert \dual^T\relu{\bn{\actl{L-2} \weightl{L-1}}}\right \vert \leq \beta.
\end{align}
where $\Theta_{sr}:=\{\theta \in \Theta :  {\bnvarl{L-1}_j}^2+{\bnmeanl{L-1}_j}^2 =1,\, \forall j \in [m_{L-1}],\, \sum_{l=1}^{L-2}\left(\norm{\bnvarvecl{l}}^2+\norm{\bnmeanvecl{l}}^2+\normf{\weightmatl{l}}^2 \right)+\normf{\weightmatl{L-1}}^2 \leq t \}$. Since
\begin{align*} 
    \bn{\actl{L-2} \weightl{L-1}_j}=  \underbrace{\frac{(\vec{I}_{n}-\frac{1}{n}\vec{1}\vec{1}^T ) \actl{L-2}\weightl{L-1}_j}{\|(\vec{I}_{n}-\frac{1}{n}\vec{1}\vec{1}^T ) \actl{L-2}\weightl{L-1}_j\|_2}}_{\vec{h}(\theta^\prime)}\bnvarl{L-1}_j +\frac{\vec{1}}{\sqrt{n}}\bnmeanl{L-1}_j,
\end{align*}
where $\theta^\prime$ denotes all the parameters except ${\bnvarvecl{L-1}},{\bnmeanvecl{L-1}},{\weightl{L}} $. Then, independent of the value $t$, $\vec{h}(\theta^\prime)$ is always a unit norm vector. Therefore, the maximization constraint in \eqref{eq:overview_dual_fullreg} is independent of the norms of the parameters in $\theta^\prime$. Consequently, regularizing the weights in $\theta^\prime$ does not affect the dual characterization in \eqref{eq:overview_dual_fullreg}.

Based on this observation, \eqref{eq:overview_primal_fullreg} can be equivalently stated as
\begin{align}\label{eq:overview_primal_v2}
    &p_L^*:=\min_{\theta \in \Theta} \frac{1}{2}\norm{f_{\theta,L}(\data)-  \labelvec}^2 + \frac{\beta}{2}\sum_{j=1}^{m_{L-1}}\left({\bnvarl{L-1}_j}^2+{\bnmeanl{L-1}_j}^2+{\weightscalarl{L}_j}^2 \right)
\end{align}

\subsubsection{Rescaling for $\ell_1$ norm Regularization in \eqref{eq:overview_primal_scaled}} 
We first note that similar approaches are also presented in  \citet{neyshabur_reg,infinite_width,pilanci2020neural,ergen2020aistats,ergen2020convexdeep}.

For any $\theta \in \Theta$, we can rescale the parameters as $\bnvarlbar{L-1}_j=\eta_j\bnvarl{L-1}_j$, $\bnmeanlbar{L-1}_j=\eta_j\bnmeanl{L-1}_j$ and $\weightscalarlbar{L}_j=\weightscalarl{L}_j/\eta_j$, for any $\eta_j>0$. Then, the network output becomes
\begin{align*}
    f_{\bar{\theta},L}(\data)&= \sum_{j=1}^{m_{L-1}}  \relu{ \frac{(\vec{I}_{n}-\frac{1}{n}\vec{1}\vec{1}^T ) \actl{L-2}\weightl{L-1}_j}{\|(\vec{I}_{n}-\frac{1}{n}\vec{1}\vec{1}^T ) \actl{L-2} \weightl{L-1}_j\|_2}\bnvarlbar{L-1}_j +\vec{1}\bnmeanlbar{L-1}_j}\weightscalarlbar{L}\\
    &=\sum_{j=1}^{m_{L-1}}  \relu{ \frac{(\vec{I}_{n}-\frac{1}{n}\vec{1}\vec{1}^T ) \actl{L-2}\weightl{L-1}_j}{\|(\vec{I}_{n}-\frac{1}{n}\vec{1}\vec{1}^T ) \actl{L-2}\weightl{L-1}_j\|_2}\bnvarl{L-1}_j +\vec{1}\bnmeanl{L-1}_j}\weightscalarl{L}\\
    &= f_{{\theta},L}(\data),
\end{align*}
which proves $f_{\theta,2}(\data)=f_{\bar{\theta},2}(\data)$. Moreover, we use the following inequality
\begin{align*}
    \frac{1}{2} \sum_{j=1}^{m_{L-1}}\left({\bnvarl{L-1}_j}^2+{\bnmeanl{L-1}_j}^2+{\weightscalarl{L}_j}^2 \right) \geq  \sum_{j=1}^{m_{L-1}} \left\vert{\weightscalarl{L}_j} \right\vert\sqrt{{\bnvarl{L-1}_j}^2+{\bnmeanl{L-1}_j}^2}  ,
\end{align*}
where the equality is achieved when $$\eta_j=\left(\frac{|\weightscalarl{L}_j|}{\sqrt{{\bnvarl{L-1}_j}^2+{\bnmeanl{L-1}_j}^2}} \right)^{\frac{1}{2}}$$ is used. Since this scaling does not alter the right-hand side of the inequality, without loss of generality, we can set $\sqrt{{\bnvarl{L-1}_j}^2+{\bnmeanl{L-1}_j}^2}=1, \forall j \in [m_{L-1}]$. Therefore, the right-hand side becomes $\| \weightl{L}\|_1$ and \eqref{eq:overview_primal_v2} is equivalent to
\begin{align*}
   \min_{\theta \in \Theta_s} \frac{1}{2}\norm{f_{\theta,L}(\data)-  \labelvec}^2 + \beta\| \weightl{L}\|_1
\end{align*}
where $\Theta_s:=\{\theta \in \Theta :  {\bnvarl{L-1}_j}^2+{\bnmeanl{L-1}_j}^2 =1,\, \forall j \in [m_{L-1}] \}$.
\hfill \qed

\subsection{Derivation of the Dual Problem in \eqref{eq:dual_overview}} \label{sec:dual_derivation} \label{sec:appendix_dualderivation}
Let us start with reparameterizing the scaled primal problem in \eqref{eq:overview_primal_scaled} as
\begin{align}\label{eq:supp_primal_scaled}
    &p_L^*=\min_{\theta \in \Theta_s, \hat{\labelvec} \in \mathbb{R}^n}  \frac{1}{2}\norm{\hat{\labelvec}- \labelvec}^2+ \beta \normone{\weightl{L}} \text{ s.t. } \hat{\labelvec}= \sum_{j=1}^{m_{L-1}}\relu{\bn{\actl{L-2} \weightl{L-1}_j}}\weightscalarl{L}_j.
\end{align}
where $\Theta_s:=\{\theta \in \Theta :  {\bnvarl{L-1}_j}^2+{\bnmeanl{L-1}_j}^2 =1,\, \forall j \in [m_{L-1}] \}$. We now form the following Lagrangian 
\begin{align*}
    L(\dual,\hat{\labelvec},\weightl{L})=\frac{1}{2}\norm{\hat{\labelvec}- \labelvec}^2-\dual^T \hat{\labelvec}+ \beta \normone{\weightl{L}}+\dual^T\sum_{j=1}^{m_{L-1}}\relu{\bn{\actl{L-2} \weightl{L-1}_j}}\weightscalarl{L}_j.
\end{align*}
Then, the corresponding dual function is 
\begin{align*}
    g(\dual)&=\min_{\hat{\labelvec},\weightl{L}} L(\dual,\hat{\labelvec},\weightl{L})\\
    &= -\frac{1}{2}\|\dual - \labelvec\|_2^2 + \frac{1}{2}\|\labelvec\|_2^2 \text{ s.t. }  \left \vert \dual^{\top}\relu{\bn{\actl{L-2} \weightl{L-1}_j}}\right \vert \leq \beta, \forall j \in [m_{L-1}].
\end{align*}
Hence, the dual of \eqref{eq:supp_primal_scaled} with respect to $\hat{\labelvec}$ and $\weightl{L}$ is
\begin{align*}
       &p_L^* =\min_{\theta \in \Theta_s }\max_{\dual} g(\dual) =\min_{\theta \in \Theta_s }\max_{\dual} -\frac{1}{2}\|\dual - \labelvec\|_2^2 + \frac{1}{2}\|\labelvec\|_2^2\text{ s.t. }  \left \vert \dual^{\top}\relu{\bn{\actl{L-2} \weightl{L-1}_j}}\right \vert \leq \beta, \forall j \in [m_{L-1}].
\end{align*}
We then change the order of min-max to obtain the following lower bound
\begin{align*}
   &p_L^* \geq  d_L^*:=\max_{\dual}   -\frac{1}{2}\|\dual - \labelvec\|_2^2 + \frac{1}{2}\|\labelvec\|_2^2 \text{ s.t. } \max_{\theta \in \Theta_s } \left \vert \dual^{\top}\relu{\bn{\actl{L-2} \weightl{L-1}}}\right \vert \leq \beta.
\end{align*}
\hfill \qed

\subsection{Hyperplane Arrangements} \label{sec:hyperplane_arrangements}
We first define the set of all hyperplane arrangements for the data matrix $\data$ as
\begin{align*}
\mathcal{H} := \bigcup \big \{ \{\sign(\data \weight)\} \,:\, \weight \in \real^d \big \},
\end{align*}
where $\vert \mathcal{H} \vert \le 2^n$. We now define a new set to denote the indices with positive signs for each element in the set $\mathcal{H}$ as $
\mathcal{S} := \big\{ \{ \cup_{h_i=1} \{i\}  \} \,:\, \vec{h} \in \mathcal{H} \big\}$. With this definition, we note that given an element $S \in \mathcal{S}$, one can introduce a diagonal matrix $\diag(S)\in \real^{n\times n}$ defined as $\diag(S)_{ii}:=\mathbbm{1}[ i \in S]$. Therefore, the output of ReLU activation can be equivalently written as $\relu{\data \weight}=\diag(S)\data\weight$ provided that $\diag(S)\data\weight\geq 0$ and $\left(\vec{I}_n-\diag(S)\right)\data\weight \leq 0 $ are satisfied. One can define more compactly these two constraints as $\left(2\diag(S)-\vec{I}_n\right)\data \weight \geq 0$. We now denote the cardinality of $\mathcal{S}$ as $P$, and obtain the following upperbound
\begin{align*}
    P\le 2 \sum_{k=0}^{r-1} {n-1 \choose k}\le 2r\left(\frac{e(n-1)}{r}\right)^r \,
\end{align*}
where $r:=\mbox{rank}(\data)\leq \min(n,d)$ \citep{ojha2000enumeration,stanley2004introduction,winder1966partitions,cover1965geometrical}. For the rest of the paper, we enumerate all possible hyperplane arrangements in an arbitrary order and denote them as $\{\diag_i\}_{i=1}^P$.

\section{Two-Layer Networks}\label{sec:appendix_twolayer}

\subsection{Closed-Form Solution to the Dual Problem in \eqref{eq:twolayer_dual}}
\label{sec:closedform_dual}
\begin{lem} \label{lemma:twolayer_dual}
Suppose $n \leq d$ and $\data$ is full row-rank, then an optimal solution to \eqref{eq:twolayer_dual} is
\begin{align*}
   \dual^*= \begin{cases}
    \beta\frac{\relu{\labelvec}}{\|\relu{\labelvec}\|_2}-\beta\frac{\relu{-\labelvec}}{\|\relu{-\labelvec}\|_2} &\text{ if } \beta \leq \|\relu{\labelvec}\|_2,\;  \beta \leq  \|\relu{-\labelvec}\|_2\\
     \hfil \relu{\labelvec}-\beta\frac{\relu{-\labelvec}}{\|\relu{-\labelvec}\|_2} &\text{ if } \beta > \|\relu{\labelvec}\|_2 , \;  \beta \leq \relu{-\labelvec}\|_2\\
     \hfil \beta\frac{\relu{\labelvec}}{\|\relu{\labelvec}\|_2}-\relu{-\labelvec} &\text{ if } \beta \leq \|\relu{\labelvec}\|_2,\;  \beta >  \|\relu{-\labelvec}\|_2\\
     \hfil \labelvec &\text{ if } \beta > \|\relu{\labelvec}\|_2,\;  \beta >  \|\relu{-\labelvec}\|_2
    \end{cases}.
\end{align*}
\end{lem}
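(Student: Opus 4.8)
The strategy is to recognize \eqref{eq:twolayer_dual} as a Euclidean projection onto an explicit convex set and then compute that projection coordinatewise. Since $-\tfrac12\norm{\dual-\labelvec}^2+\tfrac12\norm{\labelvec}^2$ is maximized exactly at the feasible point closest to $\labelvec$, everything reduces to understanding the dual constraint. From the full-batch definition \eqref{eq:bn_def}, any $\theta\in\Theta_s$ yields $\bn{\data\weightl{1}}=\bnvarl{1}\,\vec z+\tfrac{\bnmeanl{1}}{\sqrt n}\vec 1$ with $\vec z:=(\vec I_n-\tfrac1n\vec 1\vec 1^T)\data\weightl{1}/\norm{(\vec I_n-\tfrac1n\vec 1\vec 1^T)\data\weightl{1}}$ a unit vector orthogonal to $\vec 1$ and $({\bnvarl{1}})^2+({\bnmeanl{1}})^2=1$, so $\norm{\bn{\data\weightl{1}}}=1$. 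Because $n\le d$ and $\data$ is full row-rank, $\weightl{1}\mapsto\data\weightl{1}$ is onto $\reals^n$, hence $\vec z$ sweeps every unit vector of $\vec 1^\perp$; together with the freedom of $(\bnvarl{1},\bnmeanl{1})$ on the unit circle, and the decomposition of an arbitrary unit vector into its $\vec 1$-component and its normalized $\vec 1^\perp$-component, the map $\theta\mapsto\bn{\data\weightl{1}}$ attains the entire unit sphere of $\reals^n$. Thus $\{\relu{\bn{\data\weightl{1}}}:\theta\in\Theta_s\}=\{\relu{\vec z}:\norm{\vec z}=1\}$.

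Next I would evaluate the semi-infinite constraint. Writing $\dual=\relu{\dual}-\relu{-\dual}$ as the difference of its disjointly supported positive and negative parts, for any unit $\vec z$ one has $\dual^\top\relu{\vec z}=\langle\relu{\dual},\relu{\vec z}\rangle-\langle\relu{-\dual},\relu{\vec z}\rangle\le\langle\relu{\dual},\relu{\vec z}\rangle\le\norm{\relu{\dual}}\,\norm{\relu{\vec z}}\le\norm{\relu{\dual}}$, with equality at $\vec z=\relu{\dual}/\norm{\relu{\dual}}$ when $\relu{\dual}\neq\vec 0$; by symmetry $\sup_{\norm{\vec z}=1}(-\dual^\top\relu{\vec z})=\norm{\relu{-\dual}}$. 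Hence the feasible set of \eqref{eq:twolayer_dual} is exactly $K:=\{\dual\in\reals^n:\norm{\relu{\dual}}\le\beta,\ \norm{\relu{-\dual}}\le\beta\}$, which is closed, convex and nonempty, so $\dual^*$ is the unique Euclidean projection of $\labelvec$ onto $K$.

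To compute $\dual^*$ I would first establish sign preservation: clipping a coordinate $\dual_i$ into the closed segment between $0$ and $\labelvec_i$ cannot increase $\norm{\dual-\labelvec}$, nor $\relu{\dual_i}$, nor $\relu{-\dual_i}$, so the clipped vector remains in $K$ and is no farther from $\labelvec$; by uniqueness of the projection, $\dual^*_i$ already lies between $0$ and $\labelvec_i$, in particular $\sign(\dual^*_i)=\sign(\labelvec_i)$ and $\dual^*_i=0$ whenever $\labelvec_i=0$. Consequently $\norm{\relu{\dual}}$ involves only the coordinates with $\labelvec_i>0$ and $\norm{\relu{-\dual}}$ only those with $\labelvec_i<0$, so the projection separates into two independent Euclidean-ball projections: $\relu{\dual^*}$ is the projection of $\relu{\labelvec}$ onto the radius-$\beta$ ball, and $\relu{-\dual^*}$ that of $\relu{-\labelvec}$ (the nonnegativity constraints being automatically inactive). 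Each such ball projection leaves the point unchanged when its norm is $\le\beta$ and otherwise rescales it radially to norm $\beta$; writing $\dual^*=\relu{\dual^*}-\relu{-\dual^*}$ and splitting into the four regimes for $\norm{\relu{\labelvec}}$ and $\norm{\relu{-\labelvec}}$ relative to $\beta$ yields precisely the four cases in the statement.

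The main obstacle is the first paragraph: it is where both hypotheses ($n\le d$ and full row rank) enter, to certify that $\bn{\data\weightl{1}}$ ranges over the \emph{entire} unit sphere, followed by the variational computation collapsing $\max_{\theta\in\Theta_s}\left\vert\dual^\top\relu{\bn{\data\weightl{1}}}\right\vert\le\beta$ into the two norm bounds. Once $K$ is identified, the projection is routine convex analysis, the one non-obvious ingredient being the sign-preservation observation that makes it separable.
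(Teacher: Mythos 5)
Your proof is correct and follows essentially the same route as the paper: both reduce the semi-infinite dual constraint to $\max\{\|\relu{\dual}\|_2,\|\relu{-\dual}\|_2\}\le\beta$ by showing the BN output sweeps the unit sphere under the $n\le d$, full row-rank hypothesis, and then read off $\dual^*$ as the projection of $\labelvec$ onto that set. Your sign-preservation and separability argument actually supplies the projection computation that the paper merely asserts, so no gaps.
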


\begin{proof}
Since $n \leq d$ and $\data$ is full row-rank, we have
\begin{align*}
    \max_{\theta \in \Theta_s} \left \vert \dual^T\relu{\frac{(\vec{I}_{n}-\frac{1}{n}\vec{1}\vec{1}^T ) \data\weightl{1}}{\|(\vec{I}_{n}-\frac{1}{n}\vec{1}\vec{1}^T ) \data\weightl{1}\|_2}\bnvarl{1} +\frac{\vec{1}}{\sqrt{n}}\bnmeanl{1}}\right \vert &\leq \max\left\{ \|\relu{\dual}\|_2,\|\relu{-\dual}\|_2 \right\} 
    \max_{\substack{\vec{1}^T\vec{u}=0\\ \norm{\vec{u}}=\norm{\vec{s}}=1}}\left \| \relu{\begin{bmatrix} \vec{u}& \frac{\vec{1}}{\sqrt{n}} \end{bmatrix}\vec{s}}\right\|_2\\
    &\leq \max\left\{ \|\relu{\dual}\|_2,\|\relu{-\dual}\|_2 \right\}  \max_{\substack{ \norm{\vec{u}}=1}}\left \| \relu{\vec{u}}\right\|_2 \\
    &\leq \max\left\{ \|\relu{\dual}\|_2,\|\relu{-\dual}\|_2 \right\},
\end{align*}
where $\vec{s}=\begin{bmatrix} \bnvarl{1} & \bnmeanl{1} \end{bmatrix}^T$ and the equality is achieved when 
\begin{align*}
 \weightl{1}=\data^{\dagger}(\dual-\min_i \dualscalar_i), \; \vec{s}=\frac{1}{\|\dual\|_2}\begin{bmatrix} \|\dual-\frac{1}{n}\vec{1}\vec{1}^T \dual\|_2\\ \frac{1}{{\sqrt{n}} }\vec{1}^T\dual \end{bmatrix}.
\end{align*}
Thus, the dual problem 
\eqref{eq:twolayer_dual} can be reformulated as 
\begin{align*}
    \max_{\dual} -\frac{1}{2}\|\dual-\labelvec\|_2^2+\frac{1}{2}\|\labelvec\|_2^2 \text{ s.t. } \max\left\{ \|\relu{\dual}\|_2,\|\relu{-\dual}\|_2  \right\} \leq \beta.
\end{align*}
The problem above can be maximized using the following optimal dual parameter
\begin{align*}
 \dual^*= \begin{cases}
    \beta\frac{\relu{\labelvec}}{\|\relu{\labelvec}\|_2}-\beta\frac{\relu{-\labelvec}}{\|\relu{-\labelvec}\|_2} &\text{ if } \beta \leq \|\relu{\labelvec}\|_2,\;  \beta \leq  \|\relu{-\labelvec}\|_2\\
     \hfil \relu{\labelvec}-\beta\frac{\relu{-\labelvec}}{\|\relu{-\labelvec}\|_2} &\text{ if } \beta > \|\relu{\labelvec}\|_2 , \;  \beta \leq \relu{-\labelvec}\|_2\\
     \hfil \beta\frac{\relu{\labelvec}}{\|\relu{\labelvec}\|_2}-\relu{-\labelvec} &\text{ if } \beta \leq \|\relu{\labelvec}\|_2,\;  \beta >  \|\relu{-\labelvec}\|_2\\
     \hfil \labelvec &\text{ if } \beta > \|\relu{\labelvec}\|_2,\;  \beta >  \|\relu{-\labelvec}\|_2
    \end{cases}.
\end{align*}
\end{proof}

\subsection{Proof of Theorem \ref{theo:twolayer_closedform} }
 For the solution in Lemma \ref{lemma:twolayer_dual}, the corresponding extreme points for the two-sided constraint are
\begin{align*}
    {\weightmatl{1}}^*= \begin{cases}
   \left( \data^\dagger \relu{\labelvec}, \data^\dagger \relu{-\labelvec}\right) &\text{ if } \beta \leq \|\relu{\labelvec}\|_2,\;  \beta \leq  \|\relu{-\labelvec}\|_2\\
     \hfil  \data^\dagger \relu{-\labelvec} &\text{ if } \beta > \|\relu{\labelvec}\|_2 , \;  \beta \leq \relu{-\labelvec}\|_2\\
      \hfil \data^\dagger \relu{\labelvec} &\text{ if } \beta \leq \|\relu{\labelvec}\|_2,\;  \beta >  \|\relu{\labelvec}\|_2\\
    \hfil  \vec{0} &\text{ if } \beta > \|\relu{\labelvec}\|_2,\;  \beta >  \|\relu{-\labelvec}\|_2\
    \end{cases}
\end{align*}
and
\begin{align*}
  \begin{bmatrix} {\bnvarl{1}_1}^* \\{ \bnmeanl{1}_1}^*\end{bmatrix} =\frac{1}{\|\relu{\labelvec}\|_2}\begin{bmatrix} \|\relu{\labelvec}-\frac{1}{n}\vec{1}\vec{1}^T \relu{\labelvec}\|_2\\ \frac{1}{{\sqrt{n}} }\vec{1}^T\relu{\labelvec} \end{bmatrix} \quad  \begin{bmatrix} {\bnvarl{1}_2}^* \\{ \bnmeanl{1}_2}^*\end{bmatrix} =\frac{1}{\|\relu{-\labelvec}\|_2}\begin{bmatrix} \|\relu{-\labelvec}-\frac{1}{n}\vec{1}\vec{1}^T \relu{-\labelvec}\|_2\\ \frac{1}{{\sqrt{n}} }\vec{1}^T\relu{-\labelvec} \end{bmatrix}.
\end{align*}
We now substitute these solutions into the primal problem \eqref{eq:twolayer_primal_scaled} and then take the derivative with respect to the output layer weights $\weightl{2}$. Since the primal problem is linear and convex provided that the first layer weights are fixed as ${\weightmatl{1}}^*$, we obtain the following closed-form solutions for the output layer weights
\begin{align*}
    {\weightl{2}}^*= \begin{cases}
   \begin{bmatrix} \|\relu{\labelvec}\|_2-\beta\\-\| \relu{-\labelvec}\|_2+\beta\end{bmatrix} &\text{ if } \beta \leq \|\relu{\labelvec}\|_2,\;  \beta \leq  \|\relu{-\labelvec}\|_2\\
      \hfil -\| \relu{-\labelvec}\|_2+\beta &\text{ if } \beta > \|\relu{\labelvec}\|_2 , \;  \beta \leq \|\relu{-\labelvec}\|_2\\
     \hfil \| \relu{\labelvec}\|_2-\beta &\text{ if } \beta \leq \|\relu{\labelvec}\|_2,\;  \beta >  \|\relu{-\labelvec}\|_2\\
    \hfil 0 &\text{ if } \beta > \|\relu{\labelvec}\|_2,\;  \beta >  \|\relu{-\labelvec}\|_2
    \end{cases}.
\end{align*}
These set of weights, $\{ {\weightmatl{1}}^*,{\bnvarl{1}}^*, {\bnmeanl{1}}^*, {\weightl{2}}^*\}$, achieves $p_2^*=d_2^*$, therefore, strong duality holds.

\hfill \qed

\subsection{Proof of Theorem \ref{theo:twolayer_convexprogram} }
We first note that if the number of neurons satisfies $m_1 \geq m^*$ for some $m^* \in \mathbb{N}$, where $m^* \leq n+1$, then strong duality holds, i.e., $p_2^*=d_2^*$ (see Section 2 of \citet{pilanci2020neural}). In the weakly regularized regime where the network interpolates the training data, there are even simpler ways to verify this upperbound, e.g., \cite{zhang2021understanding} show how to construct a solution that exactly interpolates the training data with $m_1=n$, and therefore the ReLU network fits any real valued $\mathbf{y} \in \mathbb{R}^n$ with $n$ neurons.

Following the steps in \citet{pilanci2020neural}, we first focus on the dual constraint in \eqref{eq:twolayer_dual}
\begin{align*}
     \max_{\substack{\weightl{1}\\ {\bnvarl{1}}^2+{\bnmeanl{1}}^2=1}} \left \vert \dual^T\relu{\frac{(\vec{I}_{n}-\frac{1}{n}\vec{1}\vec{1}^T ) \data\weightl{1}}{\|(\vec{I}_{n}-\frac{1}{n}\vec{1}\vec{1}^T ) \data\weightl{1}\|_2}\bnvarl{1} +\frac{\vec{1}}{\sqrt{n}}\bnmeanl{1}}\right \vert &=   
    \max_{\substack{ \norm{\vec{q}}=\norm{\vec{s}}=1}} \left \vert \dual^T  \relu{\begin{bmatrix} \vec{U}\vec{q}& \frac{\vec{1}}{\sqrt{n}} \end{bmatrix}\vec{s}}\right \vert\\
     &=   
    \max_{\substack{ \norm{\vec{s}^\prime}=1}} \left \vert \dual^T  \relu{ \vec{U}^\prime\vec{s}^\prime}\right \vert\\
      &=  \max_{i \in [P]} 
    \max_{\substack{ \norm{\vec{s}^\prime}=1\\ (2\diag_i-\vec{I}_n)\vec{U}^\prime\vec{s}^\prime\geq 0}} \left \vert \dual^T  \diag_i\vec{U}^\prime\vec{s}^\prime\right \vert,
\end{align*}
where $\vec{q}:=\bm{\Sigma}\vec{V}^T \weightl{1}/\norm{\bm{\Sigma}\vec{V}^T \weightl{1}}$, $\vec{s}:=\begin{bmatrix} \bnvarl{1} & \bnmeanl{1} \end{bmatrix}^T$, $\vec{s}^\prime:=\begin{bmatrix} \bnvarl{1} \vec{q}^T & \bnmeanl{1} \end{bmatrix}^T$, and $\vec{U}^\prime:=\begin{bmatrix} \vec{U}& \frac{\vec{1}}{\sqrt{n}} \end{bmatrix}$. Now, let us consider the single side of this constraint and introduce a dual variable $\bm{\rho}$ for the constraint as follows
\begin{align*}
        \max_{i \in [P]}
    \max_{\substack{ \norm{\vec{s}^\prime}=1\\ (2\diag_i-\vec{I}_n)\vec{U}^\prime\vec{s}^\prime\geq 0}}  \dual^T  \diag_i\vec{U}^\prime\vec{s}^\prime&=       \max_{i \in [P]} 
   \min_{\bm{\rho} \geq 0} \max_{\substack{ \norm{\vec{s}^\prime}=1}}  \left(\dual^T  \diag_i\vec{U}^\prime +\bm{\rho}^T(2\diag_i-\vec{I}_n)\vec{U}^\prime\right)\vec{s}^\prime\\
   &=\max_{i \in [P]}
   \min_{\bm{\rho}\geq 0} \norm{\dual^T  \diag_i\vec{U}^\prime +\bm{\rho}^T(2\diag_i-\vec{I}_n)\vec{U}^\prime}.
\end{align*}
Therefore, we have
\begin{align*}
    \max_{\theta \in \Theta_s}  \dual^T\relu{\frac{(\vec{I}_{n}-\frac{1}{n}\vec{1}\vec{1}^T ) \data\weightl{1}}{\|(\vec{I}_{n}-\frac{1}{n}\vec{1}\vec{1}^T ) \data\weightl{1}\|_2}\bnvarl{1} +\frac{\vec{1}}{\sqrt{n}}\bnmeanl{1}} \leq \beta \iff  & \forall i \in [P], 
   \min_{\bm{\rho}\geq 0} \norm{\dual^T  \diag_i\vec{U}^\prime +\bm{\rho}^T(2\diag_i-\vec{I}_n)\vec{U}^\prime} \leq \beta \\
   \iff  &  \forall i \in [P], \,  \exists \bm{\rho}_i \geq \vec{0} \text{ s.t. }
    \norm{\dual^T  \diag_i\vec{U}^\prime +\bm{\rho}_i^T(2\diag_i-\vec{I}_n)\vec{U}^\prime} \leq \beta.
\end{align*}
Similar arguments also hold for the negative side of the absolute value constraint. Therefore, we can reformulate \eqref{eq:twolayer_dual} as a convex optimization problem with variables $\dual, \bm{\rho}_i, \bm{\rho}_i^\prime \in \mathbb{R}^n, \forall i \in [P]$ and $2P$ constraints as follows
\begin{align}\label{eq:twolayer_dual_appendix}
   d_2^*=&\max_{\substack{\dual, \bm{\rho}_i}}   -\frac{1}{2}\|\dual-\labelvec\|_2^2+\frac{1}{2}\|\labelvec\|_2^2 ~ \text{ s.t. } \begin{split}
      & \norm{\dual^T  \diag_i\vec{U}^\prime +\bm{\rho}_i^T(2\diag_i-\vec{I}_n)\vec{U}^\prime} \leq \beta \\
       &\norm{-\dual^T  \diag_i\vec{U}^\prime +\bm{\rho}_i^{\prime^T}(2\diag_i-\vec{I}_n)\vec{U}^\prime} \leq \beta 
   \end{split},\;\forall i \in [P].
\end{align}
As noted in \citet{pilanci2020neural}, since the problem in \eqref{eq:twolayer_dual_appendix} is strictly feasible for the particular choice of parameters $\dual=\bm{\rho}_i=\bm{\rho}_i^\prime=\vec{0},\forall \in  [P]$, Slater's condition and therefore strong duality holds \citep{boyd_convex}. We now write \eqref{eq:twolayer_dual_appendix} in a Lagrangian form with the parameters $\bm{\lambda},\bm{\lambda}^\prime \in \mathbb{R}^P$ as 
\begin{align}\label{eq:twolayer_dual_appendix2}
   d_2^*=\min_{\bm{\lambda},\bm{\lambda}^\prime\geq \vec{0}}\max_{\substack{\dual\\ \bm{\rho}_i\geq \vec{0}, \forall i }}   -\frac{1}{2}\|\dual-\labelvec\|_2^2+\frac{1}{2}\|\labelvec\|_2^2 &+ \sum_{i=1}^P \lambda_i (\beta-\norm{\dual^T  \diag_i\vec{U}^\prime +\bm{\rho}_i^T(2\diag_i-\vec{I}_n)\vec{U}^\prime}) \nonumber\\  &+ \sum_{i=1}^P \lambda_i^\prime (\beta-\norm{-\dual^T  \diag_i\vec{U}^\prime +\bm{\rho}_i^{\prime^T}(2\diag_i-\vec{I}_n)\vec{U}^\prime}).
\end{align}
We next introduce additional variables $\vec{r}_i,\vec{r}_i^\prime \in \mathbb{R}^d,\forall i \in [P]$ to represent \eqref{eq:twolayer_dual_appendix2} as
\begin{align}\label{eq:twolayer_dual_appendix3}
   d_2^*=\min_{\bm{\lambda},\bm{\lambda}^\prime\geq \vec{0}}\max_{\substack{\dual\\ \bm{\rho}_i\geq \vec{0}, \forall i }}  \min_{\substack{\vec{r}_i: \|\vec{r}_i\|_2 \leq 1\\\vec{r}_i^\prime\: \|\vec{r}_i^\prime\|_2 \leq 1}} -\frac{1}{2}\|\dual-\labelvec\|_2^2+\frac{1}{2}\|\labelvec\|_2^2 &+ \sum_{i=1}^P \lambda_i \left(\beta- \left(\dual^T  \diag_i\vec{U}^\prime +\bm{\rho}_i^T(2\diag_i-\vec{I}_n)\vec{U}^\prime\right)\vec{r}_i\right) \nonumber\\  &+ \sum_{i=1}^P \lambda_i^\prime \left(\beta+\left(-\dual^T  \diag_i\vec{U}^\prime +\bm{\rho}_i^{\prime^T}(2\diag_i-\vec{I}_n)\vec{U}^\prime\right)\vec{r}_i^\prime\right).
\end{align}
Now, we remark that the objective function in \eqref{eq:twolayer_dual_appendix3} is concave in $\dual,\bm{\rho}_i,\bm{\rho}_i^\prime$ and convex in $\vec{r}_i,\vec{r}_i^\prime, \forall i \in [P]$. In addition, the constraint set for the additional variables, i.e., $\|\vec{r}_i\|_2, \|\vec{r}_i^\prime \|_2 \leq 1$, are convex and compact. Then, we use Sion's minimax theorem \citep{sion_minimax} to change the order of the inner max-min problems as
\begin{align}\label{eq:twolayer_dual_appendix4}
   d_2^*=\min_{\bm{\lambda},\bm{\lambda}^\prime\geq \vec{0}} \min_{\substack{\vec{r}_i: \|\vec{r}_i\|_2 \leq 1\\\vec{r}_i^\prime\: \|\vec{r}_i^\prime\|_2 \leq 1}} \max_{\substack{\dual\\ \bm{\rho}_i\geq \vec{0}, \forall i }} -\frac{1}{2}\|\dual-\labelvec\|_2^2+\frac{1}{2}\|\labelvec\|_2^2 &+ \sum_{i=1}^P \lambda_i \left(\beta- \left(\dual^T  \diag_i\vec{U}^\prime +\bm{\rho}_i^T(2\diag_i-\vec{I}_n)\vec{U}^\prime\right)\vec{r}_i\right) \nonumber\\  &+ \sum_{i=1}^P \lambda_i^\prime \left(\beta+ \left(-\dual^T  \diag_i\vec{U}^\prime +\bm{\rho}_i^{\prime^T}(2\diag_i-\vec{I}_n)\vec{U}^\prime\right)\vec{r}_i^\prime\right).
\end{align}
Thus, we can now perform the maximization over the variables $\dual,\bm{\rho}_i,\bm{\rho}_i^\prime$ to simplify \eqref{eq:twolayer_dual_appendix4} into the following form
\begin{align}\label{eq:twolayer_dual_appendix5}
   p_{2c}^*=&\min_{\bm{\lambda},\bm{\lambda}^\prime\geq \vec{0}} \min_{\substack{\vec{r}_i: \|\vec{r}_i\|_2 \leq 1\\\vec{r}_i^\prime\: \|\vec{r}_i^\prime\|_2 \leq 1}}  \frac{1}{2}\left\|\sum_{i=1}^P \left(\lambda_i\diag_i \vec{U}^\prime\vec{r}_i-\lambda_i^\prime\diag_i \vec{U}^\prime\vec{r}_i^\prime\right)-\labelvec \right\|_2^2 +\beta \sum_{i=1}^P (\lambda_i+\lambda_i^\prime) \\ \nonumber & \text{ s.t. } (2\diag_i-\vec{I}_n)\vec{U}^\prime\vec{r}_i\geq 0,\;(2\diag_i-\vec{I}_n)\vec{U}^\prime\vec{r}_i^\prime\geq 0,\; \forall i \in [P].
\end{align}
We then apply a set variable changes as $\vec{s}_i=\lambda_i \vec{r}_i$ and $\vec{s}_i^\prime=\lambda_i^\prime \vec{r}_i^\prime,\, \forall i \in [P]$ to obtain the following problem
\begin{align}\label{eq:twolayer_dual_appendix6}
   p_{2c}^*=&\min_{\bm{\lambda},\bm{\lambda}^\prime\geq \vec{0}} \min_{\substack{\vec{s}_i: \|\vec{s}_i\|_2 \leq \lambda_i\\\vec{s}_i^\prime\: \|\vec{s}_i^\prime\|_2 \leq \lambda_i^\prime}}  \frac{1}{2}\left\|\sum_{i=1}^P \diag_i \vec{U}^\prime(\vec{s}_i-\vec{s}_i^\prime)-\labelvec \right\|_2^2 +\beta \sum_{i=1}^P (\lambda_i+\lambda_i^\prime) \\ \nonumber & \text{ s.t. } (2\diag_i-\vec{I}_n)\vec{U}^\prime\vec{s}_i\geq 0,\;(2\diag_i-\vec{I}_n)\vec{U}^\prime\vec{s}_i^\prime\geq 0,\; \forall i \in [P].
\end{align}
Finally, since the optimality conditions require $\|\vec{s}_i\|_2 = \lambda_i $ and $\|\vec{s}_i^\prime\|_2 = \lambda_i^\prime, \forall i \in [P] $, we write \eqref{eq:twolayer_dual_appendix6} as a single convex minimization problem as
\begin{align}\label{eq:twolayer_convexprogram_proof}
    p_{2c}^*=&\min_{\vec{s}_i,\vec{s}_i^\prime}\frac{1}{2}\left\|\sum_{i=1}^P \diag_i \vec{U}^\prime(\vec{s}_i-\vec{s}_i^\prime)-\labelvec \right\|_2^2 +\beta \sum_{i=1}^P (\|\vec{s}_i\|_2+\|\vec{s}_i^\prime\|_2)\\ \nonumber  &\text{ s.t. } (2\diag_i-\vec{I}_n)\vec{U}^\prime\vec{s}_i\geq 0,\;(2\diag_i-\vec{I}_n)\vec{U}^\prime\vec{s}_i^\prime\geq 0,\; \forall i \in [P].
\end{align}
Now, we note that since \eqref{eq:twolayer_dual_appendix} is convex and Slater's condition holds as shown above, we have $p_{2c}^*=d_2^*$. In addition, as proven in \cite{pilanci2020neural} strong duality also holds for the original non-convex problem, i.e., $p_2^*=d_2^*$, therefore, we get  $p_2^*= p_{2c}^*=d_2^*$.

\noindent\textbf{Constructing an optimal solution to the primal problem:}\\
One can also construct an optimal solution to the non-convex training problem in \eqref{eq:twolayer_primal_scaled} as follows
\begin{align*} 
  & \left ({ \weightl{1}_{j_{1i}}}^*,{\bnvarl{1}_{j_{1i}}}^* , {\bnmeanl{1}_{j_{1i}}}^*,{ \weightscalarl{2}_{j_{1i}}}^*\right) 
  =  \bigg(\mathbf{V}\mathbf{\Sigma}^{-1}\mathbf{s}_{i,1:r}^*, \frac{\|\mathbf{s}_{i,1:r}^*\|_2}{||\mathbf{s}_i^*||_2^{\frac{1}{2}}}, \frac{{s}_{i,r+1}^*}{||\mathbf{s}_{i}^*||_2^{\frac{1}{2}}}, ||\mathbf{s}_i^*||_2^{\frac{1}{2}}\bigg) ,  \mbox{  if  }  \mathbf{s}_i^*\neq 0  \\
 &\left ({ \weightl{1}_{j_{2i}}}^*,{\bnvarl{1}_{j_{2i}}}^* , {\bnmeanl{1}_{j_{2i}}}^*,{ \weightscalarl{2}_{j_{2i}}}^*\right)  =   \bigg(\mathbf{V}\mathbf{\Sigma}^{-1}\mathbf{s}_{i,1:r}^{\prime^*}, \frac{||\mathbf{s}_{i,1:r}^{\prime^*}||_2}{||\mathbf{s}_i^{\prime^*}||_2^{\frac{1}{2}}},\frac{{s}_{i,r+1}^{\prime^*}}{||\mathbf{s}_i^{\prime^*}||_2^{\frac{1}{2}}}, -||\mathbf{s}_i^{\prime^*}||_2^{\frac{1}{2}}\bigg),  \mbox{  if  }  \mathbf{s}_i^{\prime^*}\neq 0, 
\end{align*}
where $\{\mathbf{s}_i^*,\mathbf{s}_i^{\prime^*}\}_{i=1}^P$ are the optimal solutions to (8) and and $j_{si} \in [\vert\mathcal{J}_s\vert]$ given the definitions $\mathcal{J}_1:=\{i : \mathbf{s}_i^* \neq 0\}$ and $\mathcal{J}_2:=\{i : \mathbf{s}_i^{\prime^*} \neq 0\}$. We also note that here the second subscript denotes the selected entries of the corresponding vector, e.g., $\mathbf{s}_{i,1:r}$ denotes the first $r$ entries of the vector $\mathbf{s}_i$. Thus, we have $m^*=\vert\mathcal{J}_1\vert+\vert\mathcal{J}_2\vert$.

\noindent\textbf{Verifying strong duality:}\\
Next, we verify strong duality, i.e., $p_2^*=p_{2c}^*= d_2^*$, by plugging the optimal solution to the non-convex objective in \eqref{eq:overview_primal} and the corresponding convex objective in \eqref{eq:twolayer_convexprogram_proof}.

Given the optimal solutions  $\{\mathbf{s}_i^*,\mathbf{s}_i^{\prime^*}\}_{i=1}^P$  to \eqref{eq:twolayer_convexprogram_proof}, the convex problem has the following objective
\begin{align} \label{eq:convex_obj}
     p_{2c}^*=&\frac{1}{2}\left\|\sum_{i=1}^P \diag_i \vec{U}^\prime(\vec{s}_i^*-\vec{s}_i^{\prime^*})-\labelvec \right\|_2^2 +\beta \sum_{i=1}^P (\|\vec{s}_i^*\|_2+\|\vec{s}_i^{\prime^*}\|_2)
\end{align}
and the corresponding non-convex problem objective is as follows
\begin{align} \label{eq:nonconvex_obj}
    p_2^*= \frac{1}{2}\norm{f_{\theta,2}(\data)-  \labelvec}^2+ \frac{\beta}{2}\sum_{j=1}^{m_{1}}\left({\bnvarl{1}_j}^{*^2}+{\bnmeanl{1}_j}^{*^2}+{\weightscalarl{2}_j}^{*^2} \right).
\end{align}
We first prove that the first terms in both objectives have the same value. To do that, we compute the output of the non-convex neural network with the optimal weight construction above
\begin{align}\label{eq:mapping_part1}
    f_{\theta,2}(\data)&= \sum_{j=1}^{m_{1}}  \relu{ \frac{(\vec{I}_{n}-\frac{1}{n}\vec{1}\vec{1}^T ) \data{\weightl{1}_j}^*}{\|(\vec{I}_{n}-\frac{1}{n}\vec{1}\vec{1}^T ) \data {\weightl{1}_j}^*\|_2}{\bnvarl{1}_j}^* +\frac{\vec{1}}{\sqrt{n}}{\bnmeanl{1}_j}^*}{\weightscalarl{2}_j}^*\nonumber\\
    &=\sum_{i \in \mathcal{J}_1} \relu{ \frac{ \vec{U}\mathbf{s}_{i,1:r}^{*}}{\| \vec{U}\mathbf{s}_{i,1:r}^{*}\|_2}\frac{||\mathbf{s}_{i,1:r}^*||_2}{||\mathbf{s}_i^*||_2^{\frac{1}{2}}} +\frac{\vec{1}}{\sqrt{n}}\frac{{s}_{i,r+1}^{*}}{||\mathbf{s}_i^{*}||_2^{\frac{1}{2}}}}||\mathbf{s}_i^*||_2^{\frac{1}{2}}\nonumber\\
    &-\sum_{i \in \mathcal{J}_2} \relu{ \frac{ \vec{U}\mathbf{s}_{i,1:r}^{\prime^*}}{\| \vec{U}\mathbf{s}_{i,1:r}^{\prime^*}\|_2}\frac{||\mathbf{s}_{i,1:r}^{\prime^*}||_2}{||\mathbf{s}_i^{\prime^*}||_2^{\frac{1}{2}}} +\frac{\vec{1}}{\sqrt{n}}\frac{{s}_{i,r+1}^{\prime^*}}{||\mathbf{s}_i^{\prime^*}||_2^{\frac{1}{2}}}}||\mathbf{s}_i^{\prime^*}||_2^{\frac{1}{2}}\nonumber\\
    &=\sum_{i \in \mathcal{J}_1} \relu{  \vec{U}^\prime \mathbf{s}_i^{*}}-\sum_{i \in \mathcal{J}_2} \relu{  \vec{U}^\prime\mathbf{s}_i^{\prime^*} }\nonumber\\
    &= \sum_{i=1}^P \diag_i \vec{U}^\prime(\vec{s}_i^*-\vec{s}_i^{\prime^*}),
\end{align}
where the last equality follows from the constraints in \eqref{eq:twolayer_convexprogram_proof}.

Next, we prove that the regularizations terms have the same objective as follows
\begin{align}\label{eq:mapping_part2}
    \frac{\beta}{2}\sum_{j=1}^{m_{1}}\left({\bnvarl{1}_j}^{*^2}+{\bnmeanl{1}_j}^{*^2}+{\weightscalarl{2}_j}^{*^2} \right) &=   \frac{\beta}{2}\sum_{j \in \mathcal{J}_1}\left( \left(\frac{||\mathbf{s}_{i,1:r}^{*}||_2}{||\mathbf{s}_i^{*}||_2^{\frac{1}{2}}} \right)^2 + \left( \frac{s_{i,r+1}^*}{||\mathbf{s}_i^{*}||_2^{\frac{1}{2}}}\right)^2 + \left(||\mathbf{s}_i^{*}||_2^{\frac{1}{2}}\right)^2 \right) \nonumber\\
    &+\frac{\beta}{2}\sum_{i \in \mathcal{J}_2} \left( \left(\frac{||\mathbf{s}_{i,1:r}^{\prime^*}||_2}{||\mathbf{s}_i^{\prime^*}||_2^{\frac{1}{2}}} \right)^2 + \left( \frac{s_{i,r+1}^{\prime^*}}{||\mathbf{s}_i^{\prime^*}||_2^{\frac{1}{2}}}\right)^2 + \left(-||\mathbf{s}_i^{\prime^*}||_2^{\frac{1}{2}}\right)^2 \right)\nonumber\\
     &=   \frac{\beta}{2}\sum_{i \in \mathcal{J}_1}\left( \frac{||\mathbf{s}_{i,1:r}^{*}||_2^2}{||\mathbf{s}_i^{*}||_2}  + \frac{s_{i,r+1}^{*^2}}{||\mathbf{s}_i^{*}||_2}+ ||\mathbf{s}_i^{*}||_2\right)\nonumber\\
    &+\frac{\beta}{2}\sum_{i \in \mathcal{J}_2} \left( \frac{||\mathbf{s}_{i,1:r}^{\prime^*}||_2^2}{||\mathbf{s}_i^{\prime^*}||_2}  +  \frac{{s_{i,r+1}^{\prime^*}}^2}{||\mathbf{s}_i^{\prime^*}||_2} + ||\mathbf{s}_i^{\prime^*}||_2 \right)\nonumber\\
     &=   \beta\sum_{i \in \mathcal{J}_1}||\mathbf{s}_i^{*}||_2+\beta\sum_{i \in \mathcal{J}_2} ||\mathbf{s}_i^{\prime^*}||_2\nonumber\\
     &=\beta \sum_{i=1}^P (\|\vec{s}_i^*\|_2+\|\vec{s}_i^{\prime^*}\|_2).
\end{align}

Therefore, by \eqref{eq:mapping_part1} and \eqref{eq:mapping_part2}, the non-convex objective \eqref{eq:nonconvex_obj} and the convex objective \eqref{eq:convex_obj} are the same, i.e., $p_2^*=p_{2c}^*$.
\hfill \qed

\subsection{Proof of Lemma \ref{lemma:scaling_twolayer_vector}}
We note that similar approaches are also presented in  \citet{sahiner2021vectoroutput,ergen2020convexdeep}.

For any $\theta \in \Theta$, we can rescale the parameters as $\bnvarlbar{1}_j=\eta_j\bnvarl{1}_j$, $\bnmeanlbar{1}_j=\eta_j\bnmeanl{1}_j$ and $\weightlbar{2}_j=\weightl{2}_j/\eta_j$, for any $\eta_j>0$. Then, the network output becomes
\begin{align*}
    f_{\bar{\theta},2}(\data)&= \sum_{j=1}^{m_{1}}  \relu{ \frac{(\vec{I}_{n}-\frac{1}{n}\vec{1}\vec{1}^T ) \data\weightl{1}_j}{\|(\vec{I}_{n}-\frac{1}{n}\vec{1}\vec{1}^T ) \data \weightl{1}_j\|_2}\bnvarlbar{1}_j +\vec{1}\bnmeanlbar{1}_j}{\bar{\weight}_j}^{(2)^T}\\
    &=\sum_{j=1}^{m_{1}}  \relu{ \frac{(\vec{I}_{n}-\frac{1}{n}\vec{1}\vec{1}^T ) \data\weightl{1}_j}{\|(\vec{I}_{n}-\frac{1}{n}\vec{1}\vec{1}^T ) \data\weightl{1}_j\|_2}\bnvarl{1}_j +\vec{1}\bnmeanl{1}_j}{\weightl{2}_j}^T\\
    &= f_{{\theta},2}(\data),
\end{align*}
which proves $f_{\theta,2}(\data)=f_{\bar{\theta},2}(\data)$. Moreover, we use the following inequality
\begin{align*}
    \frac{1}{2} \sum_{j=1}^{m_{1}}\left({\bnvarl{1}_j}^2+{\bnmeanl{1}_j}^2+\norm{\weightl{2}_j}^2 \right) \geq  \sum_{j=1}^{m_{1}} \norm{\weightl{2}_j} \sqrt{{\bnvarl{1}_j}^2+{\bnmeanl{1}_j}^2}  ,
\end{align*}
where the equality is achieved when $$\eta_j=\left(\frac{\norm{\weightl{2}_j}}{\sqrt{{\bnvarl{1}_j}^2+{\bnmeanl{1}_j}^2}} \right)^{\frac{1}{2}}$$ is used. Since this scaling does not alter the right-hand side of the inequality, without loss of generality, we can set $\sqrt{{\bnvarl{1}_j}^2+{\bnmeanl{1}_j}^2}=1, \forall j \in [m_{1}]$. Therefore, the right-hand side becomes $\sum_{j=1}^{m_1}\| \weightl{2}_j\|_2$.

We then note that due to the arguments in Appendix \ref{sec:effective_reg}, we can remove the regularization on $\weightl{1}_j$ without loss of generality. Therefore, the final optimization problem is as follows
\begin{align*}
    &p_{2v}^*=\min_{\theta \in \Theta_s} \frac{1}{2} \normf{ f_{\theta,2}(\data)- \labelmat}^2+ \beta \sum_{j=1}^{m_1}\norm{\weightl{2}_j} ,
\end{align*}
where $\Theta_s:=\{\theta \in \Theta :  {\bnvarl{1}_j}^2+{\bnmeanl{1}_j}^2 =1,\, \forall j \in [m_{1}] \}$.
\hfill \qed

\subsection{Proof of Theorem \ref{theo:twolayer_vector_convexprogram} }
We first note that if the number of neurons satisfies $m_1 \geq m^*$ for some $m^* \in \mathbb{N}$, where $m^* \leq nC+1$, then strong duality holds, i.e., $p_{2v}^*=d_{2v}^*$, (see Section A.3.1 of \citet{sahiner2021vectoroutput} for details). Following the steps in \citet{sahiner2021vectoroutput} and Theorem \ref{theo:twolayer_convexprogram}, we again focus on the dual constraint in \eqref{eq:twolayer_vector_dual}
\begin{align*}
     \max_{\substack{\weightl{1}\\ {\bnvarl{1}}^2+{\bnmeanl{1}}^2=1}} \norm{ \dualmat^T\relu{\frac{(\vec{I}_{n}-\frac{1}{n}\vec{1}\vec{1}^T ) \data\weightl{1}}{\|(\vec{I}_{n}-\frac{1}{n}\vec{1}\vec{1}^T ) \data\weightl{1}\|_2}\bnvarl{1} +\frac{\vec{1}}{\sqrt{n}}\bnmeanl{1}}} &=   
    \max_{\substack{ \norm{\vec{q}}=\norm{\vec{s}}=1}} \norm{\dualmat^T  \relu{\begin{bmatrix} \vec{U}\vec{q}& \frac{\vec{1}}{\sqrt{n}} \end{bmatrix}\vec{s}}}\\
     &=   
    \max_{\substack{ \norm{\vec{s}^\prime}=1}} \norm{ \dualmat^T  \relu{ \vec{U}^\prime\vec{s}^\prime}}\\
     &=   
    \max_{\substack{ \norm{\vec{g}}=\norm{\vec{s}^\prime}=1}}  \vec{g}^T\dualmat^T  \relu{ \vec{U}^\prime\vec{s}^\prime}\\
      &=  \max_{i \in [P]} 
    \max_{\substack{ \norm{\vec{g}}=\norm{\vec{s}^\prime}=1\\ (2\diag_i-\vec{I}_n)\vec{U}^\prime\vec{s}^\prime\geq 0}}  \vec{g}^T\dualmat^T  \diag_i\vec{U}^\prime\vec{s}^\prime\\
      &=  \max_{i \in [P]} 
    \max_{\substack{ \norm{\vec{g}}=\norm{\vec{s}^\prime}=1\\ (2\diag_i-\vec{I}_n)\vec{U}^\prime\vec{s}^\prime\geq 0}}  \left \langle\dualmat,  \diag_i\vec{U}^\prime\vec{s}^\prime\vec{g}^T\right \rangle\\
      &=  \max_{i \in [P]} 
    \max_{\substack{  \vec{S} \in \mathcal{K}_i}}  \left \langle\dualmat,  \diag_i\vec{U}^\prime\vec{S}\right \rangle,
\end{align*}
where $\vec{s}^\prime=\begin{bmatrix} \bnvar \vec{q}^T & \bnmean \end{bmatrix}^T$, $\vec{U}^\prime=\begin{bmatrix} \vec{U}& \frac{\vec{1}}{\sqrt{n}} \end{bmatrix}$, and $\mathcal{K}_i:=\mathrm{conv}\{\vec{Z}=\vec{s}^\prime\vec{g}^T: (2\diag_i-\vec{I}_n)\vec{U}^\prime\vec{s}^\prime\geq 0 ,\, \|\vec{Z}\|_* \leq 1 \}$. Thus, we have
\begin{align*}
  \max_{\theta \in \Theta_s} \norm{ \dualmat^T\relu{\frac{(\vec{I}_{n}-\frac{1}{n}\vec{1}\vec{1}^T ) \data\weightl{1}}{\|(\vec{I}_{n}-\frac{1}{n}\vec{1}\vec{1}^T ) \data\weightl{1}\|_2}\bnvarl{1} +\frac{\vec{1}}{\sqrt{n}}\bnmeanl{1}}} \leq \beta  \iff \forall i \in [P],\, \exists \vec{S}_i \in \mathcal{K}_i \text{ s.t. }\left \langle\dualmat,  \diag_i\vec{U}^\prime\vec{S}_i\right \rangle \leq \beta.
\end{align*}
Similar arguments also hold for the negative side of the absolute value constraint. We then directly follow the steps in \citet{sahiner2021vectoroutput} and achieve the following convex program
\begin{align*}
    & \min_{\vec{S}_i}\frac{1}{2}\left\|\sum_{i=1}^P \diag_i \vec{U}^\prime \vec{S}_i-\labelmat \right\|_F^2 +\beta \sum_{i=1}^P \|\vec{S}_i\|_{c_i,*} ,
\end{align*}
where 
\begin{align*}
    \|\vec{S}\|_{c_i,*}:= \min_{t \geq 0} t \text{ s.t. } \vec{S} \in t \mathcal{K}_i
\end{align*}
denotes a constrained version of the standard nuclear norm.
\hfill \qed

\subsection{Proof of Theorem \ref{theo:twolayer_vector_closedform} }
We first restate the dual problem as
\begin{align}\label{eq:twolayer_vector_dual2}
    &\max_{\dualmat}  - \frac{1}{2}\|\dualmat-\labelmat\|_F^2+\frac{1}{2} \| \labelmat\|_F^2 \mbox{ s.t. }  \max_{\substack{ \norm{\vec{s}^\prime}=1}} \norm{ \dualmat^T  \relu{ \vec{U}^\prime\vec{s}^\prime}} \leq \beta,
\end{align}
where we use the results from Theorem \ref{theo:twolayer_vector_convexprogram} to modify the dual constraint. Since $\labelmat$ is one-hot encoded, \eqref{eq:twolayer_vector_dual2} can be rewritten as 
\begin{align}\label{eq:twolayer_vector_dual3}
    &\max_{\dualmat}  - \frac{1}{2}\|\dualmat-\labelmat\|_F^2+\frac{1}{2} \| \labelmat\|_F^2 \mbox{ s.t. }  \max_{\norm{\vec{u}} = 1} \|\dualmat^T \vec{u}\|_2\leq \beta.
\end{align}
This problem has the following closed-form solution
\begin{align}\label{eq:twolayer_vector_dualparam}
   \dual_j^*= \begin{cases}
    \beta \frac{\labelvec_j}{\|\labelvec_j\|_2} &\text{ if } \beta \leq \|\labelvec_j\|_2\\\
    \hfil\labelvec_j  &\text{ otherwise } 
    \end{cases},\quad \forall j \in [C].
\end{align}
and the corresponding first layer weights in \eqref{eq:twolayer_vector_dual3} are
\begin{align}\label{eq:twolayer_vector_hiddenparam}
   {\weightl{1}_j}^*= \begin{cases}
    \data^\dagger \labelvec_j&\text{ if } \beta \leq \|\labelvec_j\|_2\\\
    \hfil -  &\text{ otherwise } 
    \end{cases},\quad \forall j \in [o].
\end{align}
Now let us first denote the set of indices that achieves the extreme point of the dual constraint as $\mathcal{E}:=\{j:\beta \leq \|\labelvec_j\|_2, j \in [C]\}$. Then the dual objective in \eqref{eq:twolayer_vector_dual3} using the optimal dual parameter in \eqref{eq:twolayer_vector_dualparam}
\begin{align}\label{eq:dual_optimal}\nonumber 
   d_{2v}^*&= - \frac{1}{2}\|\dualmat^*-\labelmat\|_F^2+\frac{1}{2} \| \labelmat\|_F^2 \nonumber\\
   &=- \frac{1}{2} \sum_{j\in \mathcal{E}} (\beta-\|\labelvec_j\|_2)^2+\frac{1}{2}\sum_{j=1}^C \|\labelvec_j\|_2^2 \nonumber\\
   &=-\frac{1}{2} \beta^2 |\mathcal{E}|+\beta\sum_{j \in \mathcal{E}}\|\labelvec_j\|_2 +\frac{1}{2}\sum_{j \notin \mathcal{E}} \|\labelvec_j\|_2^2.
\end{align}
We next restate the scaled primal problem
\begin{align}\label{eq:twolayer_vector_primal_appendix}
    &p_{2v}^*=\min_{\theta \in \Theta_s} \frac{1}{2} \normf{ \sum_{j=1}^{m_{1}}\relu{\frac{(\vec{I}_{n}-\frac{1}{n}\vec{1}\vec{1}^T ) \data\weightl{1}_j}{\|(\vec{I}_{n}-\frac{1}{n}\vec{1}\vec{1}^T ) \data\weightl{1}_j\|_2}\bnvarl{1}_j +\frac{\vec{1}}{\sqrt{n}}\bnmeanl{1}_j}{\weightl{2}_j}^T- \labelmat}^2 + \beta \sum_{j=1}^{m_1}\norm{\weightl{2}_j} 
\end{align}
and then obtain the optimal primal objective using the hidden layer weights in \eqref{eq:twolayer_vector_hiddenparam}, which  yields the following optimal solution 
\begin{align}\label{eq:twolayer_vector_allparam}
   \begin{split}
       &\left({\weightl{1}_j}^* ,{\weightl{2}_j}^*\right)= \begin{cases}
   \left(\data^\dagger \labelvec_j, \left(\|\labelvec_j\|_2-\beta\right)\vec{e}_j\right) &\text{ if } \beta \leq \|\labelvec_j\|_2\\
    \hfil - &\text{ otherwise } 
    \end{cases}  \\
      &\begin{bmatrix} {\bnvarl{1}_j}^* \\{ \bnmeanl{1}_j}^*\end{bmatrix} =\frac{1}{\|\labelvec_j\|_2}\begin{bmatrix} \|\labelvec_j-\frac{1}{n}\vec{1}\vec{1}^T \labelvec_j\|_2\\ \frac{1}{{\sqrt{n}} }\vec{1}^T\labelvec_j \end{bmatrix}    
   \end{split}
\end{align}
We now evaluate the primal problem objective with the parameters in \eqref{eq:twolayer_vector_allparam}, which yields
\begin{align}\label{eq:primal_optimal}
    p_{2v}^*&=\frac{1}{2} \normf{ \sum_{j=1}^{C}\relu{\frac{(\vec{I}_{n}-\frac{1}{n}\vec{1}\vec{1}^T ) \data{\weightl{1}_j}^*}{\|(\vec{I}_{n}-\frac{1}{n}\vec{1}\vec{1}^T ) \data{\weightl{1}_j}^*\|_2}{\bnvarl{1}_j}^* +\frac{\vec{1}}{\sqrt{n}}{\bnmeanl{1}_j}^*}{\weightl{2}_j}^{*^T}- \labelmat}^2\nonumber + \beta \sum_{j=1}^{C}\norm{{\weightl{2}_j}^*}  \nonumber\\
    &=\frac{1}{2} \left\|\sum_{j\in \mathcal{E}}\left(\|\labelvec_j\|_2-\beta\right)\frac{\labelvec_j}{\|\labelvec_j\|_2}\vec{e}_j^T-\labelmat \right\|_F^2+\beta \sum_{j\in \mathcal{E}} (\|\labelvec_j\|_2 -\beta)\nonumber\\
     &=\frac{1}{2}\sum_{j\in \mathcal{E}} \left\|\beta\frac{\labelvec_j}{\|\labelvec_j\|_2}\vec{e}_j^T\ \right\|_F^2+\frac{1}{2}\sum_{j\notin \mathcal{E}} \|\labelvec_j\vec{e}_j^T\|_F^2+\beta \sum_{j\in \mathcal{E}} \|\labelvec_j\|_2 -\beta^2 |\mathcal{E}|\nonumber\\
      &=-\frac{1}{2}\beta^2 |\mathcal{E}|+\frac{1}{2}\sum_{j\notin \mathcal{E}} \|\labelvec_j\|_2^2+\beta \sum_{j\in \mathcal{E}} \|\labelvec_j\|_2 ,
\end{align}
which is the same with \eqref{eq:dual_optimal}. Therefore, strong duality holds, i.e., $p_{2v}^*=d_{2v}^*$, and the layer weights in \eqref{eq:twolayer_vector_allparam} are optimal for the primal problem \eqref{eq:twolayer_vector_primal_appendix}. 
\hfill \qed

\section{Convolutional Neural Networks}\label{sec:appendix_conv}

\subsection{Proof of Theorem \ref{theo:cnn_convexprgoram} }
Following the arguments in Theorem \ref{theo:twolayer_convexprogram} and \citet{ergen2020cnn}, strong duality holds for the CNN training problem in \eqref{eq:cnn_primal}, i.e., $p_{2c}^*=d_{2c}^*$, and the dual constraint can be equivalently written as follows
\begin{align*}
   &\max_{\substack{\weightl{1}\\ {\bnvarl{1}}^2+{\bnmeanl{1}}^2=1} }  \left \vert \sum_{k=1}^K\dual^T\relu{\frac{ \data_k \firstwcnn-\mu\vec{1}}{\sqrt{\sum_{k^\prime=1}^K\|\data_{k^\prime} \firstwcnn-\mu\vec{1}\|_2^2}}\bnvarl{1}+ \frac{1}{\sqrt{nK}}\vec{1}\bnmeanl{1}} \right \vert  \\
  &=\max_{\theta \in \Theta_s} \left \vert \dual^T \vec{C}\relu{\frac{(\vec{I}_{nK}-\frac{1}{nK}\vec{1}\vec{1}^T ) \vec{M}\firstwcnn}{\|(\vec{I}_{nK}-\frac{1}{nK}\vec{1}\vec{1}^T ) \vec{M}\firstwcnn\|_2}\bnvarl{1} +\frac{\vec{1}}{\sqrt{nK}}\bnmeanl{1}}\right \vert \\&=   
    \max_{\substack{ \norm{\vec{q}}=\norm{\vec{s}}=1}} \left \vert \dual^T  \vec{C}\relu{\begin{bmatrix} \vec{U}_{M}\vec{q}& \frac{\vec{1}}{\sqrt{nK}} \end{bmatrix}\vec{s}}\right \vert\\
     &=   
    \max_{\substack{ \norm{\vec{s}^\prime}=1}} \left \vert \dual^T \vec{C} \relu{ \vec{U}_M^\prime\vec{s}^\prime}\right \vert\\
      &=\max_{\substack{ \norm{\vec{s}^\prime}=1}} \left \vert \sum_{k=1}^K\dual^T  \relu{ \vec{U}_{Mk}^\prime\vec{s}^\prime}\right \vert\\
      &=  \max_{i \in [P]} 
    \max_{\substack{ \norm{\vec{s}^\prime}=1\\ (2\diag_{ik}-\vec{I}_n)\vec{U}_{Mk}^\prime\vec{s}^\prime\geq 0}} \left \vert \sum_{k=1}^K\dual^T  \diag_{ik}\vec{U}_{Mk}^\prime\vec{s}^\prime\right \vert,
\end{align*}
where $\vec{C}:=\vec{1}^T_K\otimes \vec{I}_n$, $\vec{M}=[\data_1;\, \data_2;\, \ldots \, \data_K]$, and we use the following variable changes $\vec{q}:=\bm{\Sigma}_{M}\vec{V}_{M}^T \firstwcnn$, $\vec{s}:=\begin{bmatrix} \bnvarl{1} & \bnmeanl{1} \end{bmatrix}^T$, $\vec{s}^\prime:=\begin{bmatrix} \bnvarl{1} \vec{q}^T & \bnmeanl{1} \end{bmatrix}^T$, and $\vec{U}_{M}^\prime:=\begin{bmatrix} \vec{U}_{M}& \frac{\vec{1}}{\sqrt{nK}} \end{bmatrix}=\begin{bmatrix}\vec{U}_{M1}^\prime ; \vec{U}_{M2}^\prime ;\ldots &\vec{U}_{MK}^\prime  \end{bmatrix}$.

Then, following the steps in Theorem \ref{theo:twolayer_convexprogram} and \citet{ergen2020cnn}, the equivalent convex program for \eqref{eq:cnn_primal} is
\begin{align*}
    &\min_{\vec{q}_{i}, \vec{q}_{i}^\prime} \frac{1}{2} \left \| \sum_{i=1}^{P_c}\sum_{k=1}^{K} \diag_{ik }\vec{U}_{Mk}^\prime (\vec{q}_{i}-\vec{q}_{i}^\prime)\right\| +\beta \sum_{i=1}^{P_c}  (\norm{\vec{q}_{i}} + \norm{\vec{q}_{i}^\prime}) \\
    &\text{ s.t. } (2\diag_{ik} -\vec{I}_n) \vec{U}_{Mk}^\prime \vec{q}_{i}\geq 0,\;(2\diag_{ik} -\vec{I}_n) \vec{U}_{Mk}^\prime \vec{q}_{i}^\prime\geq 0,\; \forall i \in [P_c] ,\forall k \in [K].
\end{align*}

\hfill \qed

\section{The Implicit Regularization Effect of SGD on BN Networks}\label{sec:appendix_implicit_bias}
\subsection{Proof of Lemma \ref{lem:equivalent_whitening}}
We start with the following non-convex training problem
\begin{align}
    &p_{v2}^*:=\min_{\theta \in \Theta} \frac{1}{2} \normf{ \sum_{j=1}^{m_{1}}\relu{\bn{\data \weightl{1}_j}}{\weightl{2}_j}^T- \labelmat }^2 \nonumber+ \frac{\beta}{2} \sum_{j=1}^{m_{1}}\left({\bnvarl{1}_j}^2+{\bnmeanl{1}_j}^2+\norm{\weightl{2}_j}^2 \right),
\end{align}
where we note that hidden layer weights are not regularized without loss of generality due to the arguments in Appendix \ref{sec:effective_reg}. Now, consider the SVD of the zero mean form of the data matrix \((\vec{I}_n - \frac{1}{n}\vec{1}\vec{1}^\top)\data = \vec{U}\vec{\Sigma}\vec{V}^\top \). We can thus equivalently write the problem as 
\begin{align}
    &p_{v2}^*=\min_{\theta \in \Theta} \frac{1}{2} \normf{ \sum_{j=1}^{m_{1}}\relu{\frac{\vec{U}\vec{\Sigma}\vec{V}^\top \weightl{1}_j}{\|\vec{U}\vec{\Sigma}\vec{V}^\top \weightl{1}_j\|_2}\bnvarl{1}_j + \frac{\vec{1} }{\sqrt{n}}\bnmeanl{1}_j }{\weightl{2}_j}^T- \labelmat }^2 \nonumber+ \frac{\beta}{2} \sum_{j=1}^{m_{1}}\left({\bnvarl{1}_j}^2+{\bnmeanl{1}_j}^2+\norm{\weightl{2}_j}^2 \right),
\end{align}
Now, simply let \(\vec{q}_j = \vec{\Sigma}\vec{V}^\top \weightl{1}_j\). Noting that \(\|\vec{U}\vec{q}_j\|_2 = \|\vec{q}_j\|_2\), we arrive at
\begin{align}
    &p_{v2}^*=p_{v2b}^*:=\min_{\theta \in \Theta} \frac{1}{2} \normf{ \sum_{j=1}^{m_{1}}\relu{\frac{\vec{U}\vec{q}_j}{\|\vec{q}_j\|_2}\bnvarl{1}_j + \frac{\vec{1} }{\sqrt{n}}\bnmeanl{1}_j }{\weightl{2}_j}^T- \labelmat }^2 \nonumber+ \frac{\beta}{2} \sum_{j=1}^{m_{1}}\left({\bnvarl{1}_j}^2+{\bnmeanl{1}_j}^2+\norm{\weightl{2}_j}^2 \right),
\end{align}
\hfill \qed
\subsection{Proof of Theorem \ref{thm:implicit_reg}}
When solving \eqref{eq:twolayer_vector_primal} with GD, we have the updates
\begin{align*}
   {\weightl{1}_j}^{(k+1)} &= {\weightl{1}_j}^{(k)} - \eta_k \nabla_{\weightl{1}_j} \mathcal{L}(f_{v2}^k(\data), \labelmat)\\
   {\bnmeanl{1}_j}^{(k+1)} &= {\bnmeanl{1}_j}^{(k)} - \eta_k \Big(\nabla_{\bnmeanl{1}_j} \mathcal{L}(f_{v2}^k(\data), \labelmat) +\beta\bnmeanl{1}_j\Big)\\
   {\bnvarl{1}_j}^{(k+1)} &= {\bnvarl{1}_j}^{(k)} - \eta_k \Big(\nabla_{\bnvarl{1}_j} \mathcal{L}(f_{v2}^k(\data), \labelmat) +\beta\bnvarl{1}_j\Big)\\
   {\weightl{2}_j}^{(k+1)} &= {\weightl{2}_j}^{(k)} - \eta_k \Big(\nabla_{\weightl{2}_j} \mathcal{L}(f_{v2}^k(\data), \labelmat) +\beta\weightl{2}_j\Big)
\end{align*}
Now, let \(\vec{D}_j^{(k)} = \mathrm{diag}\{\mathbbm{1}[ \data {\weightl{1}_j}^{(k)} \geq 0]\}\), \(\vec{R}^{(k)} = \sum_{j=1}^{m_{1}}\relu{\bn{\data {\weightl{1}_j}^{(k)}}}{{\weightl{2}_j}^{(k)}}^T- \labelmat\) be the residual at step \(k\), and \(\data^0 = (\vec{I}_n - \frac{1}{n}\vec{1}\vec{1}^\top)\data \) . Then, with squared loss \(\mathcal{L}\), we have the following:
\begin{align*}
    \nabla_{\weightl{1}_j} \mathcal{L}(f_{v2}^k(\data), \labelmat)&= \frac{{\bnvarl{1}_j}^{(k)}}{\|\data^0{\weightl{1}_j}^{(k)}\|_2} (\vec{D}_j^{(k)}\data^0)^\top\vec{R}^{(k)}{\weightl{2}_j}^{(k)} \\
    &-  \frac{{\bnvarl{1}_j}^{(k)}}{\|\data^0 {\weightl{1}_j}^{(k)}\|_2^3} \Big({{\weightl{1}_j}^{(k)}}^\top (\vec{D}_j^{(k)}\data^0)^\top\vec{R}^{(k)}{\weightl{2}_j}^{(k)} \Big) {\data^0}^\top {\data^0}\boldsymbol{\weightl{1}_j}^{(k)}\\
   \nabla_{\bnmeanl{1}_j} \mathcal{L}(f_{v2}^k(\data), \labelmat) &= {{\weightl{2}_j}^{(k)}}^T {\vec{R}^{(k)}}^\top \vec{D}_j^{(k)}\vec{1}\\
   \nabla_{\bnvarl{1}_j} \mathcal{L}(f_{v2}^k(\data), \labelmat) &= \frac{1}{\|\data^0{\weightl{1}_j}^{(k)}\|_2}{{\weightl{2}_j}^{(k)}}^T {\vec{R}^{(k)}}^\top \vec{D}_j^{(k)}\data^0{\weightl{1}_j}^{(k)}\\
   \nabla_{\weightl{2}_j} \mathcal{L}(f_{v2}^k(\data), \labelmat) &= {\vec{R}^{(k)}}^\top \vec{D}_j^{(k)}\Big(\frac{\data^0{\weightl{1}_j}^{(k)}}{\|\data^0{\weightl{1}_j}^{(k)}\|_2}{\bnvarl{1}_j}^{(k)} +\frac{\vec{1} }{\sqrt{n}}\bnmeanl{1}_j\Big)
\end{align*}
To express the updates in \(\vec{q}_j\)-space for \(\weightl{1}_j\), we can compute 
\begin{align*}
    \Delta_{v2, j}^{(k)} &= \vec{\Sigma}\vec{V}^\top \nabla_{\weightl{1}_j} \mathcal{L}(f_{v2}^k(\data), \labelmat)\\
    &= \frac{{\bnvarl{1}_j}^{(k)}}{\|\data^0{\weightl{1}_j}^{(k)}\|_2} \vec{\Sigma}^2 \vec{U}^\top\vec{D}_j^{(k)}\vec{R}^{(k)}{\weightl{2}_j}^{(k)} -  \frac{{\bnvarl{1}_j}^{(k)}}{\|\data^0 {\weightl{1}_j}^{(k)}\|_2^3} \Big({{\weightl{1}_j}^{(k)}}^\top{\data^0}^\top  \vec{D}_j^{(k)}\vec{R}^{(k)}{\weightl{2}_j}^{(k)} \Big)  \vec{\Sigma}^2 \vec{U}^\top {\data^0}\boldsymbol{\weightl{1}_j}^{(k)}
\end{align*}
Now, we consider solving \eqref{eq:primal_equivalent_vectoroutput} with GD. We assume that all weights are equivalent to those in \eqref{eq:twolayer_vector_primal}, i.e. \(f_{v2b}^k(\data) = f_{v2}^k(\data)\), and \(\vec{q}_j^{(k)} = \vec{\Sigma}\vec{V}^\top{\weightl{1}_j}^{(k)}\). In this case, the gradients are given by:
\begin{align*}
    \nabla_{\vec{q}_j} \mathcal{L}(f_{v2b}^k(\data), \labelmat)&= \frac{{\bnvarl{1}_j}^{(k)}}{\|{\vec{q}_j}^{(k)}\|_2} (\vec{D}_j^{(k)}\vec{U})^\top\vec{R}^{(k)}{\weightl{2}_j}^{(k)} -  \frac{{\bnvarl{1}_j}^{(k)}}{\|{\vec{q}_j}^{(k)}\|_2^3} \Big({{\vec{q}_j}^{(k)}}^\top (\vec{D}_j^{(k)}\vec{U})^\top\vec{R}^{(k)}{\weightl{2}_j}^{(k)} \Big){\vec{q}_j}^{(k)}\\
   \nabla_{\bnmeanl{1}_j} \mathcal{L}(f_{v2b}^k(\data), \labelmat) &= {{\weightl{2}_j}^{(k)}}^T {\vec{R}^{(k)}}^\top \vec{D}_j^{(k)}\vec{1}\\
   \nabla_{\bnvarl{1}_j} \mathcal{L}(f_{v2b}^k(\data), \labelmat) &= \frac{1}{\|{\vec{q}_j}^{(k)}\|_2}{{\weightl{2}_j}^{(k)}}^T {\vec{R}^{(k)}}^\top \vec{D}_j^{(k)}\vec{U}{\vec{q}_j}^{(k)}\\
   \nabla_{\weightl{2}_j} \mathcal{L}(f_{v2b}^k(\data), \labelmat) &= {\vec{R}^{(k)}}^\top \vec{D}_j^{(k)}\Big(\frac{\vec{U}{\vec{q}_j}^{(k)}}{\|{\vec{q}_j}^{(k)}\|_2}{\bnvarl{1}_j}^{(k)} +\frac{\vec{1} }{\sqrt{n}}\bnmeanl{1}_j\Big)
\end{align*}
Simplifying, we have 
\begin{align*}
    \Delta_{v2b, j}^{(k)} &=\frac{{\bnvarl{1}_j}^{(k)}}{\|{\vec{q}_j}^{(k)}\|_2} \vec{U}^\top \vec{D}_j^{(k)}\vec{R}^{(k)}{\weightl{2}_j}^{(k)} -  \frac{{\bnvarl{1}_j}^{(k)}}{\|{\vec{q}_j}^{(k)}\|_2^3} \Big({{\vec{q}_j}^{(k)}}^\top\vec{U}^\top \vec{D}_j^{(k)}\vec{R}^{(k)}{\weightl{2}_j}^{(k)} \Big){\vec{q}_j}^{(k)}
\end{align*}
Now, noting that \({\data^0}\boldsymbol{\weightl{1}_j}^{(k)} = \vec{U}{\vec{q}_j}^{(k)}\), and thus that \(\|{\data^0}\boldsymbol{\weightl{1}_j}^{(k)}\|_2 = \|{\vec{q}_j}^{(k)}\|_2\), we can find that, as desired,
\begin{align*}
      \Delta_{v2b, j}^{(k)} &= \vec{\Sigma}^2 \Delta_{v2, j}^{(k)}\\
      \nabla_{\bnvarl{1}_j} \mathcal{L}(f_{v2b}^k(\data), \labelmat) &= \nabla_{\bnvarl{1}_j} \mathcal{L}(f_{v2}^k(\data), \labelmat)\\
      \nabla_{\bnmeanl{1}_j} \mathcal{L}(f_{v2b}^k(\data), \labelmat) &= \nabla_{\bnmeanl{1}_j} \mathcal{L}(f_{v2}^k(\data), \labelmat)\\
      \nabla_{\weightl{2}_j} \mathcal{L}(f_{v2b}^k(\data), \labelmat) &= \nabla_{\weightl{2}_j} \mathcal{L}(f_{v2}^k(\data), \labelmat)
\end{align*} \hfill \qed
\section{Deep Networks}\label{sec:appendix_deep}

\subsection{Proof of Theorem \ref{theo:deep_closedform} }
 Applying the same steps in Theorem \ref{theo:twolayer_closedform} for the data matrix $\actl{L-2}$, i.e., replacing $\data$ with $\actl{L-2}$, yields the following optimal parameters
\begin{align*}
    &{\weightmatl{1}}^*= \begin{cases}
   \left( {\actl{L-2}}^\dagger \relu{\labelvec}, {\actl{L-2}}^\dagger \relu{-\labelvec}\right) &\text{ if } \beta \leq \|\relu{\labelvec}\|_2,\;  \beta \leq  \|\relu{-\labelvec}\|_2\\
     \hfil  {\actl{L-2}}^\dagger \relu{-\labelvec} &\text{ if } \beta > \|\relu{\labelvec}\|_2 , \;  \beta \leq \relu{-\labelvec}\|_2\\
      \hfil {\actl{L-2}}^\dagger \relu{\labelvec} &\text{ if } \beta \leq \|\relu{\labelvec}\|_2,\;  \beta >  \|\relu{\labelvec}\|_2\\
    \hfil  \vec{0} &\text{ if } \beta > \|\relu{\labelvec}\|_2,\;  \beta >  \|\relu{-\labelvec}\|_2\
    \end{cases}\\
  &\begin{bmatrix} {\bnvarl{1}_1}^* \\{ \bnmeanl{1}_1}^*\end{bmatrix} =\frac{1}{\|\relu{\labelvec}\|_2}\begin{bmatrix} \|\relu{\labelvec}-\frac{1}{n}\vec{1}\vec{1}^T \relu{\labelvec}\|_2\\ \frac{1}{{\sqrt{n}} }\vec{1}^T\relu{\labelvec} \end{bmatrix} \quad  \begin{bmatrix} {\bnvarl{1}_2}^* \\{ \bnmeanl{1}_2}^*\end{bmatrix} =\frac{1}{\|\relu{-\labelvec}\|_2}\begin{bmatrix} \|\relu{-\labelvec}-\frac{1}{n}\vec{1}\vec{1}^T \relu{-\labelvec}\|_2\\ \frac{1}{{\sqrt{n}} }\vec{1}^T\relu{-\labelvec} \end{bmatrix}\\
   & {\weightl{2}}^*= \begin{cases}
   \begin{bmatrix} \|\relu{\labelvec}\|_2-\beta\\-\| \relu{-\labelvec}\|_2+\beta\end{bmatrix} &\text{ if } \beta \leq \|\relu{\labelvec}\|_2,\;  \beta \leq  \|\relu{-\labelvec}\|_2\\
      \hfil -\| \relu{-\labelvec}\|_2+\beta &\text{ if } \beta > \|\relu{\labelvec}\|_2 , \;  \beta \leq \|\relu{-\labelvec}\|_2\\
     \hfil \| \relu{\labelvec}\|_2-\beta &\text{ if } \beta \leq \|\relu{\labelvec}\|_2,\;  \beta >  \|\relu{-\labelvec}\|_2\\
    \hfil 0 &\text{ if } \beta > \|\relu{\labelvec}\|_2,\;  \beta >  \|\relu{-\labelvec}\|_2
    \end{cases}\\
    & \dual^*= \begin{cases}
    \beta\frac{\relu{\labelvec}}{\|\relu{\labelvec}\|_2}-\beta\frac{\relu{-\labelvec}}{\|\relu{-\labelvec}\|_2} &\text{ if } \beta \leq \|\relu{\labelvec}\|_2,\;  \beta \leq  \|\relu{-\labelvec}\|_2\\
     \hfil \relu{\labelvec}-\beta\frac{\relu{-\labelvec}}{\|\relu{-\labelvec}\|_2} &\text{ if } \beta > \|\relu{\labelvec}\|_2 , \;  \beta \leq \relu{-\labelvec}\|_2\\
     \hfil \beta\frac{\relu{\labelvec}}{\|\relu{\labelvec}\|_2}-\relu{-\labelvec} &\text{ if } \beta \leq \|\relu{\labelvec}\|_2,\;  \beta >  \|\relu{-\labelvec}\|_2\\
     \hfil \labelvec &\text{ if } \beta > \|\relu{\labelvec}\|_2,\;  \beta >  \|\relu{-\labelvec}\|_2
    \end{cases}.
\end{align*}
We now substitute these parameters into the primal \eqref{eq:overview_primal} and the dual \eqref{eq:dual_overview} to prove that strong duality holds. For presentation simplicity, we only consider the first case, i.e., $\beta \leq \|\relu{\labelvec}\|_2,\;  \beta \leq  \|\relu{-\labelvec}\|_2$ and note that the same derivations apply to the other cases as well. We first evaluate the primal objective with these parameters as follows
\begin{align}\label{eq:deep_primal_appendix}
    p_L^*&= \frac{1}{2} \norm{ \sum_{j=1}^{2} \relu{\frac{(\vec{I}_{n}-\frac{1}{n}\vec{1}\vec{1}^T ) \actl{L-2}{\weightl{L-1}_j}^*}{\|(\vec{I}_{s}-\frac{1}{n}\vec{1}\vec{1}^T ) \actl{L-2}{\weightl{L-1}_j}^*\|_2}\bnvarl{l}_j +\frac{\vec{1}}{\sqrt{n}}{\bnmeanl{l}_j}^*}{\weightscalarl{L}_j}^*- \labelvec }^2 + \beta \normone{{\weightl{L}}^*} \nonumber\\
    &=\frac{1}{2} \norm{-\beta\frac{\relu{\labelvec}}{\norm{\relu{\labelvec}}}+\beta\frac{\relu{-\labelvec}}{\norm{\relu{-\labelvec}}}}^2+\beta(\norm{\relu{\labelvec}}+\norm{\relu{-\labelvec}}-2\beta)\nonumber\\
    &=\beta(\norm{\relu{\labelvec}}+\norm{\relu{-\labelvec}})-\beta^2.
\end{align}
We then evaluate the dual objective as follows
\begin{align}\label{eq:deep_dual_appendix}
   d_L^*=&   -\frac{1}{2}\|\dual^*-\labelvec\|_2^2+\frac{1}{2}\|\labelvec\|_2^2  \nonumber \\ \nonumber
        =& -\frac{1}{2}  \norm{\beta\frac{\relu{\labelvec}}{\|\relu{\labelvec}\|_2}-\beta\frac{\relu{-\labelvec}}{\|\relu{-\labelvec}\|_2} -\labelvec}^2+\frac{1}{2}\|\labelvec\|_2^2 \nonumber\\
        &=-\beta^2+\beta(\norm{\relu{\labelvec}}+\norm{\relu{-\labelvec}}).
\end{align}
By \eqref{eq:deep_primal_appendix} and \eqref{eq:deep_dual_appendix}, the set of parameters $\{ {\weightmatl{L-1}}^*,{\bnvarl{L-1}}^*, {\bnmeanl{L-1}}^*, {\weightl{L}}^*\}$ achieves $p_L^*=d_L^*$, therefore, strong duality holds.
\hfill \qed

\subsection{Proof of Theorem \ref{theo:deep_vector_closedform} }
 Applying the same steps in Theorem \ref{theo:twolayer_vector_closedform} for the data matrix $\actl{L-2}$, i.e., replacing $\data$ with $\actl{L-2}$, yields the following set of parameters
\begin{align*}
   \begin{split}
       &\left({\weightl{L-1}_j}^* ,{\weightl{L}_j}^*\right)= \begin{cases}
   \left({\actl{L-2}}^\dagger \labelvec_j, \left(\|\labelvec_j\|_2-\beta\right)\vec{e}_j\right) &\text{ if } \beta \leq \|\labelvec_j\|_2\\
    \hfil - &\text{ otherwise } 
    \end{cases}  \\
      &\begin{bmatrix} {\bnvarl{L-1}_j}^* \\{ \bnmeanl{L-1}_j}^*\end{bmatrix} =\frac{1}{\|\labelvec_j\|_2}\begin{bmatrix} \|\labelvec_j-\frac{1}{n}\vec{1}\vec{1}^T \labelvec_j\|_2\\ \frac{1}{{\sqrt{n}} }\vec{1}^T\labelvec_j \end{bmatrix}    
  \\
    & \dual_j^*= \begin{cases}
    \beta \frac{\labelvec_j}{\|\labelvec_j\|_2} &\text{ if } \beta \leq \|\labelvec_j\|_2\\\
    \hfil\labelvec_j  &\text{ otherwise } 
    \end{cases}
     \end{split}, \quad \forall j \in [C].
\end{align*}
\hfill \qed

\section{BN after ReLU}\label{sec:bn_after_relu}
Despite the common practice for BN, i.e., placing BN layers before activation functions, some recent studies show that applying BN after activation function might be more effective \citep{bn_after_relu}. Thus, we analyze the effects of BN when it is employed after ReLU activations.

After applying the scaling in Lemma \ref{lemma:scaling_overview}, the primal training problem is as follows
\begin{align}\label{eq:twolayer_bnafter_primal_scaled}
    \min_{\theta \in \Theta_s}  \frac{1}{2}\norm{ \sum_{j=1}^{m_1} \bn{\relu{\data\weightl{1}_j}}\weightscalarl{2}_j-\labelvec}^2+ \beta \|\weightl{2}\|_1.
\end{align}
In the sequel, we repeat the same analysis in Section \ref{sec:twolayer}.

\begin{theo} \label{theo:twolayer_bnafter_convexprogram}
For any data matrix $\data$, the non-convex training problem in \eqref{eq:twolayer_bnafter_primal_scaled} can be equivalently stated as the following finite-dimensional convex program
\begin{align}\label{eq:twolayer_bnafter_convexprogram}
    &\min_{\vec{s}_i,\vec{s}_i^\prime }\frac{1}{2}\left\|\sum_{i=1}^P  \vec{U}_i^\prime(\vec{s}_i-\vec{s}_i^\prime)-\labelvec \right\|_2^2 +\beta \sum_{i=1}^P (\|\vec{s}_i\|_2+\|\vec{s}_i^\prime\|_2) \text{ s.t.}
    \begin{split}
    & (2\diag_i -\vec{I}_n)\data \vec{V}_i \bm{\Sigma}_i^{-1}\vec{s}_{i,1:r_i-1}  \geq 0\\
    &(2\diag_i -\vec{I}_n)\data \vec{V}_i \bm{\Sigma}_i^{-1}\vec{s}_{i,1:r_i-1}^\prime \geq 0
    \end{split},\forall i
\end{align}
provided that $(\vec{I}_n -\frac{1}{n}\vec{1}\vec{1}^T)\diag_i\data:= \vec{U}_i\bm{\Sigma}_i\vec{V}_i^T$, $\vec{U}_i^\prime:=\begin{bmatrix}\vec{U}_i &\frac{1}{\sqrt{n}}\vec{1} \end{bmatrix}$, $r_i:=\mathrm{rank}((\vec{I}_n -\frac{1}{n}\vec{1}\vec{1}^T)\diag_i\data)$, and $\vec{s}_{i,1:r_i-1}$ denotes all the entries of $\vec{s}_i$ except the last one.
\end{theo}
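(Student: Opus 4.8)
The plan is to transcribe the proof of Theorem~\ref{theo:twolayer_convexprogram} almost verbatim; the only new feature is that BN now acts on $\relu{\data\weightl{1}_j}$ rather than on $\data\weightl{1}_j$, which makes the whitening matrix \emph{arrangement-dependent}. First I would invoke the argument of \citet{pilanci2020neural} that strong duality holds for \eqref{eq:twolayer_bnafter_primal_scaled} once $m_1$ is large enough, and write the dual with respect to $\weightl{2}$: $\max_{\dual}-\tfrac12\norm{\dual-\labelvec}^2+\tfrac12\norm{\labelvec}^2$ subject to $\max_{\theta\in\Theta_s}|\dual^\top\bn{\relu{\data\weightl{1}}}|\le\beta$, exactly as in \eqref{eq:twolayer_dual} but with $\relu{\bn{\cdot}}$ replaced by $\bn{\relu{\cdot}}$.

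Next I would reparametrize the dual constraint arrangement by arrangement. On the cone $(2\diag_i-\vec{I}_n)\data\weightl{1}\ge0$ we have $\relu{\data\weightl{1}}=\diag_i\data\weightl{1}$, and applying BN to this vector together with the SVD $(\vec{I}_n-\tfrac1n\vec1\vec1^\top)\diag_i\data=\vec{U}_i\bm{\Sigma}_i\vec{V}_i^\top$ shows that the normalized, centered, shifted activation equals $\vec{U}_i'\vec{s}'$ for a unit vector $\vec{s}'=\begin{bmatrix}\bnvar\vec{q}^\top&\bnmean\end{bmatrix}^\top$, where $\vec{q}$ is the unit vector proportional to $\bm{\Sigma}_i\vec{V}_i^\top\weightl{1}$ and $\bnvar^2+\bnmean^2=1$. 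Choosing the canonical representative $\weightl{1}=\vec{V}_i\bm{\Sigma}_i^{-1}\vec{q}$, admissibility of arrangement $i$ becomes a linear constraint on the $\vec{U}_i$-block of $\vec{s}'$, namely $(2\diag_i-\vec{I}_n)\data\vec{V}_i\bm{\Sigma}_i^{-1}\vec{s}'_{1:r_i-1}\ge0$ (the sub-vector of $\vec{s}'$ consisting of all coordinates except the last, which carries $\bnmean$). Hence the dual constraint is equivalent to: for every $i\in[P]$, $\max_{\vec{s}'}|\dual^\top\vec{U}_i'\vec{s}'|\le\beta$ over unit $\vec{s}'$ obeying that cone condition.

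From here the manipulation is the same as in the proof of Theorem~\ref{theo:twolayer_convexprogram}: dualize each cone constraint with a multiplier $\bm{\rho}_i\ge0$ so that the inner maximum over $\vec{s}'$ turns into a minimum of a Euclidean norm, observe that Slater's condition holds (take $\dual=\bm{\rho}_i=\vec{0}$), pass to the Lagrangian with multipliers $\lambda_i,\lambda_i'\ge0$ on the two sides of the absolute value, introduce unit-ball variables $\vec{r}_i,\vec{r}_i'$, swap the max over $(\dual,\bm{\rho}_i)$ with the min over $(\vec{r}_i,\vec{r}_i')$ via Sion's minimax theorem \citep{sion_minimax}, carry out the maximization over $\dual$ (producing the least-squares term) and over $\bm{\rho}_i$ (reinstating the cone constraints on $\vec{U}_i'\vec{r}_i$ and $\vec{U}_i'\vec{r}_i'$), and finally substitute $\vec{s}_i=\lambda_i\vec{r}_i$, $\vec{s}_i'=\lambda_i'\vec{r}_i'$ and use $\norm{\vec{s}_i}_2=\lambda_i$ at optimality to collapse everything into the single convex program \eqref{eq:twolayer_bnafter_convexprogram}. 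Reconstructing $(\weightl{1}_j{}^*,\bnvar_j^*,\bnmean_j^*,\weightscalarl{2}_j{}^*)$ from an optimal $\{\vec{s}_i^*,\vec{s}_i'^*\}$ with $\weightl{1}_j{}^*\propto\vec{V}_i\bm{\Sigma}_i^{-1}\vec{s}^*_{i,1:r_i-1}$ and verifying that the non-convex and convex objectives agree then closes strong duality, exactly as in Theorems~\ref{theo:twolayer_closedform} and \ref{theo:twolayer_convexprogram}.

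The main obstacle, absent in the BN-before-ReLU setting, is the arrangement-dependence of the whitening: the matrices $\vec{U}_i,\bm{\Sigma}_i,\vec{V}_i$ and the rank $r_i$ (hence the length of $\vec{s}_i$) all vary with $i$, and $\weightl{1}$ enters the BN output only through its image under $\vec{V}_i^\top$ while arrangement membership depends on all of $\weightl{1}$. One must therefore check carefully that restricting to the canonical weights $\weightl{1}=\vec{V}_i\bm{\Sigma}_i^{-1}\vec{q}$ loses nothing — i.e. that ranging over all $i$ with these representatives recovers every attainable pair of (BN output, feasible arrangement) — and keep the index bookkeeping between the $\bnvar$-scaled block of $\vec{s}_i$ used to rebuild $\weightl{1}$ and the trailing $\bnmean$-entry consistent through the minimax steps; the rest is a routine transcription.
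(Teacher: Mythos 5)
Your proposal follows the paper's proof essentially verbatim: the paper likewise splits the dual constraint over hyperplane arrangements, uses $\relu{\data\weightl{1}}=\diag_i\data\weightl{1}$ on the cone $(2\diag_i-\vec{I}_n)\data\weightl{1}\geq 0$, reparametrizes via the arrangement-dependent SVD $(\vec{I}_n-\frac{1}{n}\vec{1}\vec{1}^T)\diag_i\data=\vec{U}_i\bm{\Sigma}_i\vec{V}_i^T$ with $\vec{q}_i=\bm{\Sigma}_i\vec{V}_i^T\weightl{1}$ to reduce the constraint to $\max_{\|\vec{s}\|_2=1}|\dual^T\vec{U}_i^\prime\vec{s}|$ over the cone $(2\diag_i-\vec{I}_n)\data\vec{V}_i\bm{\Sigma}_i^{-1}\vec{s}_{1:r_i-1}\geq 0$, and then invokes the Lagrangian/Sion's-minimax machinery of Theorem \ref{theo:twolayer_convexprogram} exactly as you describe. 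The only quibble is your phrasing that the multipliers $\bm{\rho}_i$ reinstate cone constraints "on $\vec{U}_i^\prime\vec{r}_i$" rather than on $\data\vec{V}_i\bm{\Sigma}_i^{-1}\vec{r}_{i,1:r_i-1}$, but you state the correct constraint elsewhere, so this is immaterial.
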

\begin{theo} \label{theo:twolayer_bnafter_closedform}
Suppose $n \leq d$ and $\data$ is full row-rank, then an optimal solution to \eqref{eq:twolayer_bnafter_primal_scaled} is given as
\begin{align*}
  & \left({\weightl{1}}^* ,{\weightscalarl{2}}^*\right)=  
   \left( \data^\dagger (\labelvec-\min_i\labelscalar_i), \relu{\| \labelvec\|_2-\beta }  \right),\;
  \begin{bmatrix} {\bnvarl{1}}^* \\ {\bnmeanl{1}}^*\end{bmatrix} =  \frac{1}{\|\labelvec\|_2}\begin{bmatrix} \|\labelvec-\frac{1}{n}\vec{1}\vec{1}^T \labelvec\|_2 \\ \frac{1}{{\sqrt{n}} }\vec{1}^T\labelvec \end{bmatrix} .
\end{align*}
\end{theo}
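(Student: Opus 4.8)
The plan is to mirror the proof of Theorem~\ref{theo:twolayer_closedform}. Starting from the scaled primal \eqref{eq:twolayer_bnafter_primal_scaled}, I would take the dual with respect to the output weights $\weightl{2}$ following the computation in Appendix~\ref{sec:dual_derivation} verbatim, but with $\relu{\data\weightl{1}}$ in place of $\actl{L-2}\weightl{L-1}$ inside $\bn{\cdot}$. This yields the (weak) dual $d^* = \max_{\dual}\,\{-\tfrac12\|\dual-\labelvec\|_2^2+\tfrac12\|\labelvec\|_2^2\}$ subject to $\max_{\theta\in\Theta_s}\big|\dual^{\top}\bn{\relu{\data\weightl{1}}}\big|\le\beta$, where $\Theta_s=\{\theta:(\bnvarl{1})^2+(\bnmeanl{1})^2=1\}$. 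I would then (i) collapse the dual constraint to a plain Euclidean-ball constraint, (ii) solve the resulting projection, (iii) read off the active extreme point to reconstruct a primal candidate, and (iv) verify strong duality by matching objective values.

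The key step is (i). Because $n\le d$ and $\data$ has full row rank, $\data\data^{\dagger}=\vec{I}_n$, so $\data\weightl{1}$ ranges over all of $\mathbb{R}^n$ and hence $\{\relu{\data\weightl{1}}:\weightl{1}\in\mathbb{R}^d\}$ is exactly the nonnegative orthant $\mathbb{R}^n_{\ge 0}$. For a nonconstant $\vec{p}\ge 0$, BN-after-ReLU produces $\tfrac{(\vec{I}_n-\tfrac1n\vec{1}\vec{1}^{\top})\vec{p}}{\|(\vec{I}_n-\tfrac1n\vec{1}\vec{1}^{\top})\vec{p}\|_2}\bnvarl{1}+\tfrac{\vec{1}}{\sqrt n}\bnmeanl{1}$; the essential observation is that the normalized centered part sweeps out \emph{every} unit vector $\vec{u}$ orthogonal to $\vec{1}$, since for any such $\vec{u}$ the shift $\vec{p}=\vec{u}-(\min_i u_i)\vec{1}\in\mathbb{R}^n_{\ge 0}$ is nonconstant and satisfies $(\vec{I}_n-\tfrac1n\vec{1}\vec{1}^{\top})\vec{p}=\vec{u}$. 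Maximizing $\dual^{\top}\!\big(\vec{u}\,\bnvarl{1}+\tfrac{\vec{1}}{\sqrt n}\bnmeanl{1}\big)$ over $(\bnvarl{1})^2+(\bnmeanl{1})^2=1$ and then over unit $\vec{u}\perp\vec{1}$ gives $\sqrt{\|(\vec{I}_n-\tfrac1n\vec{1}\vec{1}^{\top})\dual\|_2^2+(\vec{1}^{\top}\dual)^2/n}=\|\dual\|_2$ by the Pythagorean decomposition of $\dual$ along $\vec{1}$ and $\vec{1}^{\perp}$. Thus the dual constraint is just $\|\dual\|_2\le\beta$.

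The projection $\max_{\|\dual\|_2\le\beta}\{-\tfrac12\|\dual-\labelvec\|_2^2\}$ has the closed-form maximizer $\dual^{*}=\min(\beta,\|\labelvec\|_2)\,\labelvec/\|\labelvec\|_2$, with the constraint active iff $\beta\le\|\labelvec\|_2$; the corresponding maximizing direction in the argument above is $\vec{u}^{*}=(\vec{I}_n-\tfrac1n\vec{1}\vec{1}^{\top})\labelvec/\|(\vec{I}_n-\tfrac1n\vec{1}\vec{1}^{\top})\labelvec\|_2$. Any nonnegative post-ReLU activation whose centered direction is $\vec{u}^{*}$ realizes this extreme point; the choice in the statement is $\relu{\data{\weightl{1}}^{*}}=\labelvec-(\min_i\labelscalar_i)\vec{1}\ge 0$, i.e.\ ${\weightl{1}}^{*}=\data^{\dagger}(\labelvec-\min_i\labelscalar_i)$. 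Picking ${\bnvarl{1}}^{*},{\bnmeanl{1}}^{*}$ as in the statement gives $({\bnvarl{1}}^{*})^2+({\bnmeanl{1}}^{*})^2=1$ (so $\theta^{*}\in\Theta_s$) and makes $\bn{\relu{\data{\weightl{1}}^{*}}}=\labelvec/\|\labelvec\|_2$, so with all other neurons set to zero the primal reduces to $\min_{w}\,\big\{\tfrac12\norm{w\labelvec/\norm{\labelvec}-\labelvec}^2+\beta|w|\big\}$, whose minimizer is $w^{*}=\relu{\|\labelvec\|_2-\beta}$. Plugging this into \eqref{eq:twolayer_bnafter_primal_scaled} gives primal value $\beta\|\labelvec\|_2-\tfrac12\beta^2$ when $\beta\le\|\labelvec\|_2$ and $\tfrac12\|\labelvec\|_2^2$ otherwise, which equals the dual objective at $\dual^{*}$; hence $p^{*}=d^{*}$ and the displayed parameters are optimal.

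I expect the realizability claim of step (i)---that every unit vector orthogonal to $\vec{1}$ arises as a centered-normalized nonnegative post-ReLU activation---to be the main obstacle, as this is where both hypotheses are used and where the BN-after-ReLU geometry genuinely departs from the BN-before-ReLU case (there a single neuron no longer suffices because ReLU cannot produce negative entries, whereas here the affine BN part can, so one active neuron is enough and the result holds for any width $m_1\ge 1$). One should also make sure the BN normalization stays well-defined throughout, which holds because a nonconstant post-ReLU activation is always available.
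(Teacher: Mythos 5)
Your proposal is correct and follows essentially the same route as the paper: reduce the dual constraint to the ball $\|\dual\|_2\le\beta$ by using the full-row-rank hypothesis together with the shift $\vec{p}=\vec{u}-(\min_i u_i)\vec{1}$ to realize every unit vector orthogonal to $\vec{1}$ as a centered-normalized post-ReLU activation, project $\labelvec$ onto that ball, and verify optimality by matching primal and dual objective values (the paper's own proof is a terse pointer to Lemma~\ref{lemma:twolayer_dual} and Theorem~\ref{theo:twolayer_closedform} that does exactly this). If anything, your writeup is more complete, since you explicitly treat the $\beta>\|\labelvec\|_2$ case that the paper's stated $\dual^*=\beta\labelvec/\|\labelvec\|_2$ silently assumes away.
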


These results indicate that applying BN after ReLU leads to a completely different phenomena. Specifically, Theorem \ref{theo:twolayer_bnafter_convexprogram} shows that the hyperplane arrangement matrices $\diag_i\data$ are de-meaned and whitened via SVD, namely $(\vec{I}_n -\frac{1}{n}\vec{1}\vec{1}^T)\diag_i\data= \vec{U}_i\bm{\Sigma}_i\vec{V}_i^T$. This renders whitened features $\vec{U}_i^\prime=\begin{bmatrix}\vec{U}_i &\vec{1}/\sqrt{n} \end{bmatrix}$ appearing in the convex model. This is in contrast with applying BN before the ReLU layer, which directly operates on the data matrix $\data$.

Theorem \ref{theo:twolayer_bnafter_closedform} shows that an optimal solution can be found in closed-form in the high-dimensional regime $n\le d$ when $\data$ is full row-rank, which is typically satisfied in sample-limited scenarios. Surprisingly, a single neuron optimizes the training cost, which corresponds to a single non-zero block in the group $\ell_1$ penalized program \eqref{eq:twolayer_bnafter_convexprogram}.

\subsection{Proof of Theorem \ref{theo:twolayer_bnafter_closedform} }
Applying the approach in Lemma \ref{lemma:twolayer_dual} to the following constraint
\begin{align*}
    \max_{\theta \in \Theta_s } \left \vert \dual^T\frac{(\vec{I}_n -\frac{1}{n}\vec{1}\vec{1}^T)\relu{\data \weightl{1}}}{\|(\vec{I}_n -\frac{1}{n}\vec{1}\vec{1}^T)\relu{\data \weightl{1}}\|_2}\bnvarl{1} +\bnmeanl{1} \dual^T\frac{\vec{1}}{\sqrt{n}} \right \vert 
\end{align*}
yields 
\begin{align*}
    \dual^*=\beta\frac{\labelvec}{\norm{\labelvec}}
\end{align*}
Then, the rest of proof is to verify the solutions by substituting them into the primal and dual problems as in Theorem \ref{theo:twolayer_closedform}.
\hfill \qed

\subsection{Proof of Theorem \ref{theo:twolayer_bnafter_convexprogram} }
We first focus on the dual constraint as follows
\begin{align*}
    &\max_{\substack{\weightl{1}\\ {\bnvarl{1}}^2+{\bnmeanl{1}}^2=1}} \left \vert \frac{\dual^T(\vec{I}_n -\frac{1}{n}\vec{1}\vec{1}^T)\relu{\data \weightl{1}}}{\|(\vec{I}_n -\frac{1}{n}\vec{1}\vec{1}^T)\relu{\data \weightl{1}}\|_2}\bnvarl{1} +\bnmeanl{1} \frac{\dual^T\vec{1}}{\sqrt{n}} \right \vert \\
    &= \max_{i \in [P]}\max_{\substack{\theta \in \Theta_s\\ (2\diag_i-\vec{I}_n)\data\weightl{1}\geq 0}} \left \vert \frac{\dual^T(\vec{I}_n -\frac{1}{n}\vec{1}\vec{1}^T)\diag_i\data \weightl{1}}{\|(\vec{I}_n -\frac{1}{n}\vec{1}\vec{1}^T)\diag_i\data \weightl{1}\|_2}\bnvarl{1} +\bnmeanl{1} \frac{\dual^T\vec{1}}{\sqrt{n}} \right \vert \\
    &= \max_{i \in [P]} \max_{\substack{\vec{q}_i,\alpha, \gamma\\ \alpha^2+\gamma^2=1\\ (2\diag_i -\vec{I}_n)\data \vec{V}_i \bm{\Sigma}_i^{-1}\vec{q}_i \geq 0} } \left \vert \dual^T\frac{\vec{U}_i \vec{q}_i}{\|\vec{q}_i\|_2}\gamma +\alpha \dual^T\frac{\vec{1}}{\sqrt{n}} \right \vert\\
    &= \max_{i \in [P]} \max_{\substack{\vec{s}_i\\  \|\vec{s}_i\|_2=1\\ (2\diag_i -\vec{I}_n)\data \vec{V}_i \bm{\Sigma}_i^{-1} \vec{s}_{1:r_i-1}  \geq 0} } \left \vert \dual^T \underbrace{\begin{bmatrix}\vec{U}_i   & \frac{\vec{1}}{\sqrt{n}}\end{bmatrix}}_{\vec{U}_i^\prime}\vec{s} \right \vert.
\end{align*}
Then, the rest of the proof directly follow from Theorem \ref{theo:twolayer_convexprogram}.
\hfill \qed

\end{document}